
\documentclass{article}

\usepackage{microtype}
\usepackage{graphicx}
\usepackage{subfig}
\usepackage{booktabs} 

\usepackage{hyperref}

\usepackage{chngcntr}
\usepackage{apptools}
\AtAppendix{\counterwithin{lemma}{section}}

\usepackage{url}  
\usepackage{graphicx}  
\usepackage{amsmath,amsfonts}
\usepackage{bm}
\usepackage{mathtools}
\usepackage{amssymb}
\usepackage{enumitem}

\DeclareMathOperator*{\argmax}{\arg\max}

\frenchspacing  
\setlength{\pdfpagewidth}{8.5in}  
\setlength{\pdfpageheight}{11in}  

\newtheorem{lemma}{Lemma}

\newtheorem{remark}{Remark}
\newtheorem{theorem}{Theorem}
\newcommand{\xl}[1]{\textcolor{black}{#1}}
\newcommand{\pch}[1]{\textcolor{black}{#1}}


\usepackage[accepted]{icml2019}

\icmltitlerunning{Stay With Me: Lifetime Maximization Through Heteroscedastic Linear Bandits With Reneging}

\makeatletter
\DeclareRobustCommand{\qed}{%
  \ifmmode 
  \else \leavevmode\unskip\penalty9999 \hbox{}\nobreak\hfill
  \fi
  \quad\hbox{\qedsymbol}}
\newcommand{\openbox}{\leavevmode
  \hbox to.77778em{%
  \hfil\vrule
  \vbox to.675em{\hrule width.6em\vfil\hrule}%
  \vrule\hfil}}
\newcommand{\qedsymbol}{\openbox}
\makeatletter
\newcommand{\printfnsymbol}[1]{%
  \textsuperscript{\@fnsymbol{#1}}%
}

\newenvironment{proof}[1][\proofname]{\par
  \normalfont
  \topsep6\p@\@plus6\p@ \trivlist
  \item[\hskip\labelsep\itshape
    #1.]\ignorespaces
}{%
  \qed\endtrivlist
}
\newcommand*\xbar[1]{%
   \hbox{%
     \vbox{%
       \hrule height 0.5pt 
       \kern0.5ex
       \hbox{%
         \kern-0.1em
         \ensuremath{#1}%
         \kern-0.1em
       }%
     }%
   }%
}
\newcount\colveccount
\newcommand*\colvec[1]{
        \global\colveccount#1
        \begin{pmatrix}
        \colvecnext
}
\def\colvecnext#1{
        #1
        \global\advance\colveccount-1
        \ifnum\colveccount>0
                \\
                \expandafter\colvecnext
        \else
                \end{pmatrix}
        \fi
}
\newcommand{\proofname}{Proof}
\newcommand\norm[1]{\left\lVert#1\right\rVert}
\DeclarePairedDelimiter\abs{\lvert}{\rvert}%
\DeclarePairedDelimiterX{\bkt}[1]{(}{)}{ #1}
\DeclarePairedDelimiterX{\sbkt}[1]{[}{]}{ #1}
\newcommand\inv[1]{#1\raisebox{1.15ex}{$\scriptscriptstyle-\!1$}}

\begin{document}

\twocolumn[
\icmltitle{Stay With Me: Lifetime Maximization Through \\ Heteroscedastic Linear Bandits With Reneging}



\icmlsetsymbol{equal}{*}

\begin{icmlauthorlist}
\icmlauthor{Ping-Chun Hsieh}{equal,to}
\icmlauthor{Xi Liu}{equal,to}
\icmlauthor{Anirban Bhattacharya}{goo}
\icmlauthor{P. R. Kumar}{to}
\end{icmlauthorlist}

\icmlaffiliation{to}{Department of Electrical and Computer Engineering, Texas A\&M University, College Station, USA}
\icmlaffiliation{goo}{Department of Statistics, Texas A\&M University, College Station, USA}

\icmlcorrespondingauthor{Ping-Chun Hsieh}{pingchun.hsieh@ tamu.edu}
\icmlcorrespondingauthor{Xi Liu}{xiliu.tamu@gmail.com}

\icmlkeywords{Machine Learning, ICML}

\vskip 0.3in
]



\printAffiliationsAndNotice{\icmlEqualContribution} 

\begin{abstract}
Sequential decision making for lifetime maximization is a critical problem in many real-world applications, such as medical treatment and portfolio selection.
In these applications, a ``reneging'' phenomenon, where participants may disengage from future interactions after observing an unsatisfiable outcome, is rather prevalent.
To address the above issue, this paper proposes a model of heteroscedastic linear bandits with reneging, which allows each participant to have a distinct ``satisfaction level," with any interaction outcome falling short of that level resulting in that participant reneging. 
Moreover, it allows the variance of the outcome to be context-dependent. 
Based on this model, we develop a UCB-type policy, namely HR-UCB, and prove that it achieves $\mathcal{O}\big(\sqrt{{T}(\log({T}))^{3}}\big)$ regret. Finally, we validate the performance of HR-UCB via simulations. 
\end{abstract}
\vspace{-5mm}

\section{Introduction}\label{section:intro}

Sequential decision problems commonly arise in a large number of real-world applications. To name a few, in treatment to extend the life of people with terminal illnesses, doctors are required to make decisions on which treatments are used for patients periodically. In portfolio selection, fund managers need to decide which portfolios are recommended to their customers every time. In cloud computing services, the cloud platform has to determine the resources allocated to customers given specific requirements of their programs. Multi-armed Bandits (MAB) \cite{auer2002finite} and one of its most famous variants ``contextual bandits'' \cite{abbasi2011improved} have been extensively used to model such problems. In the modeling, available choices are referred to as ``arms'' and a decision is regarded as a ``pull'' of the corresponding arm. The decision is evaluated through rewards that depend on the goal of the interaction.

In the aforementioned applications and services, a phenomenon that participants may disengage from future interactions commonly exist. Such behavior is referred to as ``churn'', ``unsubscribe'' or ``reneging'' in literature \cite{liu2018semi}. For instance, patients fail to survive the illnesses or are unable to take more treatments due to the deterioration of physical condition \cite{mchugh2015extending}. In portfolio selection, fund managers earn money from customer enrollment of the selection service. The return of the selection may turn out to be loss and thus the customer loses trust to the manager and stops using the service \cite{huo2017risk}. Similarly, in the cloud computing services, the customer may feel the resource is not well allocated and leads to an unsatisfied throughput, thus switching to another service provider \cite{Ding2013}. In other words, the participant \footnote{For simplicity, in this paper, we use the terms \emph{participant}, \emph{user}, \emph{customer}, and \emph{patients} interchangeably.} of the interaction has a ``lifetime'' that can be defined as the total number interactions between the participant and a service provider until reneging. The larger the number is, the ``longer'' participant stay with the provider. Customer lifetime has been recognized as a critical metric to evaluate the success of many applications such as all above application as well as the e-commerce \cite{Theocharous2015}. Moreover, as well known, the acquisition cost for a new customer is much higher than an existing customer \cite{liu2018semi}. Therefore, within the applications and services, one particular vital goal is to maximize the lifetime of customers. Our focus in this paper is to learn an optimal decision policy that maximizes the lifetime of participants in interactions.

We consider reneging behavior based on two observations. First, in all above scenarios, the decision maker is usually able to observe the outcome of following their suggestion, e.g., the physical condition of the patients after the treatment, the money earned from purchasing the suggested portfolio in the account, and the throughput rate of running the programs. Second, we observe that the participants in those applications are willing to reveal their satisfaction level to the outcome of the suggestion.  For instance, patients will let doctors know their expectations to the treatment in physician visits. Customers are willing to inform fund managers how much money they can afford to lose. Cloud users share with the service providers their requirements of throughput performance. We consider that the outcome of following the suggestion is a random variable drawn from an unknown distribution that may vary under different contexts. If the outcome falls below the satisfaction level, the customer quits all future interactions; otherwise, the customer stays. That being said, the reneging risk is the chance that the outcome drawn from an unknown distribution falls below some threshold. Thus, learning the unknown outcome distribution plays a critical role in optimal decision making.

Learning the outcome distribution of following the suggestion can be highly challenging due to ``heteroscedasticity'', which means the variability of the outcome varies across the range of predictors. Many previous studies of the aforementioned applications have pointed out that the distribution of the outcome can be heteroscedastic. In treatment to a patient, it has been found the physical condition after treatment can be highly heteroscedastic \cite{towse2015understanding,buzaianu2018two}. Similarly, in portfolio selection \cite{omari2018currency,ledoit2003improved,jin2018large}, it is even more common that the return of investing a selected portfolio is heteroscedastic. In cloud service, it has been repeatedly observed that the throughput and responses of the server can be highly heteroscedastic \cite{somu2018improved,niu2011understanding,Cheng1999}. In bandits setting, it means that both the mean value and the variance of the outcome depend on ``context'' which represents the decision and the customer. Since the reneging risk is the chance that the outcome is below the satisfaction level, accurately estimating it now requires estimation of both mean and variance. Such property makes it more difficult to learn the distribution.

While MAB and contextual bandits have been successfully applied to many sequential decision problems, they are not directly applicable to the lifetime maximization problem due to two major limitations. First, most of them neglect the phenomenon of reneging that is common in real-world applications. As a result, their objective is to maximize the accumulated rewards collected from endless interactions. As a comparison, our goal is to maximize the total number of interactions where each time of interaction faces some reneging risk. Due to that reason, conventional approaches such as LinUCB \cite{chu2011contextual} will have poor performance in solving the lifetime maximization problem (see Section \ref{section:simulation} for a comparison). Second, previous studies have usually assumed that the underlying distribution involved in the problem is homoscedastic, i.e., its variance is independent of contexts. Unfortunately, this assumption can be easily invalid due to the presence of heteroscedasticity in the motivated examples considered above, e.g., patients' health condition, portfolio return, and throughput rate. 

The line of MAB research that is most relevant to the problem is bandits models with risk management, e.g., variance minimization \cite{sani2012risk} and value-at-risk maximization \cite{szorenyi2015qualitative,asaf2018general,chaudhuriquantile}. However, the risks those studies handle models the large fluctuation of collected rewards and have no impact on the lifetimes of bandits. This makes them unable to be applied to our problem. Another category of relevant research is conservative bandits \cite{kazerouni2017conservative,wu2016conservative}, in which a choice will only be considered if it guarantees that the \emph{overall} performances outperforms $1-\alpha$ of baselines'. Unfortunately, our problem has a higher degree of granularity, i.e., to avoid reneging, \emph{individual} performance (performance of each choice) is above some satisfaction level. Moreover, none of them considers data heteroscedasticity. (A more careful review and comparison are given in Section \ref{section:related})

To overcome all these limitations, we propose a novel model of contextual bandits that addresses the challenges arising from reneging risk and heteroscedasticity in the lifetime maximization problem. We call the model ``heteroscedastic linear bandits with reneging''. 

\noindent \textbf{Contributions.} Our research contributions are as follows:
\vspace{-2mm}

\begin{enumerate}[leftmargin=1pt,itemindent=12pt,itemsep=2pt,parsep=0pt,topsep=0pt,partopsep=0pt]
\item Lifetime maximization is an important problem in many real-world applications but not taken into account in the existing bandit models. We investigate the two characters of the problem in aforementioned applications: reneging risk and willingness to reveal satisfaction level and propose a behavior model of reneging. 

\item In view of the two characters, we formulate a novel bandits model for lifetime maximization under heteroscedasticity. It is based on our model of reneging behavior and is dubbed ``heteroscedastic linear bandits with reneging.''

\item To solve the proposed model, we develop a UCB-type policy, called HR-UCB, that is proved to achieve a \pch{$\mathcal{O}\big(\sqrt{{T}(\log({T}))^{3}}\big)$} regret bound with high probability. We evaluate the HR-UCB via comprehensive simulations. The simulation results demonstrate that our model satisfies our expectation of regret and outperforms conventional UCB that ignores reneging and more complex model such as Episodic Reinforcement Learning (ERL).
\end{enumerate}

\vspace{-2mm}

\section{Related Work} \label{section:related}

There are mainly two lines of research related to our work: bandits with risk management and conservative bandits.

\noindent \textbf{Bandits with Risk Management.} Reneging can be viewed as a type of risk that the decision maker tries to avoid. 
The risk management in bandit problems has been studied in terms of variance and quantiles. 
In \cite{sani2012risk}, mean-variance models to handle risk are studied, where the risk refers to the variability of collected rewards. The difference from conventional bandits is that the objective to be maximized is a linear combination of mean reward and variance. Subsequent studies \cite{szorenyi2015qualitative,asaf2018general} propose a quantile (value at risk) to replace the mean-variance objective. While these studies investigate optimal policies under risk, the risks they handle are different from ours, in the sense that the risks usually relate to variability of rewards and have no impact on the lifetime of bandits. Moreover, their approaches to handle the risk are based on more straightforward statistics, while, in our problem, the reneging risk is relatively complex, i.e., it comes from the probability that the outcome of following a suggestion is below a satisfaction level. Therefore, their models cannot be used to solve our problem.

\noindent \textbf{Conservative Bandits.}
In contrast to those works, \emph{conservative bandits} \cite{kazerouni2017conservative,wu2016conservative} control the risk by requiring that the accumulated rewards while learning the optimal policy be above those of baselines. Similarly, in \cite{sun2017safety}, each arm is associated with some risk; safety is guaranteed by requiring the accumulated risk to be below a given budget. 
Unfortunately, our problem has a higher degree of granularity. The participants in our problem are more sensitive to bad suggestions. One time of bad decision may incur reneging and brings the interactions to an end, e.g., one bad treatment makes a patient die. Moreover, their models assume homoscedasticity, while we allow the variance to depend on the context.

The satisfaction level in our model has the flavor of thresholding bandits. Different from us, the thresholds in the existing literature are mostly used to model reward generation. For instance, in \cite{abernethy2016threshold}, an action induces a unit payoff if the sampled outcome exceeds a threshold. In \cite{jain2018firing}, no rewards can be collected until the total number of successes exceeds the threshold. 

In terms of the problem in this paper, the most relevant one that has been studied is in \cite{schmit2018learning}. Compared to it, our paper has three salient differences. First, it has a very different setting of reneging modeling: each decision is represented by a real number; reneging happens as long as the pulled arm exceeds a threshold. As a comparison, we represent each decision by a high-dimensional context vector; reneging happens if the outcome of following a suggestion is not satisfying. Second, it couples the reneging with the reward generation. The ``rewards'' in our modeling can be regarded as the lifetime while the reneging is separately captured by the outcome distribution. Third, it fails to take into account the data heteroscedasticity in the aforementioned applications. 
By contrast, we investigate the impacts of that and our model well addresses it.

In terms of bandits under heteroscedasticity, to the best of our knowledge, only one very recent paper discusses that \cite{kirschner2018information}. 
Compared to it, our paper has two salient differences. 
First, we address heteroscedasticity under the presence of reneging. The presence of reneging makes the learning problem more challenging as the learner has to always be prepared that plans for the future may not be carried out. Second, the solution in \cite{kirschner2018information} is based on information directed sampling. In contrast, in this paper, we present a heteroscedastic UCB policy that is efficient, easier to implement, and can achieve sub-linear regret. \xl{The reneging problem can also be approximated by an infinite-horizon ERL problem~\cite{modi2018markov,hallak2015contextual}. Compared to it, our solution has two distinct features: (a) the reneging behavior and heteroscedasticity are explicitly addressed in our model, (b) the context information is leveraged in learning policy design.}

\section{Problem Formulation} \label{section:problem}
In this section, we describe the formulation of the heteroscedastic linear bandits with reneging.
To incorporate reneging behavior into the bandit model, we address the problem in the following stylized manner:
The users arrive at the decision maker one after another and are indexed by $t=1,2,\cdots$.
For each user $t$, the decision maker interacts with the user in discrete \emph{rounds} by selecting one action in each round sequentially until the user $t$ reneges on interacting with the decision maker. 
Let $s_t$ denote the total number of rounds experienced by the user $t$.
Note that $s_t$ is a stopping time which depends on the reneging mechanism that will be described shortly. 
Since the decision maker interacts with one user at a time, all the actions and the corresponding outcomes regarding user $t$ are determined and observed, before the next user $t+1$ arrives.

Let $\mathcal{A}$ be the set of available actions of the decision maker.
Upon the arrival of each user $t$, the decision maker observes a set of \emph{contexts} $\mathcal{X}_{t}=\{x_{t,a}\}_{a\in \mathcal{A}}$, where each context $x_{t,a}\in\mathcal{X}_{t}$ summarizes the pair-wise relationship\footnote{For example, in recommender systems, one way to construct a such pair-wise context is to concatenate the feature vectors of each individual user and each individual action.} between the user $t$ and the action $a$.
Without loss of generality, we assume that for any user $t$ and any action $a$, we have $\norm{x_{t,a}}_2\leq1$, where $\|\cdot\|_2$ denotes the $\ell_{2}$-norm.
After observing the contexts, the decision maker selects an action $a\in\mathcal{A}$ and observes a random outcome $r_{t,a}$.
We assume that the outcomes $r_{t,a}$ are conditionally independent random variables given the contexts and are drawn from an outcome distribution that satisfies:
\begin{align}
    &r_{t,a} := \theta_{*}^{\top}x_{t,a}  + \varepsilon(x_{t,a})\label{equation: reward distribution 1}\\
    &\varepsilon(x_{t,a}) \sim \mathcal{N}\big(0,\sigma^{2}(x_{t,a})\big) \label{equation: reward distribution 2}\\
    &\sigma^{2}(x_{t,a}) := f(\phi_{*}^{\top}x_{t,a}),\label{equation: reward distribution 3}
\end{align}
where $\mathcal{N}(0,\sigma^{2})$ denotes the Gaussian distribution with zero mean and variance $\sigma^{2}$, and $\theta_*,\phi_*\in\mathbb{R}^{d}$ are unknown, but known to have the norm bounds as $||\theta_*||_2 \leq 1$ and $||\phi_*||_{2}\leq L$. \xl{Although, for simplicity of discussion, here we focus on Gaussian noise, all of our analysis can be extended to sub-Gaussian outcome distribution of the form $\psi_\sigma(x) = (1/\sigma) \psi((x-\mu)/\sigma)$, where $\psi$ is a known sub-Gaussian density with unknown parameters $\mu, \sigma$. This family includes truncated distributions and mixtures, thus allowing multi-modality and skewness.} The parameter vectors $\theta_*\in \mathbb{R}^{d}$ and $\phi_*\in \mathbb{R}^{d}$ will be learned by the decision maker during interactions with the users. 
\pch{The function $f(\cdot):\mathbb{R}\to\mathbb{R}$ is assumed to be a known linear function with a finite positive slope $M_f$ such that $f(z)\geq 0$, for all $z\in[-L,L]$. 
One example that satisfies the above conditions is $f(z)=z+L$. }
Note that the mean and variance of the outcome distribution satisfy
\begin{align}
    \mathbb{E}[r_{t,a}|x_{t,a}] &:= \theta_{*}^{\top}x_{t,a},\\
    \mathbb{V}[r_{t,a}|x_{t,a}] &:= f(\phi_{*}^{\top}x_{t,a}).
\end{align}
Since $\phi_{*}^{\top}x_{t,a}$ is bounded over all possible $\phi_{*}$ and $x_{t,a}$, we know that $f(\phi_{*}^{\top}x_{t,a})$ is also bounded, i.e. $f(\phi_{*}^{\top}x_{t,a})\in[\sigma^{2}_{\min},\sigma^{2}_{\max}]$ for some $\sigma_{\min},\sigma_{\max}>0$, for all $\phi_{*}$ and $x_{t,a}$ defined above.
This also implies that $\varepsilon(x_{t,a})$ is $\sigma_{\max}^{2}$-sub-Gaussian, for all $x_{t,a}$.

The minimal expectation in an interaction of a user is characterized by its \emph{satisfaction level}. 
Let $\beta_{t} \in \mathbb{R}$ denote the satisfaction level of user $t$. 
We assume that satisfaction levels of users, like the pair-wise contexts, are available before interacting with them. 
Denote by $r^{(i)}_{t}$ the observed outcome at round $i$ of user $t$. 
When $r^{(i)}_{t}$ is below $\beta_{t}$, reneging occurs and the user drops out from any future interaction. 
Suppose that at round $i$, action $a$ is selected for user $t$, then the risk that reneging occurs is
\begin{align}\label{equation: probability}
    \mathbb{P}(r^{(i)}_{t} < \beta_{t}|x_{t,a}) = \Phi\Big(\dfrac{\beta_{t}- \theta_{*}^{\top}x_{t,a}}{\sqrt{f(\phi_{*}^{\top}x_{t,a})}}\Big),
\end{align}
where $\Phi(\cdot)$ is the cumulative density function (CDF) for $\mathcal{N}(0,1)$. Without loss of generality, we also assume that ${\beta_{t}}$ is lower bounded by $-B$ for some $B>0$. 
Recall that $s_{t}$ denotes the number of rounds experienced by user $t$. 
Given the reneging behavior modeled above, $s_{t}$ is the stopping time that represents the first time that the outcome $r_{t}^{(i)}$ is below the satisfaction level $\beta_t$, i.e. $s_{t} := \min\{i:r^{(i)}_{t}<\beta_{t}\}$.
Illustrative examples of heteroscedasticity and reneging risk are shown in Figure \ref{fig:illus}. 
In Figure \ref{fig:illus}(a), the variance of the outcome distribution gradually increases as the value of the one-dimensional context $x_{t,a}$ increases. 
Figure \ref{fig:illus}(b) shows the outcome distributions of the two actions for a user.
Specifically, the outcome distribution $P_{1}$ has mean $\mu_{1}$ and variance $\sigma^{2}_{1}$, and mean $\mu_{2}$ and variance $\sigma^{2}_{2}$ for $P_{2}$. 
As the two distributions correspond to the same user (but for different actions), they face the same satisfaction level $\beta$. 
In this example, the reneging risk $P_{2}(r<\beta)$ (the blue shaded area) is higher than $P_{1}(r<\beta)$ (the red shaded area). 

\vspace{-4mm}
\begin{figure}[htbp]
\centering 
$\begin{array}{c c}
    \multicolumn{1}{l}{\mbox{\bf }} & \multicolumn{1}{l}{\mbox{\bf }} \\
    \hspace{-5.0mm}\scalebox{0.23}{\includegraphics[width=\textwidth]{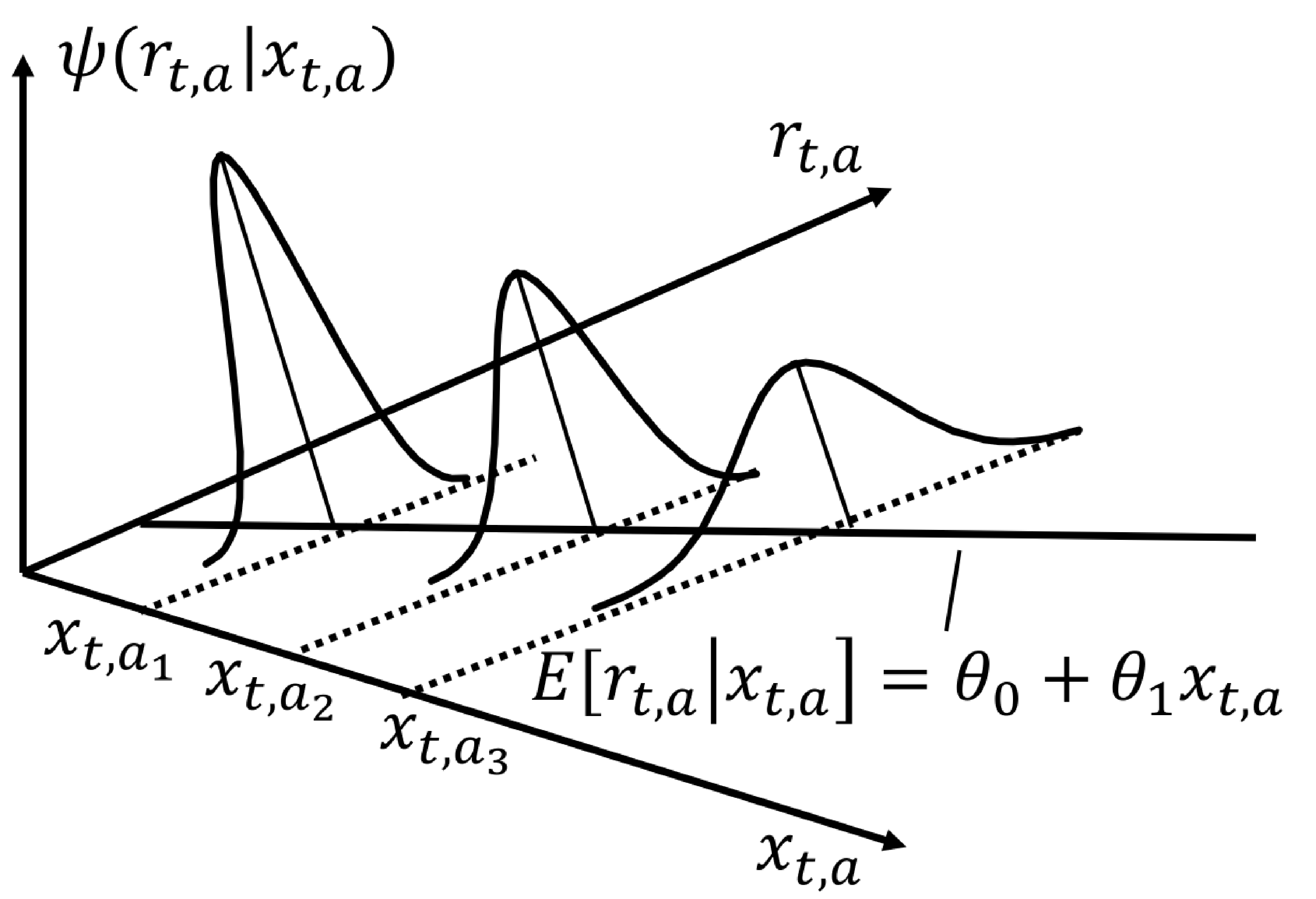}} & \hspace{-5.8mm} \scalebox{0.23}{\includegraphics[width=\textwidth]{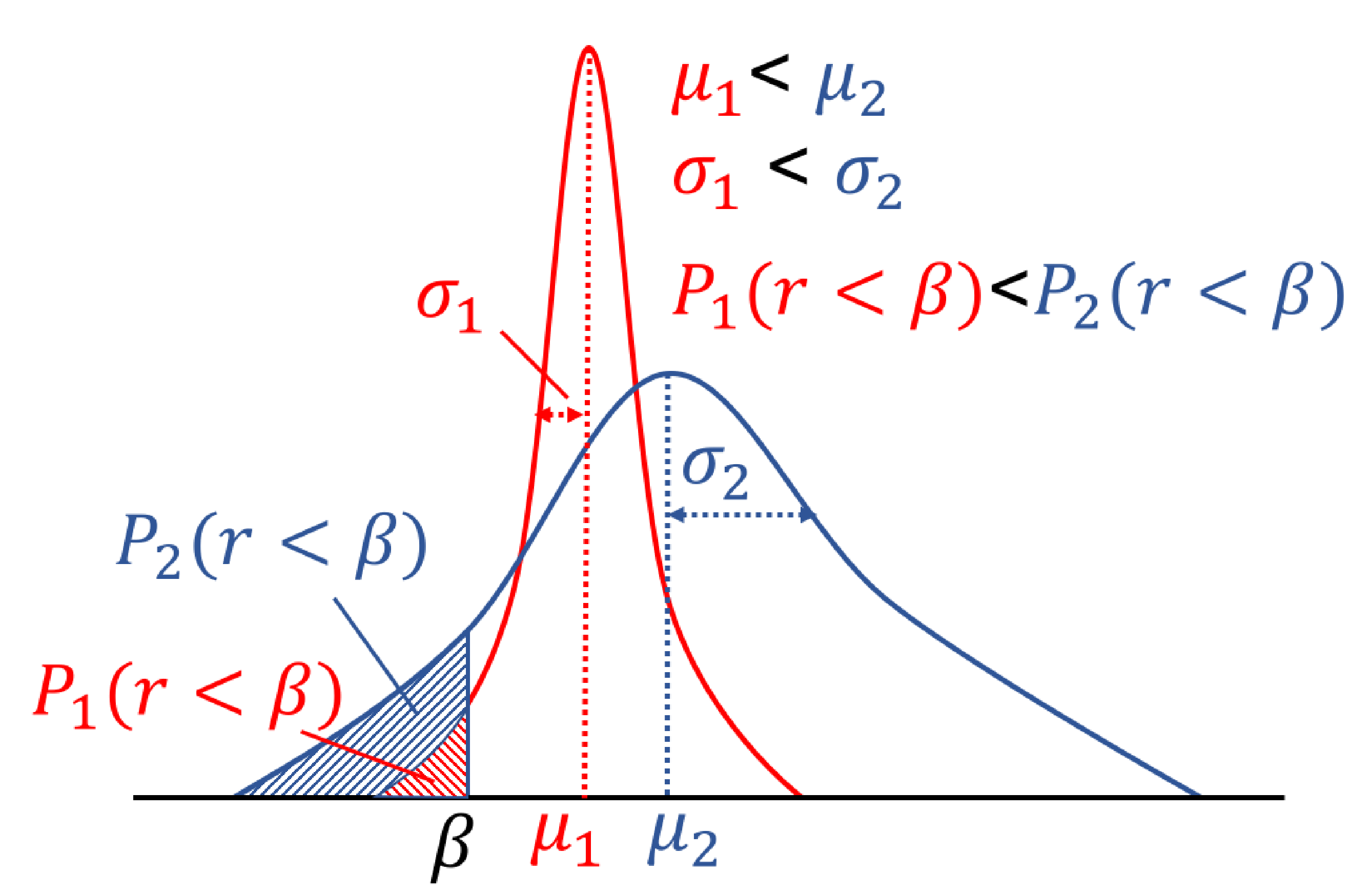}} \\
    \hspace{-2mm}\mbox{\scriptsize (a) Example of heteroscedasticity} & \hspace{-5mm} \mbox{\scriptsize (b) Example of reneging risk} \\ [-0.0cm]
    \end{array}$
\vspace{-2mm}
\caption{Illustrated examples for heteroscedasticity and reneging risk under presence of heteroscedasticity. ($\psi(\cdot)$ is the probability density function.)}
\label{fig:illus}
\vspace{-3mm}
\end{figure}

A policy $\pi \in \Pi$ is a rule for selecting an action at each round for a user based on the preceding interactions with that user and other users, where $\Pi$ denotes the set of all admissible policies. 
Let ${\pi}_{t}=\{{x}_{t,1},{x}_{t,2},\cdots\}$ denote the sequence of contexts that correspond to the actions for user $t$ under policy $\pi$.
Let $\overline{R}^{\pi}_t$ denote the expected lifetime of user $t$ under the action sequence $\pi_t$.
Then the total expected lifetime of $T$ users can be represented by
$\mathcal{R}^{\pi}(T) = \sum_{t=1}^{T}\overline{R}^{\pi_t}_{t}$.
Define $\pi^{*}$ as the optimal policy in terms of total expected lifetime among admissible policies, i.e.
$\pi^{*} = \argmax_{\pi\in\Pi}\mathcal{R}^{\pi}(T)$.
We are ready to define the \emph{pseudo regret} of the heteroscedastic linear bandits with reneging for a policy $\pi$ as
\begin{align}
    \text{Regret}_{T} := \mathcal{R}^{\pi^{*}}(T) -  \mathcal{R}^{\pi}(T).
\end{align}
The objective of the decision maker is to learn a policy that achieves as minimal a regret as possible.

\section{Algorithms and Results}
\label{section:alg}
In this section, we present a UCB-type algorithm for heteroscedastic linear bandits with reneging. 
We start by introducing general results on heteroscedastic regression.
\subsection{Heteroscedastic Regression}
\label{section: alg: regression}
In this section, we consider a general regression problem with heteroscedasticity.
\subsubsection{Generalized Least Squares Estimators}
\label{section: alg: estimator}
With a slight abuse of notation, let $\{(x_i,r_i)\in\mathbb{R}^{d}\times \mathbb{R}\}_{i=1}^{n}$ be a sequence of $n$ pairs of context and outcome that are realized by a user's actions.
Recall from (\ref{equation: reward distribution 1})-(\ref{equation: reward distribution 3}) that $r_{i}=\theta_{*}^{\top}x_{i}+\varepsilon(x_{i})$ and $\varepsilon(x_{i}) \sim \mathcal{N}\big(0,f(\phi_{*}^{\top}x_{i})\big)$ with unknown parameters $\theta_{*}$ and $\phi_{*}$.
Note that, given the contexts $\{x_i\}_{i=1}^{n}$, $\varepsilon(x_1),\cdots,\varepsilon(x_n)$ are mutually independent.
Let $r=(r_1,\cdots,r_n)^{\top}$ and $\varepsilon=(\varepsilon(x_1),\cdots,\varepsilon(x_n))$ be the row vectors of the $n$ outcome realizations and the deviations from the mean, respectively.
Let $\bm{X}_{n}$ be an $n\times d$ matrix in which the $i$-th row is $x_{i}^{\top}$, for all $1\leq i\leq n$.
We use $\widehat{{\theta}}_n, \widehat{{\phi}}_n\in\mathbb{R}^{d}$ to denote the estimators of $\theta_{*}$ and $\phi_{*}$ based on the observations $\{(x_i,r_i)\}_{i=1}^{n}$, respectively.
Moreover, define the estimated residual with respect to $\widehat{\theta}_{n}$ as
$\widehat{\varepsilon}(x_{i})={r_i}-{\widehat{\theta}_{n}}^{\top}x_{i}$.
Let $\widehat{\varepsilon}=(\widehat{\varepsilon}(x_{1}),\cdots,\widehat{\varepsilon}(x_{n}))^{\top}$.
Let $\bm{I}_d$ denote the $d\times d$ identity matrix, and let $z_1\circ z_2$ denote the Hadamard product of any two vectors $z_1,z_2$.
We consider the \emph{generalized least squares estimators} (GLSE) \cite{wooldridge2015introductory} as
\begin{align}
    \widehat{{\theta}}_n&=\big(\bm{X}^{\top}_{n}{\bm{X}_{n}}+\lambda \bm{I}_{d}\big)^{-1}\bm{X}_{n}^{\top}{r},\label{equation:theta estimator}\\
    \widehat{{\phi}}_n&=\big(\bm{X}^{\top}_{n}{\bm{X}_{n}}+\lambda \bm{I}_{d}\big)^{-1}\bm{X}^{\top}_{n}{f}^{-1}(\widehat{\varepsilon}\circ \widehat{\varepsilon}),\label{equation:phi estimator}
\end{align}
where $\lambda>0$ is some regularization parameter and $f^{-1}(\widehat{\varepsilon}\circ \widehat{\varepsilon})=(f^{-1}(\widehat{\varepsilon}(x_1)^{2}),\cdots,f^{-1}(\widehat{\varepsilon}(x_n)^{2}))^{\top}$ is the pre-image of the vector $\widehat{\varepsilon}\circ \widehat{\varepsilon}$.
\vspace{-3mm}

\begin{remark}
\normalfont Note that in (\ref{equation:theta estimator}), $\widehat{{\theta}}_n$ is the conventional ridge regression estimator.
On the other hand, to obtain an estimator $\widehat{{\phi}}_n$, (\ref{equation:phi estimator}) still follows the ridge regression approach, but with two additional steps: (i) derive the estimated \pch{residual} $\widehat{\varepsilon}$ based on $\widehat{\theta}_{n}$, and (ii) apply the map $f^{-1}(\cdot)$ on the square of $\widehat{\varepsilon}$.
\pch{Conventionally, GLSE is utilized to improve the efficiency of estimating $\theta_{*}$ under heteroscedasticity (e.g. Chapter 8.2 of \cite{wooldridge2015introductory}).
In our problem, we use GLSE to jointly learn $\theta_{*}$ and $\phi_{*}$ and thereby establish regret guarantees.}
However, it is not immediately clear how to obtain finite-time results regarding the confidence set for $\widehat{{\phi}}_n$.
This issue will be addressed in Section \ref{section: alg: confidence set}.
\end{remark}
\vspace{-1mm}

\subsubsection{Confidence Sets for GLSE}
\label{section: alg: confidence set}
In this section, we discuss the confidence sets for the estimators $\widehat{\theta}_{n}$ and $\widehat{\phi}_{n}$ described above.
To simplify notation, we define a $d\times d$ matrix $ {\bm{V}}_{n}$ as
\begin{equation}
     {\bm{V}}_{n} = \big(\bm{X}^{\top}_{n}{\bm{X}_{n}}+\lambda \bm{I}_{d}\big).\label{equation:Vn definition}
\end{equation}
A confidence set for $\widehat{\theta}_{t}$ was introduced in \cite{abbasi2011improved}. 
For convenience, we restate these elegant results in the following lemma.
\begin{lemma}{\normalfont{(Theorem 2 in \cite{abbasi2011improved})}}
\label{lemma:theta confidence}
For all $n\in\mathbb{N}$, define
\begin{equation}
    \alpha_n^{(1)}({\delta})=\sigma_{\max}^{2}\sqrt{d\log\Big(\frac{n+\lambda}{\delta\lambda}\Big)}+\lambda^{1/2}.
\end{equation}
For any $\delta>0$, we have
\begin{align}
    \mathbb{P}\Bigl\{\norm{\widehat{\theta}_n-\theta_{*}}_{ {\bm{V}_n}}\leq \alpha_n^{(1)}(\delta),\forall n\in\mathbb{N}\Bigr\}\geq 1-\delta,\label{equation:theta confidence set}
\end{align}
where $\norm{x}_{ {\bm{V}_n}}={\sqrt{x^{\top} {\bm{V}_n}x}}$ is the induced vector norm of vector $x$ with respect to $ {\bm{V}_n}$. 
\end{lemma}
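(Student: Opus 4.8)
Since the statement is quoted verbatim as Theorem 2 of \cite{abbasi2011improved}, one option is simply to invoke that reference; but a self-contained argument proceeds through the self-normalized confidence ellipsoid for ridge regression, which I would reconstruct via the method of mixtures. The plan is first to separate the regularization bias from the stochastic fluctuation. Writing $r=\bm{X}_n\theta_*+\varepsilon$ and using the ridge identity $\bm{V}_n^{-1}\bm{X}_n^{\top}\bm{X}_n=\bm{I}_d-\lambda\bm{V}_n^{-1}$, the estimator in (\ref{equation:theta estimator}) decomposes as
\[
\widehat{\theta}_n-\theta_* = \bm{V}_n^{-1}\bm{X}_n^{\top}\varepsilon - \lambda\bm{V}_n^{-1}\theta_*.
\]
Taking the $\bm{V}_n$-norm and applying the triangle inequality, the bias term is controlled deterministically: because every eigenvalue of $\bm{V}_n$ is at least $\lambda$, we have $\lambda\norm{\bm{V}_n^{-1}\theta_*}_{\bm{V}_n}=\lambda\sqrt{\theta_*^{\top}\bm{V}_n^{-1}\theta_*}\leq\sqrt{\lambda}\norm{\theta_*}_2\leq\lambda^{1/2}$, which is exactly the additive $\lambda^{1/2}$ in $\alpha_n^{(1)}(\delta)$. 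Since $\norm{\bm{V}_n^{-1}\bm{X}_n^{\top}\varepsilon}_{\bm{V}_n}=\norm{\bm{X}_n^{\top}\varepsilon}_{\bm{V}_n^{-1}}$, the whole problem reduces to bounding the self-normalized martingale term $\norm{\bm{X}_n^{\top}\varepsilon}_{\bm{V}_n^{-1}}$ uniformly over all $n$.

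For that term I would use the method of mixtures. Because $\varepsilon(x_i)$ is $\sigma_{\max}^{2}$-sub-Gaussian conditionally on the past filtration $\mathcal{F}_{i-1}$ (with $x_i$ being $\mathcal{F}_{i-1}$-measurable), for each fixed direction $\eta\in\mathbb{R}^{d}$ the process
\[
M_n^{\eta}=\exp\Big(\eta^{\top}\bm{X}_n^{\top}\varepsilon-\tfrac{\sigma_{\max}^{2}}{2}\,\eta^{\top}\bm{X}_n^{\top}\bm{X}_n\,\eta\Big)
\]
is a nonnegative supermartingale with $\mathbb{E}[M_n^{\eta}]\leq1$; this is verified one round at a time using the per-step moment-generating bound $\mathbb{E}[\exp((\eta^{\top}x_i)\varepsilon(x_i))\mid\mathcal{F}_{i-1}]\leq\exp(\tfrac{\sigma_{\max}^{2}}{2}(\eta^{\top}x_i)^{2})$. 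Mixing against the Gaussian prior $\eta\sim\mathcal{N}(0,(\lambda\sigma_{\max}^{2})^{-1}\bm{I}_d)$ and interchanging expectation with integration (Fubini, justified by nonnegativity) yields another nonnegative supermartingale $M_n=\int_{\mathbb{R}^{d}}M_n^{\eta}\,h(\eta)\,d\eta$ with $M_0=1$. Completing the square evaluates the Gaussian integral in closed form:
\[
M_n=\Big(\frac{\det(\lambda\bm{I}_d)}{\det\bm{V}_n}\Big)^{1/2}\exp\Big(\frac{1}{2\sigma_{\max}^{2}}\norm{\bm{X}_n^{\top}\varepsilon}_{\bm{V}_n^{-1}}^{2}\Big).
\]

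The uniform-in-$n$ conclusion then follows from a single maximal inequality rather than a union bound. Applying Ville's inequality to the nonnegative supermartingale $(M_n)$, the event $\{\sup_n M_n\geq1/\delta\}$ has probability at most $\delta$; on its complement, rearranging the closed form above gives, simultaneously for all $n\in\mathbb{N}$,
\[
\norm{\bm{X}_n^{\top}\varepsilon}_{\bm{V}_n^{-1}}^{2}\leq\sigma_{\max}^{2}\log\Big(\frac{\det\bm{V}_n}{\delta^{2}\det(\lambda\bm{I}_d)}\Big).
\]
Finally I would bound the determinant: since $\norm{x_i}_2\leq1$, $\operatorname{trace}(\bm{X}_n^{\top}\bm{X}_n)\leq n$, so the AM--GM inequality on the eigenvalues of $\bm{V}_n$ yields $\det\bm{V}_n\leq(\lambda+n/d)^{d}\leq(\lambda+n)^{d}$, whence $\det\bm{V}_n/\det(\lambda\bm{I}_d)\leq((\lambda+n)/\lambda)^{d}$. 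Substituting this and recombining with the deterministic $\lambda^{1/2}$ bias reproduces $\alpha_n^{(1)}(\delta)$ up to the paper's choice of constants.

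I expect the main obstacle to be the mixture step. The delicate points are verifying that $M_n^{\eta}$ is genuinely a supermartingale under heteroscedastic, data-dependent noise (the sub-Gaussian parameter must be taken as the uniform worst case $\sigma_{\max}^{2}$ rather than the true $\sigma^{2}(x_i)$), justifying the Fubini interchange, and invoking Ville's inequality so that the confidence statement holds simultaneously for all $n$ instead of merely for each fixed $n$. Everything else---the ridge bias decomposition and the determinant--trace estimate---is routine.
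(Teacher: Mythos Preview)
Your proposal is correct and is precisely the method-of-mixtures argument of Abbasi-Yadkori, P\'al, and Szepesv\'ari that the paper cites; the paper itself offers no independent proof of this lemma, so there is nothing further to compare. One minor note: your derivation produces $\sigma_{\max}\sqrt{d\log(\cdot)}$ rather than the $\sigma_{\max}^{2}$ printed in the statement, which is consistent with the cited theorem and suggests the extra square in the paper's constant is a typo---your caveat ``up to the paper's choice of constants'' handles this appropriately.
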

\vspace{-2mm}

Next, we derive the confidence set for $\widehat{\phi}_{n}$.
Define 
\begin{align}
    \alpha^{(2)}(\delta)&=\sqrt{2d\pch{(\sigma_{\max}^{2})^{2}}\Big( \Big({\dfrac{1}{C_2}\ln({\dfrac{C_1}{\delta}}})\Big)^{2}+1\Big)},\\
    \alpha^{(3)}(\delta)&=\sqrt{2d\pch{\sigma_{\max}^{2}}\ln \big(\frac{d}{\delta}\big)},
\end{align}
where $C_1$ and $C_2$ are some universal constants that will be described in Lemma \ref{lemma:upper bound for first error term}.
The following is the main theorem on the confidence set for $\widehat{\phi}_{n}$.
\begin{theorem}
\label{theorem:phi confidence}
For all $n\in\mathbb{N}$, define
\begin{align}
    \rho_n({\delta})=\pch{\frac{1}{M_{f}}}\Big\{\alpha_{n}^{(1)}(\frac{\delta}{3})&\big(\alpha_{n}^{(1)}(\frac{\delta}{3})+2\alpha^{(3)}(\frac{\delta}{3})\big)\\
    &+\alpha^{(2)}(\frac{\delta}{3})\Big\}+\pch{L^{2}\lambda^{1/2}}.
\end{align}
For any $\delta>0$, with probability at least $1-2\delta$, we have
\begin{align}
    \norm{\widehat{\phi}_n-\phi_{*}}_{ {\bm{V}_n}}\leq \rho_n(\frac{\delta}{n^2})=\mathcal{O}\Big({\log(\frac{1}{\delta})+\log n}\Big), \forall n\in\mathbb{N}.\label{equation:phi confidence set}
\end{align}
\end{theorem}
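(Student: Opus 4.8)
\emph{Proof approach (sketch).} The crucial structural fact is that $f$ is affine, so $f^{-1}$ is affine as well; writing $f(z)=M_f z+c$ gives $f^{-1}(y)=(y-c)/M_f$, and hence for the true noise $\varepsilon_i\sim\mathcal N(0,\sigma^2(x_i))$,
\begin{align*}
\mathbb E\big[f^{-1}(\varepsilon_i^2)\,\big|\,x_i\big]=\frac{\mathbb E[\varepsilon_i^2\mid x_i]-c}{M_f}=f^{-1}\big(f(\phi_*^\top x_i)\big)=\phi_*^\top x_i .
\end{align*}
Thus, were the true residuals available, $\widehat\phi_n$ would be an ordinary ridge estimator of $\phi_*$ whose ``noise'' $\varepsilon_i^2-\sigma^2(x_i)$ is centered but only \emph{sub-exponential} (a recentered squared Gaussian). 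Two complications must be controlled: the estimator uses the \emph{plug-in} residuals $\widehat\varepsilon_i=r_i-\widehat\theta_n^\top x_i$ rather than $\varepsilon_i$, and the ridge penalty introduces bias. I would make both explicit through an exact error decomposition.

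Writing $\Delta_i:=\widehat\varepsilon_i-\varepsilon_i=x_i^\top(\theta_*-\widehat\theta_n)$ and $\bm g=(\sigma^2(x_1),\dots,\sigma^2(x_n))^\top$, the ridge algebra together with $\bm X_n\phi_*=f^{-1}(\bm g)$ yields
\begin{align*}
\widehat\phi_n-\phi_*=\frac{1}{M_f}\bm V_n^{-1}\bm X_n^\top\Big[(\varepsilon\circ\varepsilon-\bm g)+2(\varepsilon\circ\Delta)+(\Delta\circ\Delta)\Big]-\lambda\bm V_n^{-1}\phi_* .
\end{align*}
Taking the $\bm V_n$-norm, using the identity $\|\bm V_n^{-1}\bm X_n^\top v\|_{\bm V_n}=\|\bm X_n^\top v\|_{\bm V_n^{-1}}$ and the triangle inequality, splits the error into exactly the four pieces appearing in $\rho_n$: a ``pure-noise'' term in $\varepsilon\circ\varepsilon-\bm g$, a cross term in $\varepsilon\circ\Delta$, a quadratic term in $\Delta\circ\Delta$, and the penalty bias $\lambda\|\bm V_n^{-1}\phi_*\|_{\bm V_n}$.

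The last three pieces are comparatively routine. The bias satisfies $\lambda\|\bm V_n^{-1}\phi_*\|_{\bm V_n}=\lambda\sqrt{\phi_*^\top\bm V_n^{-1}\phi_*}\le L\lambda^{1/2}\le L^2\lambda^{1/2}$. For the quadratic term, since the regularized hat matrix $\bm X_n\bm V_n^{-1}\bm X_n^\top\preceq\bm I_n$, I get $\|\bm X_n^\top(\Delta\circ\Delta)\|_{\bm V_n^{-1}}\le\|\Delta\circ\Delta\|_2\le\sum_i\Delta_i^2=\|\bm X_n(\theta_*-\widehat\theta_n)\|_2^2\le\|\theta_*-\widehat\theta_n\|_{\bm V_n}^2$, which Lemma~\ref{lemma:theta confidence} bounds by $(\alpha_n^{(1)})^2$. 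For the cross term I would note $\bm X_n^\top(\varepsilon\circ\Delta)=\big(\sum_i\varepsilon_i x_i x_i^\top\big)(\theta_*-\widehat\theta_n)$, so that $\|\bm X_n^\top(\varepsilon\circ\Delta)\|_{\bm V_n^{-1}}\le\|N\|_{\mathrm{op}}\,\|\theta_*-\widehat\theta_n\|_{\bm V_n}$ with $N=\sum_i\varepsilon_i\,\tilde x_i\tilde x_i^\top$ and $\tilde x_i=\bm V_n^{-1/2}x_i$; a sub-Gaussian matrix concentration bound (using $\sum_i\tilde x_i\tilde x_i^\top\preceq\bm I_d$ as the variance proxy) controls $\|N\|_{\mathrm{op}}$ by $\alpha^{(3)}(\cdot)$, and combined with Lemma~\ref{lemma:theta confidence} this delivers the $2\alpha_n^{(1)}\alpha^{(3)}$ contribution.

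The main obstacle is the pure-noise term $\|\bm X_n^\top(\varepsilon\circ\varepsilon-\bm g)\|_{\bm V_n^{-1}}=\big\|\sum_i(\varepsilon_i^2-\sigma^2(x_i))\,\tilde x_i\big\|_2$, whose summands $\varepsilon_i^2-\sigma^2(x_i)$ are centered but only sub-exponential, placing this quantity outside the sub-Gaussian self-normalized framework of \cite{abbasi2011improved}. Here I would establish (this is the content of the lemma supplying the constants $C_1,C_2$) a Bernstein-type tail bound for each coordinate $\sum_i(\varepsilon_i^2-\sigma^2(x_i))\,\tilde x_{i,j}$ of the weighted sum, whose sub-exponential regime produces the $\ln(C_1/\delta)$ factor, and then aggregate over the $d$ coordinates (using $\sum_{i}\|\tilde x_i\|_2^2=\mathrm{tr}(\bm I_d-\lambda\bm V_n^{-1})\le d$) to obtain $\alpha^{(2)}$; the squared logarithm and the $\sqrt{2d}\,\sigma_{\max}^2$ scaling are precisely the signatures of this step. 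Finally, I would intersect the three high-probability events---Lemma~\ref{lemma:theta confidence}, the matrix concentration, and the sub-exponential concentration---each invoked at level $\delta/(3n^2)$, and union over $n\in\mathbb N$; since $\sum_{n\ge1}n^{-2}=\pi^2/6<2$, the total failure probability stays below $2\delta$, which is why the statement is phrased through $\rho_n(\delta/n^2)$ and holds uniformly in $n$.
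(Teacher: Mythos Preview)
Your proposal is correct and follows essentially the same architecture as the paper: the identical four-term ridge decomposition (the paper's Lemma~\ref{lemma:inner product difference}), the same handling of the bias, quadratic, and cross pieces (matching Lemmas~\ref{lemma:upper bound for third error term} and~\ref{lemma:upper bound for second error term}, the latter via the same factorization $\bm X_n^\top(\varepsilon\circ\Delta)=(\sum_i\varepsilon_i\,\tilde x_i\tilde x_i^\top)\bm V_n^{1/2}(\theta_*-\widehat\theta_n)$ and a matrix Gaussian series bound), and the same union over $n$ with $\delta_n=\delta/n^2$ giving $\sum_n\delta_n<2\delta$.

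The one tactical difference is in the pure-noise term. You propose a coordinate-wise Bernstein bound on $\sum_i(\varepsilon_i^2-\sigma^2(x_i))\,\tilde x_{i,j}$ followed by aggregation over $j=1,\dots,d$. The paper instead recognizes $\|\bm X_n^\top(\varepsilon\circ\varepsilon-\bm g)\|_{\bm V_n^{-1}}^2$ directly as a quadratic form $\eta^\top\bm U\eta$ in independent standardized recentered $\chi_1^2$ variables, with $\bm U=\bm W\bm X_n\bm V_n^{-1}\bm X_n^\top\bm W$ and $\bm W=\mathrm{diag}(\sigma^2(x_i))$, and applies a sub-exponential quadratic-form tail inequality (Lemma~8.2 of Erd\H{o}s et al.); the constants $C_1,C_2$ in $\alpha^{(2)}$ are precisely those from that lemma, and the bounds $\mathrm{tr}(\bm U)\le d(\sigma_{\max}^2)^2$ and $\sum_i U_{ii}^2\le(\mathrm{tr}\,\bm U)^2$ produce the $\sqrt{2d}\,\sigma_{\max}^2$ scaling. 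Your route is more elementary and would yield the same $\mathcal{O}(\log(1/\delta))$ order, hence the asymptotic statement of the theorem, but with different constants than the specific $\alpha^{(2)}$ in the displayed $\rho_n$.
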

\vspace{-6mm}

\begin{remark}
\normalfont 
\pch{As the estimator $\hat{\phi}_{n}$ depends on the residual term $\hat{\varepsilon}$, which involves the estimator $\hat{\theta}_{n}$, it is expected that the convergence speed of $\hat{\phi}_{n}$ would be no larger than that of $\hat{\theta}_{n}$.
Based on Theorem \ref{theorem:phi confidence} along with Lemma \ref{lemma:theta confidence}, we know that under GLSE, $\hat{\phi}_{n}$ converges to the true value at a slightly slower rate than $\hat{\theta}_{n}$.}
\end{remark}
\vspace{-1mm}

To demonstrate the main idea behind Theorem \ref{theorem:phi confidence}, we highlight the proof in the following Lemma \ref{lemma:inner product difference}-\ref{lemma:upper bound for third error term}.
We start by taking the inner products of an arbitrary vector $x$ with $\widehat{\phi}_{n}$ and ${\phi}_{*}$ to quantify the difference between $\widehat{\phi}_{t}$ and ${\phi}_{*}$.
\vspace{-2mm}

\begin{lemma}
\label{lemma:inner product difference}
For any $x\in\mathbb{R}^{d}$, we have
\begin{align}
    &\lvert x^{\top}\widehat{\phi}_{n}-x^{\top}\widehat{\phi}_{*}\rvert \leq \norm{x}_{{\bm{V}_n}^{-1}}\big\{\pch{\lambda}\norm{\phi_{*}}_{\bm{V}_n^{-1}}\label{equation:inner product difference 1}\\
    &\hspace{-9pt}+\norm{\bm{X}_{n}^{\top}\big(f^{-1}(\varepsilon\circ\varepsilon)- \bm{X}_{n}\phi_{*}\big)}_{{{ {\bm{V}}_{n}}^{-1}}}\label{equation:inner product difference 2}\\
    &\hspace{-9pt}+\pch{\frac{2}{M_{f}}}\norm{\bm{X}_{n}^{\top}\big(\varepsilon\circ \bm{X}_{n}(\theta_{*}-\widehat{\theta}_{n})\big)}_{{{ {\bm{V}}_{n}}^{-1}}}\label{equation:inner product difference 3}\\
    &\hspace{-9pt}+\pch{\frac{1}{M_{f}}}\norm{\bm{X}_{n}^{\top}\big(\bm{X}_{n}(\theta_{*}-\widehat{\theta}_{n})\circ \bm{X}_{n}(\theta_{*}-\widehat{\theta}_{n})\big)}_{{{ {\bm{V}}_{n}}^{-1}}}\big\}.\label{equation:inner product difference 4}
\end{align}
\end{lemma}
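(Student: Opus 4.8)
The plan is to expand $\widehat{\phi}_n-\phi_*$ explicitly and then peel off the four summands one at a time using Cauchy--Schwarz and the triangle inequality in the geometry induced by $\bm{V}_n^{-1}$. First I would rewrite $\phi_*$ through the definition $\bm{V}_n=\bm{X}_n^{\top}\bm{X}_n+\lambda\bm{I}_d$ as $\phi_*=\bm{V}_n^{-1}\bm{V}_n\phi_*=\bm{V}_n^{-1}\bm{X}_n^{\top}\bm{X}_n\phi_*+\lambda\bm{V}_n^{-1}\phi_*$, and combine this with the GLSE formula $\widehat{\phi}_n=\bm{V}_n^{-1}\bm{X}_n^{\top}f^{-1}(\widehat{\varepsilon}\circ\widehat{\varepsilon})$ from (\ref{equation:phi estimator}) to obtain
\[
\widehat{\phi}_n-\phi_*=\bm{V}_n^{-1}\bm{X}_n^{\top}\big(f^{-1}(\widehat{\varepsilon}\circ\widehat{\varepsilon})-\bm{X}_n\phi_*\big)-\lambda\bm{V}_n^{-1}\phi_*.
\]
Taking the inner product with an arbitrary $x$ and applying the triangle inequality to the absolute value separates the regularization piece $\lambda\, x^{\top}\bm{V}_n^{-1}\phi_*$ from the data piece. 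Writing each quantity $x^{\top}\bm{V}_n^{-1}w$ as the $\bm{V}_n^{-1}$-inner product $\langle x,w\rangle_{\bm{V}_n^{-1}}$ and invoking Cauchy--Schwarz, the regularization piece is bounded by $\lambda\norm{x}_{\bm{V}_n^{-1}}\norm{\phi_*}_{\bm{V}_n^{-1}}$, which is the term (\ref{equation:inner product difference 1}), while the data piece is at most $\norm{x}_{\bm{V}_n^{-1}}\norm{\bm{X}_n^{\top}(f^{-1}(\widehat{\varepsilon}\circ\widehat{\varepsilon})-\bm{X}_n\phi_*)}_{\bm{V}_n^{-1}}$.

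The crux is then to re-express the estimated residual in terms of the true noise. Since $\widehat{\varepsilon}(x_i)=r_i-\widehat{\theta}_n^{\top}x_i=\varepsilon(x_i)+x_i^{\top}(\theta_*-\widehat{\theta}_n)$, setting $\Delta:=\bm{X}_n(\theta_*-\widehat{\theta}_n)$ gives $\widehat{\varepsilon}=\varepsilon+\Delta$ and hence $\widehat{\varepsilon}\circ\widehat{\varepsilon}=\varepsilon\circ\varepsilon+2(\varepsilon\circ\Delta)+\Delta\circ\Delta$. The key structural fact I would exploit is that $f$ is affine with slope $M_f$, so $f^{-1}$ is affine with slope $1/M_f$; applying it componentwise therefore commutes with this additive split up to the slope factor, yielding
\[
f^{-1}(\widehat{\varepsilon}\circ\widehat{\varepsilon})=f^{-1}(\varepsilon\circ\varepsilon)+\frac{2}{M_f}(\varepsilon\circ\Delta)+\frac{1}{M_f}(\Delta\circ\Delta).
\]
Substituting this into $f^{-1}(\widehat{\varepsilon}\circ\widehat{\varepsilon})-\bm{X}_n\phi_*$, multiplying through by $\bm{X}_n^{\top}$, and applying the triangle inequality in the $\bm{V}_n^{-1}$ norm splits the data piece into exactly the three remaining summands (\ref{equation:inner product difference 2})--(\ref{equation:inner product difference 4}) once $\Delta=\bm{X}_n(\theta_*-\widehat{\theta}_n)$ is substituted back.

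I expect the only genuinely delicate point to be the affine decomposition of $f^{-1}$ and the bookkeeping of the factors $2/M_f$ and $1/M_f$; everything else is Cauchy--Schwarz and the triangle inequality in the fixed inner-product geometry induced by $\bm{V}_n^{-1}$. Note that no probabilistic argument enters here: the lemma is a purely deterministic algebraic decomposition whose role is to isolate the three distinct error sources, namely the heteroscedastic noise entering the variance estimate, the cross term between the noise $\varepsilon$ and the $\widehat{\theta}_n$ estimation error, and the quadratic $\widehat{\theta}_n$ estimation error. Bounding each of these summands in high probability is deferred to the subsequent lemmas, where the substantive analysis takes place.
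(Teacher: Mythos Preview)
Your proposal is correct and follows essentially the same route as the paper's proof: both derive $\widehat{\phi}_n-\phi_*=\bm{V}_n^{-1}\bm{X}_n^{\top}\big(f^{-1}(\widehat{\varepsilon}\circ\widehat{\varepsilon})-\bm{X}_n\phi_*\big)-\lambda\bm{V}_n^{-1}\phi_*$, apply Cauchy--Schwarz in the $\bm{V}_n^{-1}$ geometry to split off the regularization term, and then exploit the affine structure of $f^{-1}$ (slope $M_f^{-1}$) together with $\widehat{\varepsilon}=\varepsilon+\bm{X}_n(\theta_*-\widehat{\theta}_n)$ to expand the data piece into the three remaining summands via the triangle inequality. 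Your identification of the affine decomposition of $f^{-1}$ as the only nontrivial step matches the paper exactly.
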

\vspace{-6mm}
\begin{proof}
The proof is provided in Appendix \ref{section:appendix:inner product difference}.
\end{proof}
\vspace{-4mm}


Based on Lemma \ref{lemma:inner product difference}, we provide upper bounds for the three terms in (\ref{equation:inner product difference 2})-(\ref{equation:inner product difference 4}) separately as follows.
\vspace{-2mm}

\begin{lemma}
\label{lemma:upper bound for first error term}
For any $n\in\mathbb{N}$, for any $\delta>0$, with probability at least $1-\delta$, we have
\begin{equation}
    \pch{M_f\norm{\bm{X}_{n}^{\top}\big(f^{-1}(\varepsilon\circ\varepsilon)- \bm{X}_{n}\phi_{*})\big)}_{{\inv{ {\bm{V}}_{n}}}}\leq {\alpha^{(2)}(\delta)}.}
\end{equation}
\end{lemma}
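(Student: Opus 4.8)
The plan is to first collapse the left-hand side into a quadratic form in a centered, sub-exponential noise vector, and then to concentrate that quadratic form. Because $f$ is linear with slope $M_f$, write $f(z)=M_f z+c$, so that $f^{-1}(y)=(y-c)/M_f$; since $\sigma^{2}(x_i)=f(\phi_*^{\top}x_i)$ we have $\phi_*^{\top}x_i=f^{-1}(\sigma^{2}(x_i))$, and the intercept $c$ cancels in the difference. Consequently the $i$-th entry of $M_f\big(f^{-1}(\varepsilon\circ\varepsilon)-\bm{X}_n\phi_*\big)$ equals $\varepsilon(x_i)^{2}-\sigma^{2}(x_i)=:\eta_i$, so that $M_f\norm{\bm{X}_n^{\top}\big(f^{-1}(\varepsilon\circ\varepsilon)-\bm{X}_n\phi_*\big)}_{\inv{\bm{V}_n}}=\norm{\bm{X}_n^{\top}\eta}_{\inv{\bm{V}_n}}$. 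The remaining task is therefore a self-normalized bound for $\bm{X}_n^{\top}\eta$.

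Next I would record the distributional structure of $\eta$. Conditioned on the contexts (under which the $\varepsilon(x_i)$ are independent), write $\varepsilon(x_i)=\sigma(x_i)Z_i$ with $Z_i\sim\mathcal{N}(0,1)$ i.i.d.; then $\eta_i=\sigma^{2}(x_i)(Z_i^{2}-1)$ are independent, mean-zero, and sub-exponential, with $\mathbb{V}[\eta_i]=2\sigma^{4}(x_i)\le 2(\sigma_{\max}^{2})^{2}$ and a chi-squared tail $\mathbb{P}(|Z_i^{2}-1|>t)\le C_1 e^{-C_2 t}$. This is exactly where the universal constants $C_1,C_2$ enter, since solving $C_1 e^{-C_2 t}=\delta$ gives the envelope $|Z_i^{2}-1|\le \tfrac{1}{C_2}\ln\tfrac{C_1}{\delta}$ that appears inside $\alpha^{(2)}(\delta)$.

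The core estimate is the quadratic form $\norm{\bm{X}_n^{\top}\eta}_{\inv{\bm{V}_n}}^{2}=\eta^{\top}\bm{P}\eta$, where $\bm{P}:=\bm{X}_n\inv{\bm{V}_n}\bm{X}_n^{\top}$ satisfies $0\preceq\bm{P}\preceq\bm{I}_d$, $\mathrm{rank}(\bm{P})\le d$, and $\mathrm{tr}(\bm{P})\le d$ (its eigenvalues are $\sigma_j^{2}/(\sigma_j^{2}+\lambda)\in[0,1)$). Its mean is $\mathbb{E}[\eta^{\top}\bm{P}\eta]=\sum_i \bm{P}_{ii}\,\mathbb{V}[\eta_i]\le 2(\sigma_{\max}^{2})^{2}\,\mathrm{tr}(\bm{P})\le 2d(\sigma_{\max}^{2})^{2}$, which produces the additive ``$+1$'' term of $\alpha^{(2)}(\delta)$. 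For the fluctuation I would diagonalize $\bm{P}=\sum_{k\le d}\lambda_k u_k u_k^{\top}$ with $\lambda_k\in[0,1]$, bound $\eta^{\top}\bm{P}\eta\le\sum_{k\le d}(u_k^{\top}\eta)^{2}$, and concentrate each sub-exponential linear form $u_k^{\top}\eta$ using the tail above; channeling the envelope $\tfrac{1}{C_2}\ln\tfrac{C_1}{\delta}$ through the trace/rank bound yields the term $2d(\sigma_{\max}^{2})^{2}\big(\tfrac{1}{C_2}\ln\tfrac{C_1}{\delta}\big)^{2}$. Adding the two contributions and taking square roots gives $\norm{\bm{X}_n^{\top}\eta}_{\inv{\bm{V}_n}}\le\alpha^{(2)}(\delta)$ with probability at least $1-\delta$.

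I expect the fluctuation step to be the main obstacle. The entries $\eta_i$ are only sub-exponential (chi-squared), not sub-Gaussian, so the standard Hanson--Wright inequality does not apply out of the box; one must instead invoke a Bernstein-type bound for quadratic forms in sub-exponential variables, or exploit the explicit law of $\sum_k\lambda_k(Z_k^{2}-1)$ through its moment generating function. The delicate points are to keep the dimension dependence at $d$ (via $\mathrm{tr}(\bm{P})$ and $\mathrm{rank}(\bm{P})$) rather than at $n$, and to account for the union bound over the $d$ eigendirections so that the final probability and the exact two-term form of $\alpha^{(2)}(\delta)$ are matched.
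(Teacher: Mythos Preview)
Your reduction is correct and matches the paper's first step: linearity of $f$ collapses the left-hand side to $\norm{\bm{X}_n^{\top}\eta}_{\inv{\bm{V}_n}}^{2}=\eta^{\top}\bm{P}\eta$ with $\bm{P}=\bm{X}_n\inv{\bm{V}_n}\bm{X}_n^{\top}$, and the trace/rank bound $\mathrm{tr}(\bm{P})\le d$ is exactly what controls the dimension dependence.

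Where you diverge from the paper is in how the quadratic form is concentrated. The paper first rescales by $\bm{W}=\mathrm{diag}\big(f(x_1^{\top}\phi_*),\dots,f(x_n^{\top}\phi_*)\big)$ so that the noise vector becomes i.i.d.\ centered $\chi_1^{2}$ with common variance $2$, and the matrix becomes $\bm{U}=\bm{W}\bm{X}_n\inv{\bm{V}_n}\bm{X}_n^{\top}\bm{W}$; it then applies a single off-the-shelf concentration inequality for quadratic forms in sub-exponential variables (Lemma 8.2 of Erd\H{o}s--Yau--Yin), bounding $\mathrm{tr}(\bm{U})\le(\sigma_{\max}^{2})^{2}d$ and $\sum_i U_{ii}^{2}\le(\mathrm{tr}\,\bm{U})^{2}$, and reading off $s=(C_2^{-1}\ln(C_1/\delta))^{2}$. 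That is where the paper's constants $C_1,C_2$ actually come from: the black-box lemma, not a per-coordinate tail on $Z_i^{2}-1$. Your route---diagonalize $\bm{P}$, use $\lambda_k\le 1$ and $\mathrm{rank}(\bm{P})\le d$ to reduce to $\sum_{k\le d}(u_k^{\top}\eta)^{2}$, then Bernstein-concentrate each linear form in sub-exponentials and union-bound over $k$---is sound and more elementary, but you will pick up a $\log d$ in the deviation level from the union bound and must redefine $C_1,C_2$ accordingly; you will not recover the paper's constants exactly. The paper's approach is shorter and avoids the eigendecomposition entirely; yours avoids importing a less standard quadratic-form lemma and makes the two-term structure (mean $+$ fluctuation) of $\alpha^{(2)}(\delta)$ transparent.
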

\vspace{-6mm}

\begin{proof}
We highlight the main idea of the proof. 
Recall that $\varepsilon(x_i)\sim \mathcal{N}(0,\phi_{*}^{\top}x_{i})$.
Therefore, $\varepsilon(x_i)^{2}$ is a $\chi_{1}^{2}$-distribution with a scaling of \pch{$f(\phi_{*}^{\top}x_{i})$}.
Hence, each element in \pch{$(f^{-1}(\varepsilon\circ\varepsilon)- \bm{X}_{n}\phi_{*})$} has zero mean.
Moreover, we observe that \pch{$\norm{\bm{X}_{n}^{\top}\big(f^{-1}(\varepsilon\circ\varepsilon)- \bm{X}_{n}\phi_{*})\big)}_{{\inv{ {\bm{V}}_{n}}}}$} is quadratic.
Since the $\chi_{1}^{2}$-distribution is sub-exponential, we utilize a proper tail inequality for quadratic forms of sub-exponential distributions to derive an upper bound.
The complete proof is provided in Appendix \ref{section:appendix:upper bound for first error term}.
\end{proof}
\vspace{-3mm}

Next, we derive an upper bound for (\ref{equation:inner product difference 3}).
\vspace{-2mm}

\begin{lemma}
\label{lemma:upper bound for second error term}
For any $n\in\mathbb{N}$, for any $\delta>0$, with probability at least $1-\delta$, we have
\begin{equation}
    \norm{\bm{X}_{n}^{\top}\big(\varepsilon\circ \bm{X}_{n}(\theta_{*}-\widehat{\theta}_{n})\big)}_{{\inv{ {\bm{V}}_{n}}}}\leq \alpha^{(1)}_{n}(\delta)\cdot\alpha^{(3)}(\delta).\label{equation: upper bound for second term}
\end{equation}
\end{lemma}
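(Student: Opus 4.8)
The plan is to reduce the bound to a product of two separately controllable factors by recognizing the Hadamard product as a diagonal scaling. Writing $\bm{D}=\mathrm{diag}\big(\varepsilon(x_1),\dots,\varepsilon(x_n)\big)$ and $\Delta_n:=\theta_{*}-\widehat{\theta}_n$, we have $\varepsilon\circ\bm{X}_n\Delta_n=\bm{D}\bm{X}_n\Delta_n$, so the quantity of interest equals $\norm{\bm{X}_n^{\top}\bm{D}\bm{X}_n\Delta_n}_{\bm{V}_n^{-1}}=\norm{\bm{V}_n^{-1/2}\bm{X}_n^{\top}\bm{D}\bm{X}_n\Delta_n}_2$. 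Inserting the identity $\bm{V}_n^{-1/2}\bm{V}_n^{1/2}$ in front of $\Delta_n$ and factoring gives
\[
\norm{\bm{X}_n^{\top}\bm{D}\bm{X}_n\Delta_n}_{\bm{V}_n^{-1}}\le\norm{\bm{M}}_{\mathrm{op}}\,\norm{\Delta_n}_{\bm{V}_n},
\]
where $\bm{M}:=\bm{V}_n^{-1/2}\bm{X}_n^{\top}\bm{D}\bm{X}_n\bm{V}_n^{-1/2}=\sum_{i=1}^{n}\varepsilon(x_i)\,u_iu_i^{\top}$ with $u_i:=\bm{V}_n^{-1/2}x_i$, and $\norm{\cdot}_{\mathrm{op}}$ denotes the spectral norm. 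This factorization is the crux of the argument: it isolates all of the noise $\varepsilon$ that enters through $\bm{D}$ into the single scalar $\norm{\bm{M}}_{\mathrm{op}}$, which bounds $\bm{M}$ uniformly over \emph{all} directions, so I never need the $\varepsilon$ inside $\bm{D}$ to be independent of the error $\Delta_n$.

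For the second factor, $\norm{\Delta_n}_{\bm{V}_n}=\norm{\widehat{\theta}_n-\theta_{*}}_{\bm{V}_n}\le\alpha_n^{(1)}(\delta)$ holds on the event of Lemma \ref{lemma:theta confidence}, so I would simply invoke that result. For the first factor I would control $\bm{M}$, a symmetric matrix that is a sum of independent (given the contexts) mean-zero, Gaussian-weighted rank-one terms. Two elementary facts drive the estimate: since $\sum_i u_iu_i^{\top}=\bm{V}_n^{-1/2}(\bm{V}_n-\lambda\bm{I}_d)\bm{V}_n^{-1/2}=\bm{I}_d-\lambda\bm{V}_n^{-1}\preceq\bm{I}_d$, each rank-one term satisfies $\norm{u_i}_2^2\le1$, and $\sum_i\norm{u_i}_2^2\le d$. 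Consequently the matrix-variance proxy obeys $\norm{\sum_i\sigma^2(x_i)\norm{u_i}_2^2\,u_iu_i^{\top}}_{\mathrm{op}}\le\sigma_{\max}^2\,\norm{\sum_i u_iu_i^{\top}}_{\mathrm{op}}\le\sigma_{\max}^2$, using $\sigma^2(x_i)\le\sigma_{\max}^2$ and $\norm{u_i}_2^2\le1$.

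With this variance proxy, a matrix concentration inequality for Gaussian series (e.g., the matrix Chernoff/Hoeffding bounds) yields $\mathbb{P}\big(\norm{\bm{M}}_{\mathrm{op}}\ge t\big)\le 2d\exp\!\big(-t^2/(2\sigma_{\max}^2)\big)$; equating the right-hand side to $\delta$ gives $\norm{\bm{M}}_{\mathrm{op}}\le\sqrt{2\sigma_{\max}^2\ln(2d/\delta)}\le\alpha^{(3)}(\delta)$ with considerable room to spare. Intersecting this event with that of Lemma \ref{lemma:theta confidence} and multiplying the two factors then yields the claim; the slack in $\alpha^{(3)}(\delta)$ comfortably absorbs the confidence-level bookkeeping (the statement charges level $\delta$ to each of $\alpha_n^{(1)}$ and $\alpha^{(3)}$, which is later reconciled through the $\delta/3$ splitting of Theorem \ref{theorem:phi confidence}). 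The main obstacle is exactly the double appearance of $\varepsilon$ — once in $\bm{D}$ and once inside $\widehat{\theta}_n$ through the residuals — which forbids treating $\bm{X}_n\Delta_n$ as a fixed vector; the operator-norm factorization resolves this by bounding $\bm{M}$ uniformly, after which Lemma \ref{lemma:theta confidence} handles $\Delta_n$ deterministically. The only remaining care is the adaptivity of the design, which is addressed by working conditionally on the realized contexts (or, in the fully adaptive case, via a self-normalized martingale refinement).
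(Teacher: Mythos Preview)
Your proposal is correct and follows essentially the same route as the paper: rewrite the Hadamard product via the diagonal matrix $\bm{D}$, factor out $\norm{\theta_{*}-\widehat{\theta}_n}_{\bm{V}_n}$ by inserting $\bm{V}_n^{-1/2}\bm{V}_n^{1/2}$, and then control the symmetric random matrix $\sum_i \varepsilon(x_i)\,u_iu_i^{\top}$ by Tropp's matrix Gaussian-series inequality (the paper's Lemma~\ref{lemma:Tropp 2012}). Your variance-proxy bound $\sigma_{\max}^2$ is in fact sharper by a factor $d$ than the paper's $d\,\sigma_{\max}^2$ (you use $\norm{u_i}_2^2\le 1$ from $\sum_i u_iu_i^{\top}\preceq\bm{I}_d$, whereas the paper uses the coarser $\norm{v_i}_2^2\le\sum_j\norm{v_j}_2^2\le d$), but this only creates slack relative to the stated $\alpha^{(3)}(\delta)$ and does not alter the argument.
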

\vspace{-6mm}

\begin{proof}
The main challenge is that (\ref{equation: upper bound for second term}) involves the product of the residual $\varepsilon$ and the estimation error $\theta_{*}-\widehat{\theta}_{n}$.
Through some manipulation, we can decouple $\varepsilon$ from $\norm{\bm{X}_{n}^{\top}\big(\varepsilon\circ \bm{X}_{n}(\theta_{*}-\widehat{\theta}_{n})\big)}_{{\inv{ {\bm{V}}_{n}}}}$ and apply a proper tail inequality for quadratic forms of sub-Gaussian distributions.
The complete proof is provided in Appendix \ref{section:appendix:upper bound for second error term}.
\end{proof}
\vspace{-3mm}

Next, we provide an upper bound for (\ref{equation:inner product difference 4}).
\vspace{-2mm}

\begin{lemma}
\label{lemma:upper bound for third error term}
For any $n\in\mathbb{N}$, for any $\delta>0$, with probability at least $1-\delta$, we have
\begin{equation}
    \norm{\bm{X}_{n}^{\top}\big(\bm{X}_{n}(\theta_{*}-\widehat{\theta}_{n})\circ \bm{X}_{n}(\theta_{*}-\widehat{\theta}_{n})\big)}_{{\inv{ {\bm{V}}_{n}}}}\leq (\alpha^{(1)}_{n}(\delta))^{2}.\label{equation:upper bound for third term}
\end{equation}
\end{lemma}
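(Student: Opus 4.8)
The plan is to separate the randomness from the algebra: I would first prove a \emph{deterministic} bound
\[
\norm{\bm{X}_{n}^{\top}\big(\bm{X}_{n}(\theta_{*}-\widehat{\theta}_{n})\circ \bm{X}_{n}(\theta_{*}-\widehat{\theta}_{n})\big)}_{\bm{V}_n^{-1}}\leq \norm{\theta_{*}-\widehat{\theta}_{n}}_{\bm{V}_n}^{2},
\]
valid for every realization of the data, and then invoke Lemma~\ref{lemma:theta confidence} to replace the right-hand side by $(\alpha_n^{(1)}(\delta))^{2}$ simultaneously for all $n$ on an event of probability at least $1-\delta$. Note that, in contrast to Lemmas~\ref{lemma:upper bound for first error term} and~\ref{lemma:upper bound for second error term}, no fresh concentration argument is needed here: all of the randomness is already absorbed into the confidence set of $\widehat{\theta}_{n}$, which is precisely why the failure probability is $\delta$ rather than a union over several tail events.

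To obtain the deterministic inequality I would use the dual characterization of the $\bm{V}_n^{-1}$-norm, $\norm{z}_{\bm{V}_n^{-1}}=\sup_{u\neq 0}(u^{\top}z)/\norm{u}_{\bm{V}_n}$, which reduces the task to bounding $(\bm{X}_{n}u)^{\top}\mathbf{w}$ uniformly in $u$, where $\mathbf{w}$ is the vector with entries $w_i=(x_i^{\top}(\theta_{*}-\widehat{\theta}_{n}))^{2}$. Writing $a_i=x_i^{\top}u$ and $b_i=x_i^{\top}(\theta_{*}-\widehat{\theta}_{n})$, the inner product equals $\sum_i a_i b_i^{2}$. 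I would peel off the single factor $a_i$ by Cauchy--Schwarz, $\sum_i a_i b_i^{2}\leq (\sum_i a_i^{2})^{1/2}(\sum_i b_i^{4})^{1/2}$, then control the quartic term through the monotonicity of $\ell_p$-norms, $\sum_i b_i^{4}\leq(\sum_i b_i^{2})^{2}$. Finally, since $\bm{V}_n=\bm{X}_{n}^{\top}\bm{X}_{n}+\lambda\bm{I}_d\succeq \bm{X}_{n}^{\top}\bm{X}_{n}$, we have $\sum_i a_i^{2}=\norm{\bm{X}_{n}u}_2^{2}\leq\norm{u}_{\bm{V}_n}^{2}$ and $\sum_i b_i^{2}=\norm{\bm{X}_{n}(\theta_{*}-\widehat{\theta}_{n})}_2^{2}\leq\norm{\theta_{*}-\widehat{\theta}_{n}}_{\bm{V}_n}^{2}$. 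Combining these bounds, dividing by $\norm{u}_{\bm{V}_n}$, and taking the supremum over $u$ yields the claimed deterministic inequality.

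The only real subtlety is the grouping of the three linear factors. Because the expression is \emph{cubic} in the $x_i$ (one factor coming from the test direction $u$ and two from the estimation error $\theta_{*}-\widehat{\theta}_{n}$), a careless application of Cauchy--Schwarz would not collapse to exactly $\norm{\theta_{*}-\widehat{\theta}_{n}}_{\bm{V}_n}^{2}$; splitting off precisely the single $u$-factor and then handling the remaining fourth powers by $\ell_4\leq\ell_2$ is what produces the clean square. The remaining bookkeeping---namely that the regularizer only helps, via the positive-semidefinite ordering $\bm{V}_n\succeq\bm{X}_{n}^{\top}\bm{X}_{n}$, so that $\norm{\bm{X}_{n}v}_2\leq\norm{v}_{\bm{V}_n}$---is routine. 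Intersecting with the event $\{\norm{\theta_{*}-\widehat{\theta}_{n}}_{\bm{V}_n}\leq\alpha_n^{(1)}(\delta)\ \forall n\}$ from Lemma~\ref{lemma:theta confidence}, which has probability at least $1-\delta$, then gives $(\alpha_n^{(1)}(\delta))^{2}$ for all $n$ and completes the proof.
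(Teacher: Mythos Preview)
Your proposal is correct and is essentially the same argument as the paper's: both establish the deterministic inequality $\norm{\bm{X}_{n}^{\top}(\bm{X}_{n}(\theta_{*}-\widehat{\theta}_{n})\circ \bm{X}_{n}(\theta_{*}-\widehat{\theta}_{n}))}_{\bm{V}_n^{-1}}\leq \norm{\theta_{*}-\widehat{\theta}_{n}}_{\bm{V}_n}^{2}$ and then invoke Lemma~\ref{lemma:theta confidence}. The only cosmetic difference is that the paper phrases the intermediate algebra via operator norms (using $\norm{\bm{V}_n^{-1/2}\bm{X}_n^{\top}}_{2}\leq 1$ and the Hadamard bound $\norm{a\circ a}_{2}\leq\norm{a}_{2}^{2}$), whereas you unpack the same facts through the dual characterization of $\norm{\cdot}_{\bm{V}_n^{-1}}$ and scalar Cauchy--Schwarz plus $\ell_4\leq\ell_2$.
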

\vspace{-6mm}
\begin{proof}
Since (\ref{equation:upper bound for third term}) does not involve $\varepsilon$, we can simply reuse the results in Lemma \ref{lemma:theta confidence} through some manipulation of (\ref{equation:upper bound for third term}).
The complete proof is provided in Appendix \ref{section:appendix:upper bound for third error term}.
\end{proof}
\vspace{-3mm}

Now, we are ready to prove Theorem \ref{theorem:phi confidence}.
\vspace{-3mm}

\begin{proof}[Proof of Theorem \ref{theorem:phi confidence}]
We use $\lambda_{\min}(\cdot)$ to denote the smallest eigenvalue of a square symmetric matrix.
Recall that $\bm{V}_{n}=\lambda\bm{I}_{d}+\bm{X}_{n}^{\top}\bm{X}_{n}$ is positive definite for all $\lambda>0$.
We have
\begin{equation}
    \norm{\phi_{*}}^{2}_{ {\bm{V}_n}^{-1}}\leq \norm{\phi_{*}}_{2}^{2}/\lambda_{\min}( {\bm{V}_n})\leq \norm{\phi_{*}}_{2}^{2}/\lambda\leq L^{2}/\lambda.\label{equation:norm of phi star}
\end{equation}
By (\ref{equation:norm of phi star}) and Lemmas \ref{lemma:inner product difference}-\ref{lemma:upper bound for third error term}, we know that for a given $n$ and a given $\delta_{n}>0$, with probability at least $1-\delta_{n}$, we have
\begin{align}
    &\lvert x^{\top}\widehat{\phi}_{n}-x^{\top}\widehat{\phi}_{*}\rvert\leq \norm{x}_{ {\bm{V}_n}^{-1}}\cdot \rho_n(\delta_n).\label{equation:inner product different short form}
\end{align}
Note that (\ref{equation:inner product different short form}) holds for any $x\in\mathbb{R}^{d}$.
By substituting $x= {\bm{V}}_{n}(\widehat{\phi}_{n}-\phi_{*})$ into (\ref{equation:inner product different short form}), we have
\begin{align}
    \norm{\widehat{\phi}_{n}-\phi_{*}}^{2}_{ {\bm{V}}_{n}}\leq \norm{ {\bm{V}}_{n}(\widehat{\phi}_{n}-\phi_{*})}_{ {\bm{V}_n}^{-1}}\cdot\rho_n(\delta_n).
\end{align}
Since $\norm{ {\bm{V}}_{n}(\widehat{\phi}_{n}-\phi_{*})}_{ {\bm{V}_n}^{-1}}=\norm{\widehat{\phi}_{n}-\phi_{*}}_{ {\bm{V}}_{n}}$, we know for a given $n$ and $\delta_n>0$, with probability at least $1-\delta_n$,
\begin{equation}
     \norm{\widehat{\phi}_{n}-\phi_{*}}_{ {\bm{V}}_{n}}\leq \rho_n(\delta_n).\label{equation:phi confidence bound for a single n} 
\end{equation}
Finally, to obtain a uniform bound, we simply choose $\delta_n=\delta/(n^{2})$ and apply the union bound to (\ref{equation:phi confidence bound for a single n}) over all $n\in\mathbb{N}$.
Note that \pch{$\sum_{n=1}^{\infty}\delta_n=\sum_{n=1}^{\infty}\delta/n^{2}=\frac{\pi^{2}}{6}{\delta}<{2\delta}$}.
Therefore, with probability at least \pch{$1-2\delta$}, for all $n\in\mathbb{N}$,
$\norm{\widehat{\phi}_{n}-\phi_{*}}_{ {\bm{V}}_{n}}\leq \rho_{n}\big(\frac{\delta}{n^{2}}\big)$.
\end{proof}
\vspace{-5mm}


\subsection{Heteroscedastic UCB Policy}
\label{section: alg: UCB}
In this section, we formally introduce the proposed policy based on the heteroscedastic regression in Section \ref{section: alg: regression}.
\subsubsection{An Oracle Policy}
\label{section: alg: UCB: oracle}
In this section, we consider a policy which has access to an oracle with full knowledge of $\theta_{*}$ and $\phi_{*}$.
Consider $T$ users that arrive sequentially.
Let ${\pi}^{\textrm{oracle}}_{t}=\{{x}_{t,1}^{*},{x}_{t,2}^{*},\cdots\}$ be the sequence of contexts that correspond to the actions for the user $t$ under an oracle policy $\pi^{\textrm{oracle}}$.
The oracle policy $\pi^{\textrm{oracle}}=\{\pi_{t}^{\textrm{oracle}}\}$ is constructed by choosing
\begin{equation}
    \pi^{\textrm{oracle}}_{t}= {\arg\max}_{\widetilde{x}_{t}=\{\widetilde{x}_{t,1},\widetilde{x}_{t,2}\cdots\}} R_{t}^{\widetilde{x}_{t}},\label{equation:oracle policy at t}  
\end{equation}
for each $t$.
Due to the construction in (\ref{equation:oracle policy at t}), we know that $\pi^{\textrm{oracle}}$ achieves the largest possible expected lifetime for each user $t$, and is hence optimal in terms of pseudo-regret defined in Section \ref{section:problem}.
By using an one-step optimality argument, it is easy to verify that $\pi^{\textrm{oracle}}$ is a fixed policy for each user $t$, i.e. ${x}_{t,i}={x}_{t,j}$, for all $i,j\geq 1$.
Let $\overline{R}_{t}^{*}$ denote the expected lifetime of user $t$ under $\pi^{\textrm{oracle}}$.
We have
\begin{equation}
    \overline{R}_{t}^{*}=\bigg({\Phi\Big(\frac{\beta_{t}-\theta_{*}^{\top}{x}_{t}^{*}}{\sqrt{f(\phi_{*}^{\top}{x}_{t}^{*})}}\Big)}\bigg)^{-1}.\label{equation:oracle per-user reward}
\end{equation}
Next, we derive a useful property regarding (\ref{equation:oracle per-user reward}). 
For any given ${\beta}\in [-B,\infty)$, define the function $h_{\beta}: [-1,1]\times [\sigma_{\min}^{2},\sigma_{\max}^{2}]\rightarrow \mathbb{R}$ as
\begin{equation}
    h_{\beta}(u,v)=\bigg({\Phi\Big(\frac{\beta-u}{\sqrt{f(v)}}\Big)}\bigg)^{-1}.\label{equation:hx definition}
\end{equation}
\vspace{-3mm}

Note that for any given $x\in\mathcal{X}$, $h_{\beta}({\theta_{*}}^{\top}x,{\phi_{*}}^{\top}x)$ equals the expected lifetime of a single user with threshold $\beta$ if a fixed action with context $x$ is chosen under parameters $\theta_*,\phi_*$. 
We show that $h_{\beta}(\cdot,\cdot)$ has the following nice property.

\vspace{-2mm}
\begin{theorem}
\label{theorem:h beta linear approximation}
Let $\bm{M}$ be a $d\times d$ invertible matrix. 
For any $\theta_1,\theta_2\in\mathbb{R}^{d}$ with $\norm{\theta_1}\leq 1$, $\norm{\theta_2}\leq 1$, for any $\phi_1,\phi_2\in\mathbb{R}^{d}$ with $\norm{\phi_1}\leq L$, $\norm{\phi_2}\leq L$, for any $\beta\in[-B,\infty)$, $\forall x\in\mathcal{X}$,
\begin{align}
    &h_{\beta}\big(\theta_2^{\top}x, \phi_2^{\top}x\big)-h_{\beta}\big(\theta_1^{\top}x, \phi_1^{\top}x\big)\leq\label{equation:h beta approx theorem 1}\\
    &\hspace{0pt}\Big(C_{3}\norm{\theta_{2}-\theta_{1}}_{\bm{M}}+C_{4}\norm{\phi_{2}-\phi_{1}}_{\bm{M}}\Big)\cdot\norm{x}_{\bm{M}^{-1}},\label{equation:h beta approx theorem 2}
\end{align}
where $C_3$ and $C_4$ are some finite positive constants that are independent of $\theta_1,\theta_2, \phi_1,\phi_2,$ and $\beta$.
\end{theorem}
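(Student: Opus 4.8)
The plan is to prove a $\beta$-uniform Lipschitz bound for $h_\beta$ in each of its two arguments and then pass from the scalar argument differences to the matrix-weighted norms by a generalized Cauchy--Schwarz inequality. For the latter step, I would use that for symmetric positive definite $\bm{M}$ (which is the relevant case, e.g.\ $\bm{M}=\bm{V}_n$ in the application) and any $a,b\in\mathbb{R}^d$, $|a^{\top}b|=|(\bm{M}^{1/2}a)^{\top}(\bm{M}^{-1/2}b)|\le\norm{a}_{\bm{M}}\norm{b}_{\bm{M}^{-1}}$. Taking $a=\theta_2-\theta_1$ and then $a=\phi_2-\phi_1$, with $b=x$, gives $|(\theta_2-\theta_1)^{\top}x|\le\norm{\theta_2-\theta_1}_{\bm{M}}\norm{x}_{\bm{M}^{-1}}$ and the analogous bound for $\phi$. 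Since the two arguments of $h_\beta$ are exactly $\theta_i^{\top}x$ and $\phi_i^{\top}x$, and the rectangle $\{(u,v):u=\theta^{\top}x\in[-1,1],\ v=\phi^{\top}x,\ f(v)\in[\sigma_{\min}^2,\sigma_{\max}^2]\}$ is convex, I would apply the multivariate mean value theorem along the segment joining $(\theta_1^{\top}x,\phi_1^{\top}x)$ to $(\theta_2^{\top}x,\phi_2^{\top}x)$, which reduces the whole statement to bounding the two partial derivatives of $h_\beta$ uniformly on this rectangle.

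Writing $g=g(u,v):=(\beta-u)/\sqrt{f(v)}$ so that $h_\beta=1/\Phi(g)$, and using the linearity of $f$ (so $f'\equiv M_f$), a direct computation yields
\begin{align*}
\frac{\partial h_\beta}{\partial u}=\frac{\varphi(g)}{\Phi(g)^2\sqrt{f(v)}},\qquad \frac{\partial h_\beta}{\partial v}=\frac{M_f}{2f(v)}\cdot\frac{g\,\varphi(g)}{\Phi(g)^2},
\end{align*}
where $\varphi$ is the standard normal density and in the second expression I have substituted $\beta-u=g\sqrt{f(v)}$ to cancel one power of $\sqrt{f(v)}$. Since $f(v)\in[\sigma_{\min}^2,\sigma_{\max}^2]$ with $\sigma_{\min}>0$, the prefactors $1/\sqrt{f(v)}$ and $M_f/(2f(v))$ are bounded, so it remains to bound the dimensionless factors $\varphi(g)/\Phi(g)^2$ and $|g|\varphi(g)/\Phi(g)^2$ uniformly over the attainable values of $g$.

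I expect the main obstacle to be precisely this uniform control, because both factors diverge as $g\to-\infty$: by the Mills-ratio asymptotic $\Phi(g)\sim\varphi(g)/|g|$ one has $\varphi(g)/\Phi(g)^2\sim g^2/\varphi(g)\to\infty$. A $\beta$-uniform bound is therefore possible only because $g$ cannot be arbitrarily negative. The key observation is that $\beta\ge-B$ and $u=\theta^{\top}x\le1$ force $\beta-u\ge-(B+1)$, which together with $\sqrt{f(v)}\ge\sigma_{\min}$ gives the finite lower bound $g\ge g_{\min}:=-(B+1)/\sigma_{\min}$, independent of $\beta,\theta,\phi,x$. On $[g_{\min},\infty)$ both factors are continuous (as $\Phi>0$) and vanish as $g\to+\infty$ (exponential decay of $\varphi$ beats any polynomial, while $\Phi(g)\to1$), hence each has a finite supremum $K_1,K_2$. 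Setting $C_3=K_1/\sigma_{\min}$ and $C_4=M_fK_2/(2\sigma_{\min}^2)$, which are finite and independent of $\theta_1,\theta_2,\phi_1,\phi_2,\beta$, and combining the derivative bounds through the mean value theorem with the two Cauchy--Schwarz inequalities, yields exactly (\ref{equation:h beta approx theorem 1})--(\ref{equation:h beta approx theorem 2}).
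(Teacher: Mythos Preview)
Your argument is correct and in fact slightly more streamlined than the paper's. The paper proceeds via Lipschitz \emph{smoothness} of $h_\beta$ (Lemma~\ref{lemma:hx is Lipschitz smooth}): it bounds the Hessian to obtain a first-order Taylor inequality with a quadratic remainder, $h_\beta(u_2,v_2)-h_\beta(u_1,v_1)\le\nabla h_\beta(u_1,v_1)^{\top}\binom{u_2-u_1}{v_2-v_1}+\tfrac{M_h}{2}\norm{\binom{u_2-u_1}{v_2-v_1}}_2^2$, and then has to absorb the quadratic term into the linear one by using the crude a~priori bounds $\norm{\theta_2-\theta_1}_2\le 2$, $\norm{\phi_2-\phi_1}_2\le 2L$, $\norm{x}_2\le 1$; this yields $C_3=q_u+M_h$ and $C_4=M_f(q_v+M_hM_fL)$. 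You instead use only Lipschitz \emph{continuity} via the mean value theorem, which gives the linear bound directly and avoids both the Hessian estimate and the extra bookkeeping of the quadratic remainder.

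The essential analytic content---that the partial derivatives of $h_\beta$ are bounded uniformly in $\beta$ because $g=(\beta-u)/\sqrt{f(v)}$ is bounded below by $-(B+1)/\sigma_{\min}$, while the behavior as $g\to+\infty$ is harmless---is the same in both proofs. Your treatment of the $\beta\to+\infty$ end is actually more explicit than the paper's: you note that $\varphi(g)/\Phi(g)^2$ and $|g|\varphi(g)/\Phi(g)^2$ vanish there, whereas the paper merely asserts that the derivatives are bounded on the domain. Either route gives constants $C_3,C_4$ independent of $\theta_1,\theta_2,\phi_1,\phi_2,\beta$; yours are a bit tighter since they do not involve the Lipschitz constant $M_h$ of the gradient.
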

\vspace{-6mm}

\begin{proof}
The main idea is to apply first-order approximation under Lipschitz continuity of $h_{\beta}(\cdot,\cdot)$. The detailed proof is provided in Appendix \ref{section:appendix:h beta linear approximation}.
\end{proof}
\vspace{-3mm}

\subsubsection{The HR-UCB Policy}
To begin with, we introduce an upper confidence bound based on the GLSE described in Section \ref{section: alg: regression}.
Note that the results in Theorem \ref{theorem:phi confidence} depend on the size of the set of context-outcome pairs.
Moreover, in our bandit model, the number of rounds of each user is a stopping time and can be arbitrarily large.
To address this, we propose to actively maintain a \emph{regression sample set} $\mathcal{S}$ through a function $\Gamma(t)$.
Specifically, we let the size of $\mathcal{S}$ grow at a proper rate regulated by $\Gamma(t)$. 
One example is to choose $\Gamma(t)=Kt$ for some constant $K\geq 1$. 
Since each user will play for at least one round, we know $\abs{\mathcal{S}}$ is at least $t$ after interacting with $t$ users.
We use $\mathcal{S}(t)$ to denote the regression sample set right after the departure of user $t$.
Moreover, let $\bm{X}_{t}$ be the matrix in which the rows are composed by the contexts of all the elements in $\mathcal{S}(t)$.
Similar to (\ref{equation:Vn definition}), we define $\bm{V}_t=\bm{X}^{\top}_{t}{\bm{X}_{t}}+\lambda \bm{I}_{d}$, for all $t\geq 1$.
To simplify notation, we also define
\begin{equation}
    \xi_{t}(\delta):=C_{3}\alpha^{(1)}_{\abs{\mathcal{S}(t)}}(\delta)+C_{4}\rho_{\abs{\mathcal{S}(t)}}(\delta/\abs{\mathcal{S}(t)}^{2}).
\end{equation}
For any $x\in\mathcal{X}$, we define the upper confidence bound as
\begin{align}
    {Q}^{\textrm{HR}}_{t+1}({x})&=\pch{h_{\beta_{t+1}}}\big(\widehat{\theta_t}^{\top}x, \widehat{\phi_t}^{\top}x)+\xi_t(\delta)\cdot\norm{{x}}_{ {\bm{V}}^{-1}_{t}}.\label{equation:xi definition}
\end{align}
\xl{Note that the exploration over users, guaranteeing sublinear regret under heteroscedasticity, is handled by encoding the confidence bound in $Q_{t}^{\textrm{HR}}$ so that later users with similar contexts are treated differently.}
Next, we show that $Q_{t}^{\textrm{HR}}({x})$ is indeed an upper confidence bound.
\vspace{-1mm}
\begin{algorithm}[!htbp]
  \caption{The HR-UCB Policy}
  \label{alg:UCB}
  \begin{algorithmic}[1]
  \STATE {$\mathcal{S}\leftarrow \varnothing$, action set $\mathcal{A}$, function $\Gamma(t)$, and $T$}
  \FOR{\normalfont{each user} $t=1,2,\cdots, T$}
    \STATE observe $x_{t,a}$ for all $a\in\cal{A}$ and reset $i\leftarrow 1$
    \WHILE{\normalfont{user $t$ stays}}
        \STATE $\pi_t^{(i)}=\arg\max_{{x}_{t,a}\in\mathcal{X}_{t}}{Q}^{\textrm{HR}}_{t}(x_{t,a})$ (ties are broken arbitrarily)
        \STATE apply the action $\pi_{t}^{(i)}$ and observe the outcome $r_{t}^{(i)}$ and if the reneging event occurs
        \IF{$\abs{\mathcal{S}}<\Gamma(t)$}
            \STATE $\mathcal{S}\leftarrow \mathcal{S}\cup \{(x_{t,\pi_{t}^{(i)}},r_{t}^{(i)})\}$
        \ENDIF
        \STATE $i\leftarrow i+1$
    \ENDWHILE
    \STATE update $\widehat{\theta_t}$ and $\widehat{\phi_t}$ by (\ref{equation:theta estimator})-(\ref{equation:phi estimator}) based on $\mathcal{S}$\label{lst:update theta and phi}
  \ENDFOR
\end{algorithmic}
\end{algorithm}
\vspace{-3mm}

\begin{lemma}
\label{lemma:Q is an upper confidence bound}
If the confidence set conditions (\ref{equation:theta confidence set}) and (\ref{equation:phi confidence set}) are satisfied, then for any $x\in\mathcal{X}$,
\begin{equation*}
    0\leq Q^{\emph{HR}}_{t+1}(x)-\pch{h_{\beta_{t+1}}}\big(\theta_{*}^{\top}x, \phi_{*}^{\top}x)\leq 2\xi_{t}(\delta)\norm{x}_{ {\bm{V}}_{t}^{-1}}.
\end{equation*}
\end{lemma}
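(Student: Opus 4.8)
The plan is to bound the gap $Q^{\textrm{HR}}_{t+1}(x)-h_{\beta_{t+1}}(\theta_*^{\top}x,\phi_*^{\top}x)$ by separating the parameter-estimation error from the explicit exploration bonus. Using the definition of $Q^{\textrm{HR}}_{t+1}$ in \eqref{equation:xi definition}, I would first rewrite the gap as the sum $\big(h_{\beta_{t+1}}(\widehat{\theta_t}^{\top}x,\widehat{\phi_t}^{\top}x)-h_{\beta_{t+1}}(\theta_*^{\top}x,\phi_*^{\top}x)\big)+\xi_t(\delta)\norm{x}_{\bm{V}_t^{-1}}$, so that everything reduces to controlling the difference of $h_{\beta_{t+1}}$ evaluated at the estimated versus the true parameters.

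For the upper bound I would apply Theorem \ref{theorem:h beta linear approximation} with $\bm{M}=\bm{V}_t$, $\theta_2=\widehat{\theta_t}$, $\theta_1=\theta_*$, $\phi_2=\widehat{\phi_t}$, $\phi_1=\phi_*$, giving $h_{\beta_{t+1}}(\widehat{\theta_t}^{\top}x,\widehat{\phi_t}^{\top}x)-h_{\beta_{t+1}}(\theta_*^{\top}x,\phi_*^{\top}x)\leq\big(C_3\norm{\widehat{\theta_t}-\theta_*}_{\bm{V}_t}+C_4\norm{\widehat{\phi_t}-\phi_*}_{\bm{V}_t}\big)\norm{x}_{\bm{V}_t^{-1}}$. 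Invoking the confidence-set hypotheses \eqref{equation:theta confidence set} and \eqref{equation:phi confidence set} with $n=\abs{\mathcal{S}(t)}$ (so that $\bm{V}_t=\bm{V}_{\abs{\mathcal{S}(t)}}$) bounds $\norm{\widehat{\theta_t}-\theta_*}_{\bm{V}_t}$ by $\alpha^{(1)}_{\abs{\mathcal{S}(t)}}(\delta)$ and $\norm{\widehat{\phi_t}-\phi_*}_{\bm{V}_t}$ by $\rho_{\abs{\mathcal{S}(t)}}(\delta/\abs{\mathcal{S}(t)}^2)$; by the very definition of $\xi_t(\delta)$, the bracketed quantity is then exactly $\xi_t(\delta)$. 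Adding back the bonus term $\xi_t(\delta)\norm{x}_{\bm{V}_t^{-1}}$ yields the claimed upper bound $2\xi_t(\delta)\norm{x}_{\bm{V}_t^{-1}}$.

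For the lower bound (nonnegativity) I would apply Theorem \ref{theorem:h beta linear approximation} a second time, now with the roles of the true and estimated parameters interchanged, i.e.\ $\theta_2=\theta_*$, $\theta_1=\widehat{\theta_t}$, $\phi_2=\phi_*$, $\phi_1=\widehat{\phi_t}$. Since the $\bm{V}_t$-norm is symmetric, the same confidence-set bounds give $h_{\beta_{t+1}}(\theta_*^{\top}x,\phi_*^{\top}x)-h_{\beta_{t+1}}(\widehat{\theta_t}^{\top}x,\widehat{\phi_t}^{\top}x)\leq\xi_t(\delta)\norm{x}_{\bm{V}_t^{-1}}$, which says precisely that the bonus dominates the downward estimation error, hence $Q^{\textrm{HR}}_{t+1}(x)\geq h_{\beta_{t+1}}(\theta_*^{\top}x,\phi_*^{\top}x)$.

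The step I expect to be the main obstacle is verifying that Theorem \ref{theorem:h beta linear approximation} is actually applicable at the estimated parameters: the theorem requires $\norm{\widehat{\theta_t}}\leq 1$ and $\norm{\widehat{\phi_t}}\leq L$, so that the arguments $(\widehat{\theta_t}^{\top}x,\widehat{\phi_t}^{\top}x)$ lie in the domain on which $h_{\beta_{t+1}}$ and its Lipschitz estimate are defined. The raw GLSE estimators of \eqref{equation:theta estimator}--\eqref{equation:phi estimator} need not satisfy these norm constraints, so the clean fix is to interpret $\widehat{\theta_t},\widehat{\phi_t}$ as the $\bm{V}_t$-norm projections onto the balls $\{\norm{\theta}\leq 1\}$ and $\{\norm{\phi}\leq L\}$. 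Because such a projection onto a convex set containing $\theta_*$ (resp.\ $\phi_*$) is non-expansive in the $\bm{V}_t$-norm, the confidence bounds \eqref{equation:theta confidence set} and \eqref{equation:phi confidence set} are preserved under this projection, and the remainder of the argument then goes through verbatim.
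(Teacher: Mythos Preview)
Your proof is essentially identical to the paper's: apply Theorem~\ref{theorem:h beta linear approximation} with $\bm{M}=\bm{V}_t$ to bound the $h_{\beta_{t+1}}$ difference by $\xi_t(\delta)\norm{x}_{\bm{V}_t^{-1}}$, add the bonus for the upper bound, and swap the roles of $(\theta_*,\phi_*)$ and $(\widehat{\theta_t},\widehat{\phi_t})$ for the lower bound. Your additional care about the norm constraints $\norm{\widehat{\theta_t}}\leq 1$, $\norm{\widehat{\phi_t}}\leq L$ needed to invoke Theorem~\ref{theorem:h beta linear approximation} (and the projection fix you propose) is a legitimate technical point that the paper's own proof simply glosses over.
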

\vspace{-6mm}
\begin{proof}
The proof is provided in Appendix \ref{section:appendix:Q is an upper confidence bound}.
\end{proof}
\vspace{-4mm}

Now, we formally introduce the HR-UCB algorithm.
\vspace{-4mm}

\begin{itemize}[leftmargin=*]
    \item For each user $t$, HR-UCB observes the contexts of all available actions and then chooses an action based on the indices $Q_{t}^{\textrm{HR}}$ that depend on $\widehat{\theta}_t$ and $\widehat{\phi}_t$. To derive these estimators by (\ref{equation:theta estimator}) and (\ref{equation:phi estimator}), HR-UCB actively maintains a sample set $\mathcal{S}$, whose size is regulated by a function $\Gamma(t)$.
    \vspace{-2mm}
    \item After applying an action, HR-UCB observes the corresponding outcome and the reneging event if any. The current context-outcome pair will be added to $\mathcal{S}$ only if the size of $\mathcal{S}$ is less than $\Gamma(t)$.
    \vspace{-2mm}
    \item Based on the regression sample set $\mathcal{S}$, HR-UCB updates $\widehat{\theta}_t$ and $\widehat{\phi}_t$ right after the departure of each user.
\end{itemize}
\vspace{-2mm}

The complete algorithm is shown in Algorithm \ref{alg:UCB}.

\vspace{-3mm}
\begin{remark}
\normalfont \pch{In Algorithm \ref{alg:UCB}}, $\widehat{\theta}_{t}$ and $\widehat{\phi}_{t}$ are updated right after the departure of each user.
Alternatively, $\widehat{\theta}_{t}$ and $\widehat{\phi}_{t}$ can be updated whenever $\mathcal{S}$ is updated.
While this alternative makes slightly better use of the observations, it also incurs more computation overhead. 
For ease of exposition, we focus on the "lazy-update" version presented in Algorithm \ref{alg:UCB}.
\end{remark}
\vspace{-3mm}

\subsection{Regret Analysis}
\label{section:alg:analysis}
In this section, we provide regret analysis for HR-UCB.

\vspace{-3mm}
\begin{theorem}
\label{theorem:regret}
Under HR-UCB, with probability at least $1-\delta$, the pseudo regret is upper bounded as
\begin{align}
    &\emph{Regret}_{T}\leq \sqrt{8\pch{\xi_{T}^{2}\big(\frac{\delta}{3}\big)}T\cdot d\log\Big(\frac{T+\lambda d}{\pch{\lambda d}}\Big)}\label{equation:regret_upper}\\
    &=\mathcal{O}\bigg(\sqrt{T\log \pch{\Gamma(T)}\cdot\Big(\log\big(\Gamma(T)\big)+\log(\frac{1}{\delta})\Big)^{2}}\bigg).\label{equation:regret}
\end{align}
By choosing $\Gamma(T)=KT$ with a constant $K>0$, we have
\begin{equation}
    \emph{Regret}_{T}=\mathcal{O}\bigg(\sqrt{T\log T\cdot\Big(\log T+\log(\frac{1}{\delta})\Big)^{2}}\bigg).
\end{equation}
\end{theorem}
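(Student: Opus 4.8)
The plan is to run the standard optimism-based regret decomposition, with two non-standard twists: the per-user reward is the inverse-CDF lifetime $h_{\beta_t}$ rather than a linear payoff, and the design matrix $\bm{V}_t$ accumulates a random, budget-regulated number of repeated contexts per user. First I would fix a ``good event'' on which both confidence sets hold. Invoking Lemma~\ref{lemma:theta confidence} and Theorem~\ref{theorem:phi confidence} with the confidence parameters split so that a union bound over their failure events totals at most $\delta$ -- this is precisely what produces the argument $\delta/3$ appearing in $\xi_T(\delta/3)$ -- guarantees that the hypotheses of Lemma~\ref{lemma:Q is an upper confidence bound} hold simultaneously for every user. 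Everything that follows is deterministic on this event.

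Second, I would reduce the per-user regret to a single $h_\beta$ gap. Because $\widehat{\theta}_t,\widehat{\phi}_t$ (and hence $Q^{\mathrm{HR}}_{t}$) are held fixed across the rounds of user $t$, the one-step-optimality argument of Section~\ref{section: alg: UCB: oracle} shows both $\pi^{\mathrm{oracle}}$ and HR-UCB play a single fixed context per user; thus $\overline{R}^{*}_{t}-\overline{R}^{\pi_t}_{t}=h_{\beta_t}(\theta_*^\top x_t^*,\phi_*^\top x_t^*)-h_{\beta_t}(\theta_*^\top x_t,\phi_*^\top x_t)$, where $x_t$ is the context HR-UCB selects. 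The usual three-step chain then applies: optimism ($h_{\beta_t}(\theta_*^\top x_t^*,\cdot)\le Q^{\mathrm{HR}}_t(x_t^*)$ from the lower bound in Lemma~\ref{lemma:Q is an upper confidence bound}), greedy selection ($Q^{\mathrm{HR}}_t(x_t^*)\le Q^{\mathrm{HR}}_t(x_t)$), and the width bound ($Q^{\mathrm{HR}}_t(x_t)-h_{\beta_t}(\theta_*^\top x_t,\cdot)\le 2\xi_{t-1}(\delta/3)\norm{x_t}_{\bm{V}_{t-1}^{-1}}$, which internally packages the Lipschitz approximation of Theorem~\ref{theorem:h beta linear approximation} through the constants $C_3,C_4$). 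This yields the per-user bound $\overline{R}^{*}_{t}-\overline{R}^{\pi_t}_{t}\le 2\xi_{t-1}(\delta/3)\norm{x_t}_{\bm{V}_{t-1}^{-1}}$.

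Third, I would sum over $t=1,\dots,T$ and control the sum of widths. Since $\abs{\mathcal{S}(t)}$ is non-decreasing and both $\alpha^{(1)}_n$ and $\rho_n(\delta/n^2)$ grow in $n$, $\xi_t$ is monotone, so I can pull $\xi_T(\delta/3)$ out of the sum and apply Cauchy--Schwarz to obtain $\mathrm{Regret}_T \le 2\xi_T(\delta/3)\sqrt{T\sum_{t=1}^{T}\norm{x_t}_{\bm{V}_{t-1}^{-1}}^2}$. The key technical point is the elliptical-potential step. Here HR-UCB appends $n_t\ge 1$ identical copies of $x_t$ between $\bm{V}_{t-1}$ and $\bm{V}_t$ (all rounds of user $t$ share one context), so the rank-one determinant formula gives $\det\bm{V}_t/\det\bm{V}_{t-1}=1+n_t\norm{x_t}_{\bm{V}_{t-1}^{-1}}^2\ge 1+\norm{x_t}_{\bm{V}_{t-1}^{-1}}^2$; telescoping together with $z\le 2\log(1+z)$ on $[0,1]$ (valid since $\norm{x_t}_{\bm{V}_{t-1}^{-1}}^2\le 1/\lambda\le 1$) then yields $\sum_t \norm{x_t}_{\bm{V}_{t-1}^{-1}}^2\le 2\log(\det\bm{V}_T/\det(\lambda\bm{I}_d))\le 2d\log\frac{\lambda d+N_T}{\lambda d}$, where $N_T=\abs{\mathcal{S}(T)}\le\Gamma(T)$ is the number of stored samples. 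That $n_t\ge 1$ -- each user contributes at least one sample because $\Gamma(t)$ strictly exceeds $\Gamma(t-1)$ -- is exactly what lets the multiplicities help rather than hurt.

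Finally, substituting the determinant bound gives~(\ref{equation:regret_upper}). For the rate, I would read off from Lemma~\ref{lemma:theta confidence} and Theorem~\ref{theorem:phi confidence} that $\alpha^{(1)}_{N_T}=\mathcal{O}(\sqrt{\log\Gamma(T)+\log(1/\delta)})$ and $\rho_{N_T}(\delta/N_T^2)=\mathcal{O}(\log\Gamma(T)+\log(1/\delta))$, so $\xi_T(\delta/3)=\mathcal{O}(\log\Gamma(T)+\log(1/\delta))$ (the $\rho$ term, linear in the log, dominates), and the bound collapses to~(\ref{equation:regret}); taking $\Gamma(T)=KT$ gives the stated $\mathcal{O}(\sqrt{T\log T(\log T+\log(1/\delta))^2})$ rate. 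I expect the elliptical-potential step with repeated, budget-capped contexts to be the main obstacle, since it is the only place where the reneging/stopping-time structure (variable rounds per user, regression set grown at rate $\Gamma$) genuinely interacts with the linear-bandit machinery; the remaining steps are the textbook UCB argument once Lemma~\ref{lemma:Q is an upper confidence bound} is granted.
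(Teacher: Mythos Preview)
Your proposal is correct and follows essentially the same optimism-then-elliptical-potential argument as the paper's own proof: fix the good event via Lemma~\ref{lemma:theta confidence} and Theorem~\ref{theorem:phi confidence}, reduce each user's regret to a single $h_{\beta_t}$ gap, chain optimism/greedy/width via Lemma~\ref{lemma:Q is an upper confidence bound}, and finish with Cauchy--Schwarz plus the elliptical-potential (log-determinant) bound. Your handling of the elliptical-potential step---explicitly tracking the $n_t\ge 1$ repeated copies of $x_t$ through $\det\bm{V}_t/\det\bm{V}_{t-1}=1+n_t\norm{x_t}_{\bm{V}_{t-1}^{-1}}^2\ge 1+\norm{x_t}_{\bm{V}_{t-1}^{-1}}^2$---is in fact more careful than the paper, which invokes the Abbasi-Yadkori lemma as though $\bm{V}_t=\lambda\bm{I}_d+\sum_{i\le t}x_ix_i^\top$ with exactly one sample per user.
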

\vspace{-6mm}

\begin{proof}
The proof is provided in Appendix \ref{section:appendix:regret}.
\end{proof}
\vspace{-3mm}

Theorem \ref{theorem:regret} presents a high-probability regret bound.
To derive an expected regret bound, we can set $\delta=1/T$ in (\ref{equation:regret}) and get ${{\mathcal{O}}(\sqrt{T(\log T)^{3}})}$.
Also note that the upper bound (\ref{equation:regret_upper}) depends on $\sigma_{\max}$ only through the pre-constant of $\xi_T$. 

\vspace{-8mm}
\xl{\begin{remark}
\normalfont A policy that always assumes $\sigma_{\max}$ as variance tends to choose the action with the largest mean reward since it implies a smaller reneging probability. As a result, such type of policy incurs linear regret. This will be further demonstrated via simulations in Section \ref{section:simulation}.
\end{remark}}

\vspace{-10mm}
\xl{
\begin{remark}
\normalfont The regret proof still goes through for sub-Gaussian noise by (a) reusing the same sub-exponential concentration inequality in Lemma \ref{section:appendix:inner product difference} since the square of an sub-Gaussian distribution is sub-exponential, (b) replacing the Gaussian concentration inequality in Lemma \ref{section:appendix:upper bound for second error term} with a sub-Gaussian one, and (c) deriving ranges of the first two derivatives of sub-Gaussian CDF.
\end{remark}
}
\vspace{-3mm}

\xl{
\noindent \textbf{Remark 6} The assumption that $\beta_t$ is known can be relaxed to the case where only the distribution of $\beta_t$ is known. The analysis can be adapted to this case by (a) rewriting the reneging probability in (\ref{equation: probability}) and $h_\beta(u,v)$ in (\ref{equation:hx definition}) via integration over distribution of $\beta_t$, (b) deriving the corresponding expected lifetime under oracle policy in (\ref{equation:oracle per-user reward}), and (c) reusing Theorem \ref{theorem:phi confidence} and Lemma \ref{lemma:theta confidence} as the GLSE does not rely on the knowledge of $\beta_t$.
}
\vspace{-1mm}

\noindent \textbf{Remark 7} We briefly discuss the difference between our regret bound and the regret bounds of other related settings.
Note that if the satisfaction level $\beta_t=\infty$ for all $t$, then all the users will quit after exactly one round.
This corresponds to the conventional contextual bandits setting (e.g. homoscedastic case \cite{chu2011contextual} and heteroscedastic case \cite{kirschner2018information}).
In this degenerate case, our regret bound is $\mathcal{O}(\sqrt{T(\log T)}\cdot\log T)$, which has an additional factor $\log T$ resulting from the heteroscedasticity.

\vspace{0mm}

\section{Simulation Results}
\label{section:simulation}
\vspace{0mm}

In this section, we evaluate the performance of HR-UCB.
We consider 20 actions available to the decision maker. 
For simplicity, the context of each user-action pair is designed to be a four-dimensional vector, which is drawn uniformly at random from a unit ball.
For the mean and variance of the outcome distribution, we set $\theta_{*}=[0.6, 0.5, 0.5, 0.3]^{\top}$ and $\phi_{*}=[0.5, 0.2, 0.8, 0.9]^{\top}$, respectively.
We consider the function $f(x)={x + L}$ with $L=2$ and $M_{f}=1$. 
The acceptance level of each user is drawn uniformly at random from the interval $[-1,1]$.
We set $T=30000$ throughout the simulations.
For HR-UCB, we set $\delta = 0.1$ and $\lambda=1$.
All the results in this section are the average of 20 simulation trials.
Recall that $K$ denotes the growth rate of the regression sample set for HR-UCB. We start by evaluating the pseudo regrets of HR-UCB under different $K$, as shown in Figure \ref{fig:HR-UCB regret}.
Note that HR-UCB achieves a sublinear regret regardless of $K$.
The effect of $K$ is only reflected when the number of users is small.
Specifically, a smaller $K$ induces a slightly higher regret since it requires more users in order to accurately learn the parameters.
Based on Figure \ref{fig:HR-UCB regret}, we set $K=5$ for the rest of the simulations.

Next, we compare the HR-UCB policy with the well-known LinUCB policy \cite{li2010contextual} and the CMDP policy~\cite{modi2018markov}. Figure~\ref{fig:regret comparison} shows the pseudo regrets under LinUCB, CMDP and HR-UCB. LinUCB achieves a linear regret because it does not take into account the heteroscedasticity of the outcome distribution in the existence of reneging. For each user, LinUCB simply chooses the action with the largest predicted mean of the outcome distribution. 
\xl{The regret attained by CMDP policy also appears linear. This is because CMDP handles contexts by partitioning the context space and then learning each partition-induced MDP separately. 
Due to the continuous context space, the CMDP policy requires numerous partitions as well as plentiful exploration for all MDPs. 
To make the comparison more fair, we consider a simpler setting with a discrete context space of size 10 and only 2 actions (with other parameters unchanged). In this setting, Figure~\ref{fig:comparison_hrucb_emdp_new} shows that the regret attained by CMDP is still much larger than that by HR-UCB and thereby shows the advantage of the proposed solution.}


\xl{As mentioned in Section~\ref{section:alg:analysis}, a policy (denoted by $\sigma_{\max}$-UCB) that always assumes $\sigma_{\max}$ as variance tends to choose the action with the largest mean and thus incurs linear regret. We demonstrate the statement in experiments shown by Figure~\ref{fig:comparison_sgimamax_hrucb_new}, where the $\sigma_{\max}$-UCB policy attains a linear regret vs. HR-UCB achieves a sublinear and much smaller regret.} Through simulations, we validate that HR-UCB achieves the regret performance as discussed in Section \ref{section:alg}.

\vspace{-6mm}

\begin{figure}[!htbp]
    \centering
    \subfloat[Pseudo regrets: HR-UCB with different $K$.]{
    \includegraphics[width=0.22\textwidth]{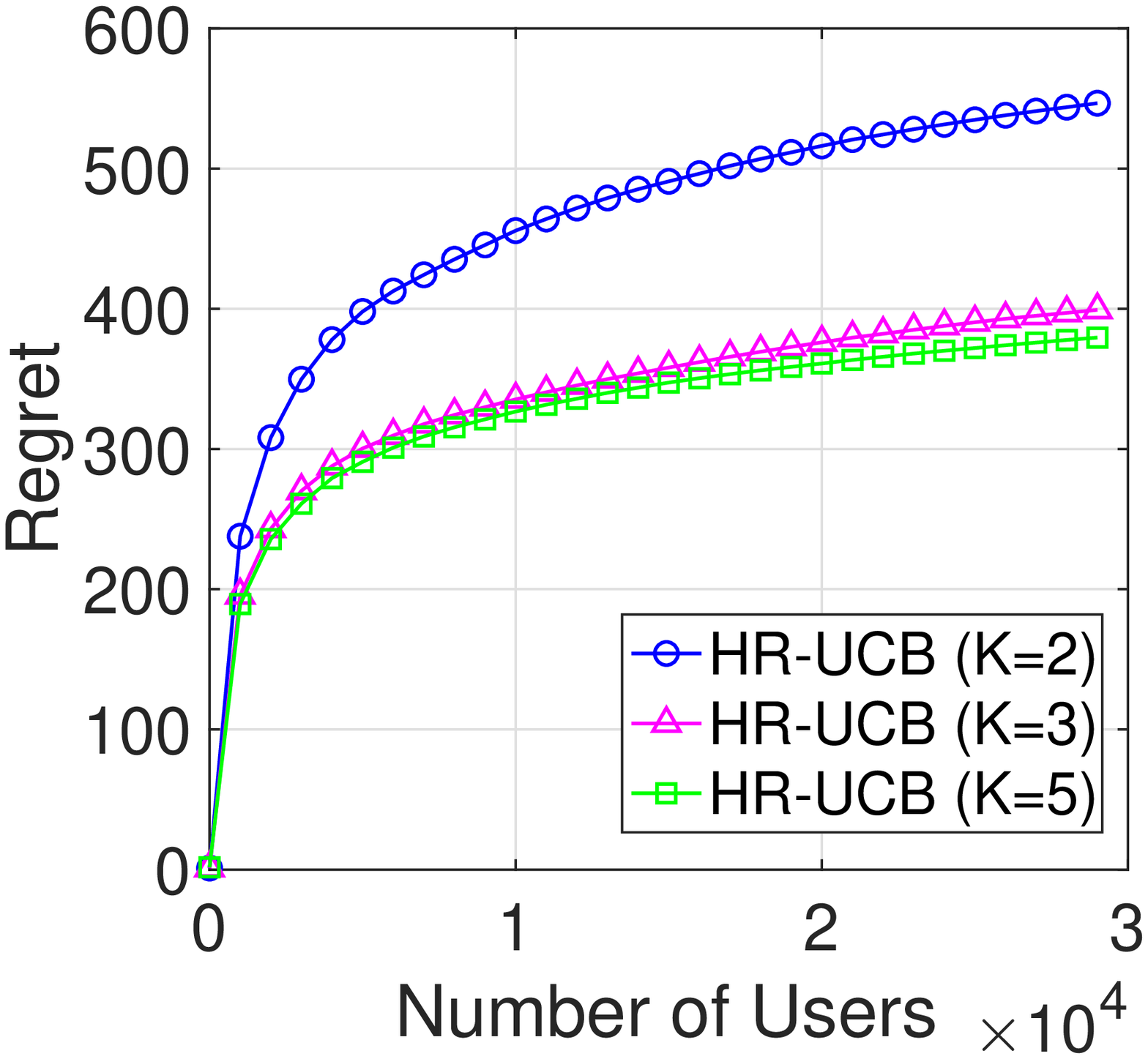}
    \label{fig:HR-UCB regret}}
    \subfloat[Pseudo regrets: LinUCB, CMDP and HR-UCB ($K=5$).]{
    \includegraphics[width=0.22\textwidth]{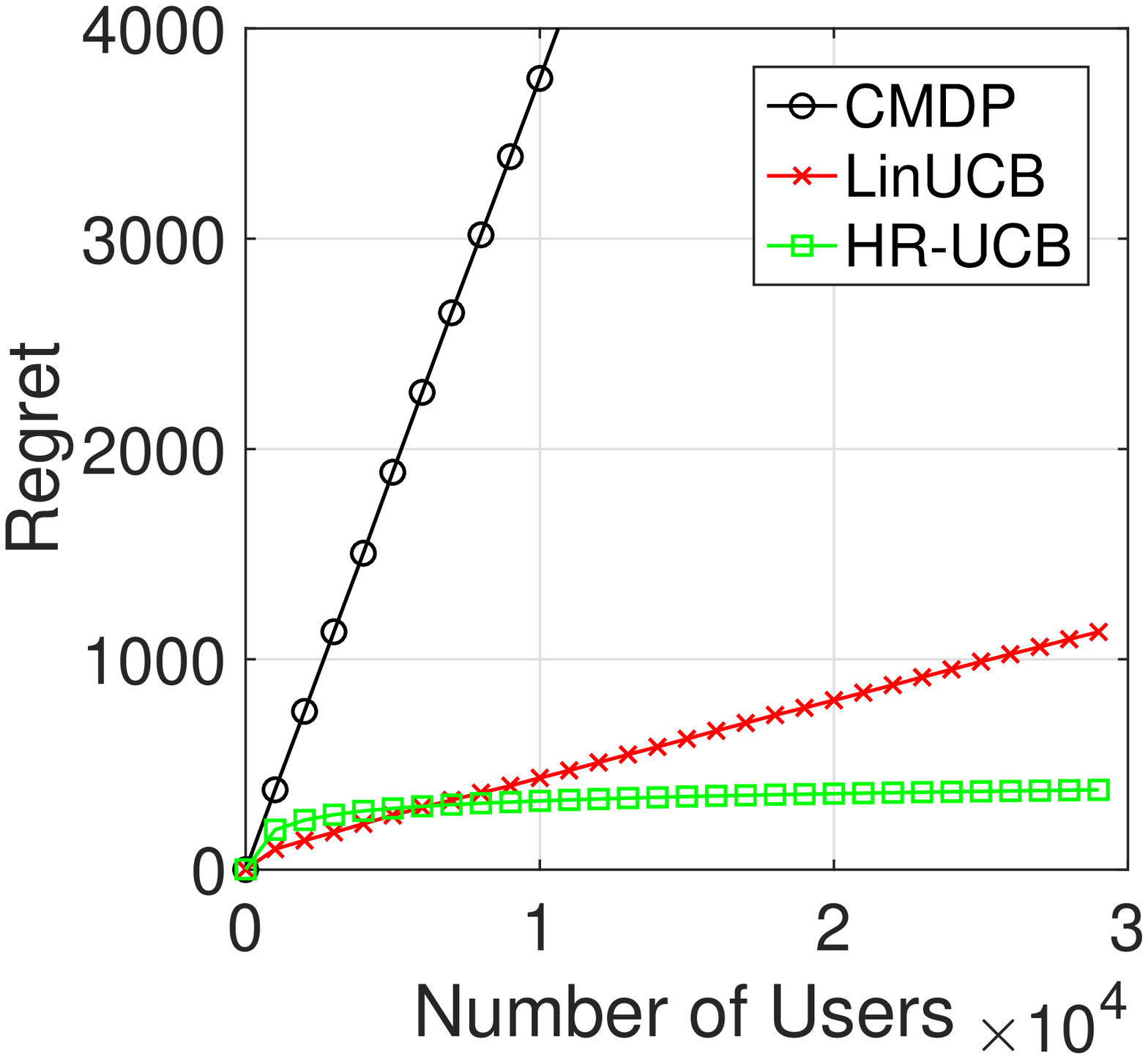}
    \label{fig:regret comparison}}\\[-3mm]
    \subfloat[Pseudo regrets: $\sigma_{\max}$-UCB and HR-UCB ($K=5$).]{
    \includegraphics[width=0.22\textwidth]{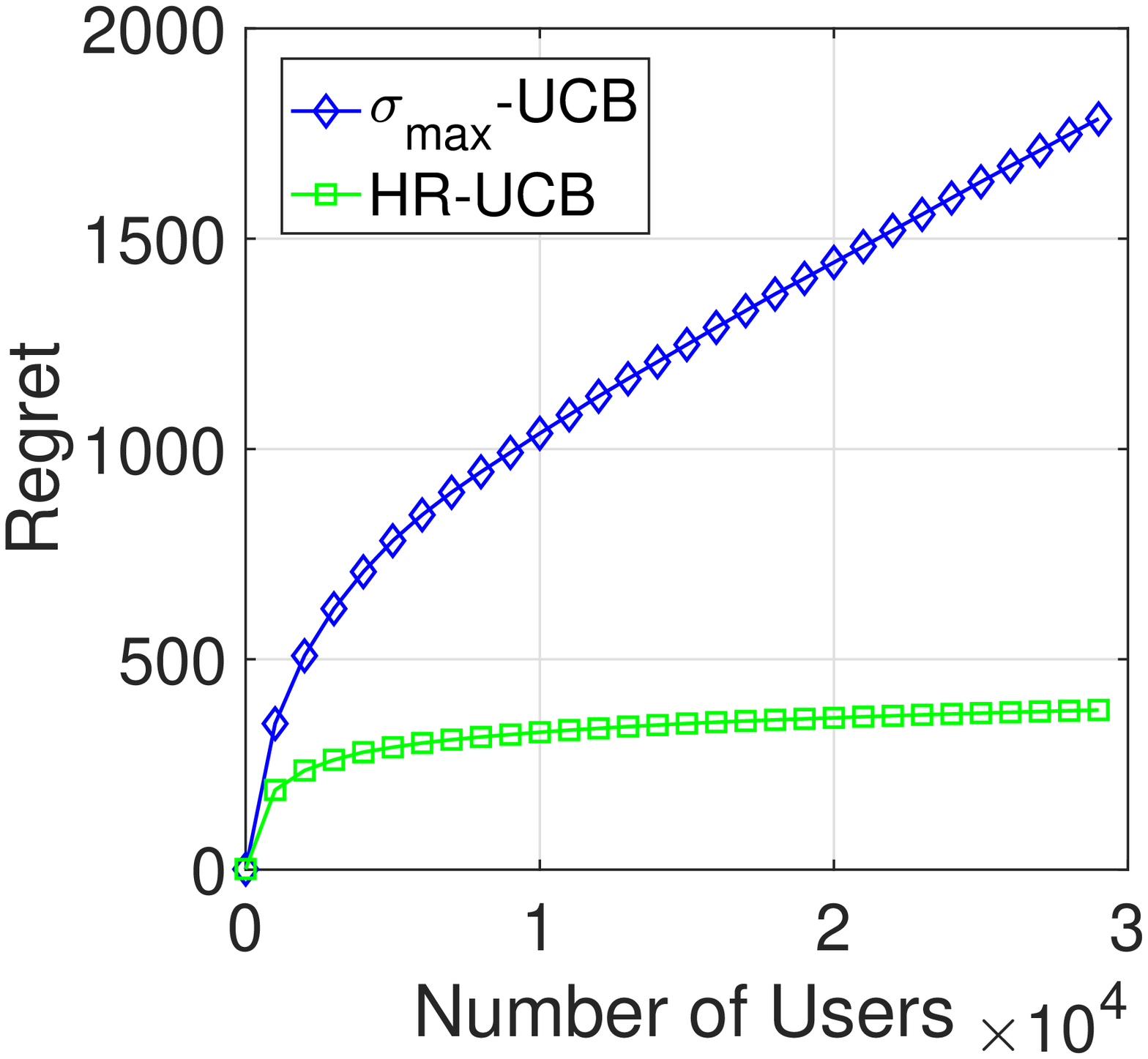}
    \label{fig:comparison_sgimamax_hrucb_new}}
    \subfloat[Pseudo regrets: CMDP and HR-UCB ($K=5$).]{
    \includegraphics[width=0.22\textwidth]{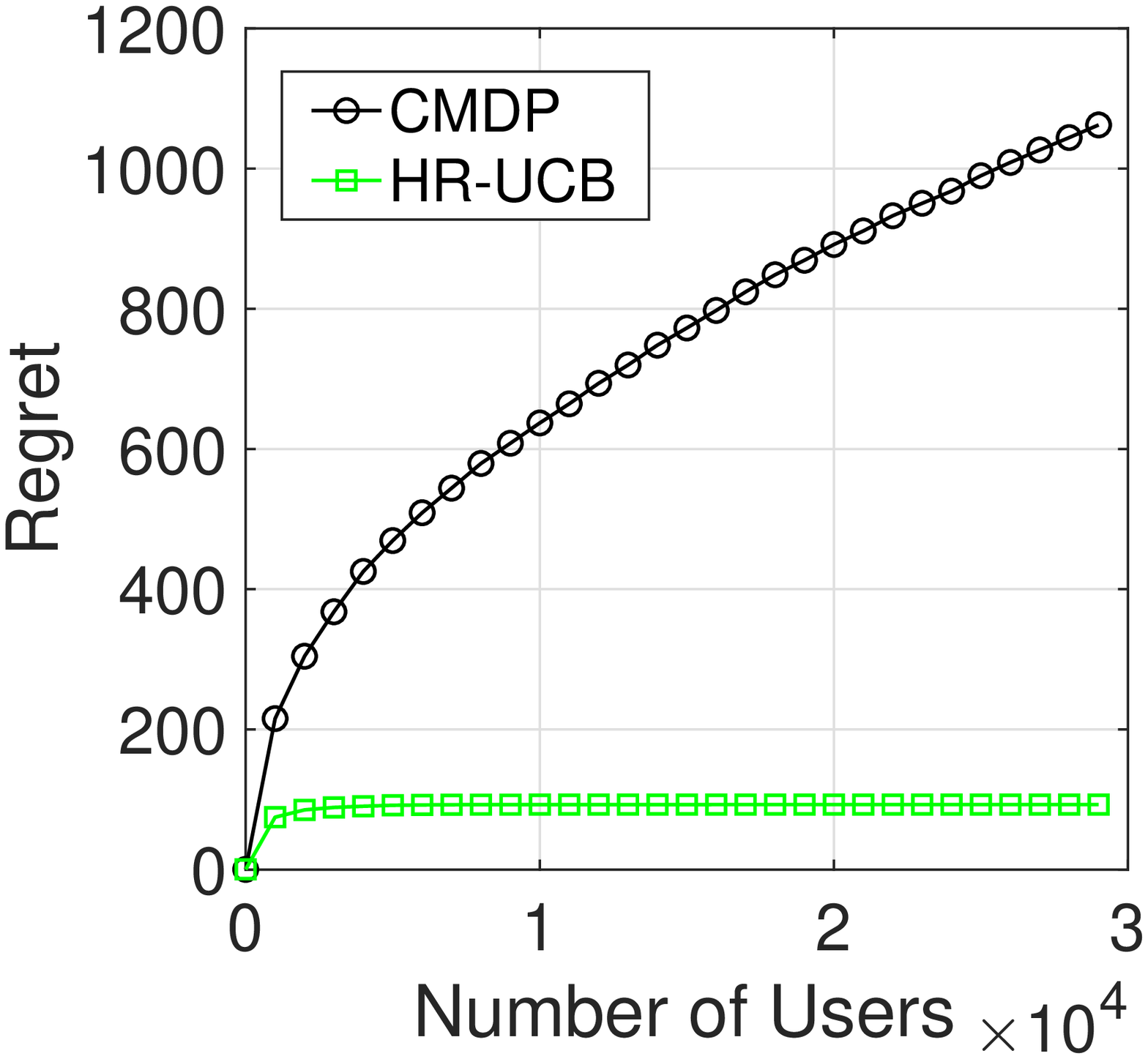}
    \label{fig:comparison_hrucb_emdp_new}}
    \vspace{-2mm}
    \caption{Comparison of pseudo regrets.}
\end{figure}
\vspace{-3mm}

\vspace{-2mm}

\section{Concluding Remarks} \label{section:concluding_remarks}
\vspace{0mm}

There are several ways to extend the studies in this paper. First, the techniques used to estimate heteroscedastic variance and establishing sub-linear regret under the presence of heteroscedasticity can be extended to other variance-sensitive bandit problems, e.g., risk-averse bandits and thresholding bandits. \xl{Second, the studies can be easily adapted to another objective - maximizing total collected rewards by: (a) replacing $h_{\beta}(u,v)$ in (\ref{equation:oracle per-user reward}) with $\hat{h}_{\beta}(u,v)=u\cdot h_{\beta}(u,v)$, (b) reusing Theorem \ref{theorem:phi confidence} and Lemma \ref{lemma:theta confidence}, and (c) making minor changes to constants $C_3$, $C_4$ in (\ref{equation:h beta approx theorem 2}). Third, another promising extension is to use active-learning to update sample set $\mathcal{S}$~\cite{riquelme2017online}.
To provide theoretical guarantees, these active-learning approaches often assume that arriving contexts are i.i.d. In contrast, since that assumption can be easily invalid (e.g., it is adversarial), we establish the regret bound without making any such assumption. Finally, in this paper, the problem of knowledge transfer across users is given more importance than learning for a single user. This is because, compared to the population of potential users, a user's lifetime is mostly short. Therefore, another possible extension is to take into account the exploration during the lifetime of each individual user.}

\section*{Acknowledgements}

\xl{This material is based upon work partially supported by NSF under Science \& Technology Center Grant CCF-0939370, and Texas A\&M University under the President’s Excellence Funds X Grants Program. We would like to thank all reviewers and Dr. P. S. Sastry for their insightful suggestions!}


\bibliography{reference}
\bibliographystyle{icml2019}

\appendix
\clearpage
\twocolumn[
\icmltitle{{Supplementary Material:}\\Stay With Me: Lifetime Maximization Through \\ Heteroscedastic Linear Bandits With Reneging}
]
\section{Appendix}
\label{section:appendix}

\subsection{Proof of Lemma \ref{lemma:inner product difference}}
\label{section:appendix:inner product difference}
\begin{proof}
Recall that $\bm{V}_n=\big(\bm{X}^{\top}_{n}{\bm{X}_{n}}+\lambda \bm{I}_{d}\big)$. Note that
\begin{align}
    \widehat{\phi}_{n}&=(\bm{X}_{n}^{\top}\bm{X}_{n}+\lambda I_{d})^{-1}\bm{X}_{n}^{\top}f^{-1}(\widehat{\varepsilon}\circ\widehat{\varepsilon})\\
    &=\bm{V}_n^{-1}\bm{X}_{n}^{\top}f^{-1}(\widehat{\varepsilon}\circ\widehat{\varepsilon})\\
    &=\bm{V}_n^{-1}\bm{X}_{n}^{\top}\big(f^{-1}(\widehat{\varepsilon}\circ\widehat{\varepsilon})-\bm{X}_{n}\phi_{*}+\bm{X}_{n}\phi_{*}\big)\\
    &\hspace{20pt}+\lambda \bm{V}_n^{-1}\phi_{*}-\lambda \bm{V}_n^{-1}\phi_{*}\\
    &=\bm{V}_n^{-1}\bm{X}_{n}^{\top}\big(f^{-1}(\widehat{\varepsilon}\circ\widehat{\varepsilon})-\bm{X}_{n}\phi_{*}\big)-\lambda \bm{V}_n^{-1}\phi_{*}+\phi_{*}.
\end{align}
Therefore, for any $x\in\mathbb{R}^{d}$, we know
\begin{align}
    &\lvert x^{\top}\widehat{\phi}_{n}-x^{\top}\widehat{\phi}_{*}\rvert\label{equation:inner product difference 1 in appendix}\\
    &=\lvert x^{\top}\bm{V}_n^{-1}\bm{X}_{n}^{\top}\big(f^{-1}(\widehat{\varepsilon}\circ\widehat{\varepsilon})-\bm{X}_{n}\phi_{*}\big)-\lambda x^{\top}\bm{V}_n^{-1}\phi_{*}\rvert\label{equation:inner product difference 2 in appendix}\\
    &\leq \norm{x}_{{\bm{V}_n}^{-1}}\Big(\lambda\norm{\phi_{*}}_{\pch{\bm{V}_n^{-1}}}\\
    &\hspace{36pt}+\norm{\bm{X}_{n}^{\top}\big(f^{-1}(\widehat{\varepsilon}\circ\widehat{\varepsilon})- \bm{X}_{n}\phi_{*})\big)}_{{{ {\bm{V}}_{n}}^{-1}}}\Big).\label{equation:inner product difference 3 in appendix}
\end{align}
Moreover, by rewriting $\widehat{\varepsilon}=\widehat{\varepsilon}-\varepsilon+\varepsilon$, we have
\begin{align}
    &f^{-1}(\widehat{\varepsilon}\circ\widehat{\varepsilon})\label{equation:epsilon hat 1}\\
    &=f^{-1}\big(({\widehat{\varepsilon}-\varepsilon+\varepsilon})\circ({\widehat{\varepsilon}-\varepsilon+\varepsilon})\big)\label{equation:epsilon hat 2}\\
    &\pch{= f^{-1}({\varepsilon}\circ{\varepsilon})+M_{f}^{-1}\Big(2\big(\varepsilon\circ \bm{X}_{n}(\theta_{*}-\widehat{\theta}_{n})\big)}\label{equation:epsilon hat 3}\\
    &\hspace{36pt}+\big(\bm{X}_{n}(\theta_{*}-\widehat{\theta}_{n})\circ \bm{X}_{n}(\theta_{*}-\widehat{\theta}_{n})\big)\Big),\label{equation:epsilon hat 4}
\end{align}
where (\ref{equation:epsilon hat 3})-(\ref{equation:epsilon hat 4}) follow from the fact that \pch{both $f(\cdot)$ and $f^{-1}(\cdot)$ are linear with a slope $M_f$ and $M_f^{-1}$, respectively, as described in Section \ref{section:problem}}. 
Therefore, by (\ref{equation:inner product difference 1 in appendix})-(\ref{equation:epsilon hat 4}) and the Cauchy-Schwarz inequality, we have
\begin{align}
    &\lvert x^{\top}\widehat{\phi}_{n}-x^{\top}\widehat{\phi}_{*}\rvert\leq \norm{x}_{{\bm{V}_n}^{-1}}\Big\{\lambda\norm{\phi_{*}}_{\bm{V}_n^{-1}}\label{equation:inner product difference 4 in appendix}\\
    &\hspace{20pt}+\norm{\bm{X}_{n}^{\top}\big(f^{-1}(\varepsilon\circ\varepsilon)- \bm{X}_{n}\phi_{*})\big)}_{{{ {\bm{V}}_{n}}^{-1}}}\label{equation:inner product difference 5 in appendix}\\
    &\hspace{20pt}+\pch{2M_{f}^{-1}}\norm{\bm{X}_{n}^{\top}\big(\varepsilon\circ \bm{X}_{n}(\theta_{*}-\widehat{\theta}_{n})\big)}_{{{ {\bm{V}}_{n}}^{-1}}}\label{equation:inner product difference 6 in appendix}\\
    &\hspace{20pt}+\pch{M_{f}^{-1}}\norm{\bm{X}_{n}^{\top}\big(\bm{X}_{n}(\theta_{*}-\widehat{\theta}_{n})\circ \bm{X}_{n}(\theta_{*}-\widehat{\theta}_{n})\big)}_{{{ {\bm{V}}_{n}}^{-1}}}\Big\}.\label{equation:inner product difference 7 in appendix} 
\end{align}
\end{proof}

\subsection{Proof of Lemma \ref{lemma:upper bound for first error term}}
\label{section:appendix:upper bound for first error term}
We first introduce the following useful lemmas.
\begin{lemma}[Lemma 8.2 in \cite{erdHos2012bulk}]
\label{lemma:Erdos lemma}
Let $\{a_{i}\}_{i=1}^{N}$ be $N$ independent random complex variables with zero mean and variance $\sigma^{2}$ and having uniform sub-exponential decay, i.e., there exists $\kappa_1,\kappa_2>0$ such that
\begin{align}
    {\mathbb{P}}\{\abs{a_{i}}\geq x^{\kappa_1}\}\leq \kappa_2 e^{-x}.
\end{align}
We use $a^\emph{{H}}$ to denote the conjugate transpose of $a$.
Let $a=(a_1,\cdots,a_N)^{\top}$, let $\xbar{a_i}$ denote the complex conjugate of $a_i$, for all $i$, and let $\bm{B}=(B_{ij})$ be a complex $N\times N$ matrix.
Then, we have
\begin{align}
    &{\mathbb{P}}\Big\{\abs{a^{\emph{H}}\bm{B}a-\sigma^{2}\emph{tr}(\bm{B})}\geq s \sigma^{2}\Big(\sum_{i=1}^{N}\abs{{B}_{ii}}^{2}\Big)^{-1/2}\Big\}\\
    &\hspace{24pt}\leq C_1 \emph{exp}\Big(-C_2\cdot s^{{1}/{(1+\kappa_1)}}\Big),
\end{align}
where $C_1$ and $C_2$ are positive constants that depend only on $\kappa_1,\kappa_2$. 
Moreover, for the standard $\chi_{1}^{2}$-distribution, $\kappa_1=1$ and $\kappa_2=2$. 
\end{lemma}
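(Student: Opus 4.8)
The plan is to prove this large-deviation bound by the \emph{moment method}: control high even moments of the centered quadratic form and then apply Markov's inequality, optimizing over the moment order. First I would convert the sub-exponential decay hypothesis into polynomial moment growth. Integrating the tail $\mathbb{P}\{\abs{a_i}\geq x^{\kappa_1}\}\leq \kappa_2 e^{-x}$ (substituting $u=t^{1/\kappa_1}$ after writing $\mathbb{E}\abs{a_i}^{k}=\int_0^\infty k t^{k-1}\mathbb{P}\{\abs{a_i}\geq t\}\,dt$) yields moment bounds of the form $\mathbb{E}\abs{a_i}^{k}\leq (Ck^{\kappa_1})^{k}$ for all $k$, with $C$ depending only on $\kappa_1,\kappa_2$. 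This single estimate is the only feature of the distribution the argument uses; specializing to the $\chi_1^{2}$ law, whose tail is exponential, recovers the case $\kappa_1=1$, $\kappa_2=2$.

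Next I would split the fluctuation into a diagonal and an off-diagonal piece,
\begin{equation}
a^{\mathrm{H}}\bm{B}a-\sigma^{2}\mathrm{tr}(\bm{B}) = \underbrace{\sum_{i}\bkt*{\abs{a_i}^{2}-\sigma^{2}}B_{ii}}_{D} + \underbrace{\sum_{i\neq j}\xbar{a_i}\,B_{ij}\,a_j}_{Z},
\end{equation}
and allocate half of the allowed deviation to each via a union bound. The diagonal term $D$ is a sum of independent, centered variables $(\abs{a_i}^2-\sigma^2)B_{ii}$; since $\abs{a_i}^2$ inherits a stretched-exponential tail from the hypothesis, a standard moment estimate for sums of independent variables (Marcinkiewicz--Zygmund, then Markov and optimization over the moment order) produces a stretched-exponential tail no slower than the one coming from $Z$, the two exponents coinciding in the exponential-tail regime $\kappa_1=1$ relevant to the application.

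The heart of the argument, and the step I expect to be the main obstacle, is the off-diagonal term $Z$. Here I would compute $\mathbb{E}\abs{Z}^{2p}$ by expanding the $2p$-fold product and invoking independence together with $\mathbb{E}a_i=0$: only index patterns in which every index appears at least twice survive, so the expectation reduces to a weighted sum, over such patterns, of products of the moments $\mathbb{E}\abs{a_i}^{2m}\leq (Cm^{\kappa_1})^{2m}$ and of the matrix entries $B_{ij}$. There are two competing sources of $p$-growth to track: the combinatorial count of admissible index patterns (growing like $p^{p}$), and the high-multiplicity moment factors $m^{\kappa_1 m}$ invoked whenever an index is used with multiplicity larger than two (contributing an extra $p^{\kappa_1 p}$). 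Organizing the expansion by the multiplicity profile of the indices and bounding each profile, while collecting the $B_{ij}$ into exactly the norm of $\bm{B}$ appearing in the threshold, gives a moment estimate of the shape $\mathbb{E}\abs{Z}^{2p}\leq \bkt*{C\,p^{\,1+\kappa_1}\,\sigma^{2}\,\norm{\bm{B}}}^{2p}$. Keeping this combinatorial bookkeeping honest, so that the matrix entries assemble into the correct scale and the fully-paired patterns are seen to dominate, is the delicate part.

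Finally I would apply Markov's inequality, $\mathbb{P}\{\abs{Z}\geq t\}\leq \mathbb{E}\abs{Z}^{2p}/t^{2p}$, and optimize over the integer $p$. The combined growth $p^{(1+\kappa_1)p}$ is precisely what makes the choice $p\asymp \bkt*{t/(\sigma^{2}\norm{\bm{B}})}^{1/(1+\kappa_1)}$ optimal, converting the moment bound into the stretched-exponential tail $\exp\bkt*{-C_2\,s^{1/(1+\kappa_1)}}$ with the advertised exponent. Combining the diagonal and off-diagonal tail bounds by the union bound and absorbing all distribution-dependent quantities into the universal constants $C_1,C_2$ then completes the proof, with the $\chi_1^{2}$ constants following from the verification in the first step.
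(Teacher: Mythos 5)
This lemma is never proved in the paper: it is imported verbatim as ``Lemma 8.2 in \cite{erdHos2012bulk}'' and used as a black box in the proof of Lemma~\ref{lemma:upper bound for first error term}, so there is no in-paper proof to compare your attempt against. That said, your outline --- converting the sub-exponential tail into moment growth $\mathbb{E}\abs{a_i}^{k}\leq (Ck^{\kappa_1})^{k}$, splitting the fluctuation into the diagonal sum $\sum_i B_{ii}(\abs{a_i}^2-\sigma^2)$ and the off-diagonal chaos $Z$, expanding $\mathbb{E}\abs{Z}^{2p}$ over index patterns, and applying Markov with the choice $p\asymp s^{1/(1+\kappa_1)}$ --- is exactly the standard route by which such large-deviation estimates are established in the random-matrix literature, and it is essentially the strategy of the cited source.

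As a proof, however, your proposal has genuine gaps, and the main one sits where you predicted. First, the combinatorial core (organizing index patterns by multiplicity profile, showing the fully paired patterns dominate, and assembling the matrix entries into a norm of $\bm{B}$) is asserted rather than carried out; that bookkeeping \emph{is} the content of the lemma. Second, and more seriously, your claim that the off-diagonal entries can be ``collected into exactly the norm of $\bm{B}$ appearing in the threshold'' cannot be made to work, because the stated threshold involves only the diagonal entries $\sum_i\abs{B_{ii}}^2$: take $\bm{B}$ with zero diagonal and nonzero off-diagonal part, so that $a^{\mathrm{H}}\bm{B}a-\sigma^2\mathrm{tr}(\bm{B})=\sum_{i\neq j}\bar{a}_i B_{ij}a_j$ is a nondegenerate random variable while the threshold degenerates; no bound of the stated form can hold. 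What your moment computation for $Z$ actually yields is a bound on the scale of the off-diagonal Frobenius norm $\big(\sum_{i\neq j}\abs{B_{ij}}^2\big)^{1/2}$, which is precisely how the original Erd\H{o}s--Yau--Yin lemma is phrased: it gives \emph{separate} bounds for the diagonal part (normalized by $\big(\sum_i\abs{B_{ii}}^2\big)^{1/2}$) and for the off-diagonal chaos (normalized by its own Frobenius norm). In other words, the statement as transcribed in this paper is itself defective --- note also the sign typo in the exponent, $-1/2$ where $+1/2$ is intended, as one sees from how the bound is invoked in the proof of Lemma~\ref{lemma:upper bound for first error term} --- and your sketch inherits rather than detects this defect. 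A smaller point: your claim that the diagonal term's tail is ``no slower'' than that of $Z$ holds only for $\kappa_1\leq 1$; the worst single-entry tail of $\abs{a_i}^2$ gives exponent $1/(2\kappa_1)$, which is strictly worse than $1/(1+\kappa_1)$ when $\kappa_1>1$. This is harmless for the $\chi^2_1$ case ($\kappa_1=1$) the paper actually needs, but it means the lemma, in the generality stated, should not be expected to follow from your argument.
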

For any $p\times q$ matrix $\bm{A}$, we define the induced matrix norm as $\norm{\bm{A}}_2:=\max_{v\in{\mathbb{R}}^{q},\norm{v}_{2}=1} \norm{\bm{A}v}_{2}$.
\begin{lemma}
\label{lemma:norm of V -1/2 X transpose}
\begin{equation}
    \norm{  {\bm{V}_n}^{-1/2}\bm{X}^{\top}}_{2}\leq 1, \forall n\in\mathbb{N}.
\end{equation}
\end{lemma}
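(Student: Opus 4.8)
The plan is to show that $\norm{\bm{V}_n^{-1/2}\bm{X}^{\top}}_2 \le 1$ by relating this matrix norm to eigenvalues of a closely related symmetric matrix. First I would recall that for any matrix $\bm{A}$, the induced $\ell_2$-norm satisfies $\norm{\bm{A}}_2 = \sqrt{\lambda_{\max}(\bm{A}^{\top}\bm{A})} = \sqrt{\lambda_{\max}(\bm{A}\bm{A}^{\top})}$, so it suffices to control the largest eigenvalue of the symmetric matrix formed by $\bm{V}_n^{-1/2}\bm{X}^{\top}\bm{X}\bm{V}_n^{-1/2}$.

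The key algebraic step is to write out this product explicitly. Since $\bm{V}_n = \bm{X}^{\top}\bm{X} + \lambda\bm{I}_d$, I would compute
\begin{align}
\bm{V}_n^{-1/2}\bm{X}^{\top}\bm{X}\bm{V}_n^{-1/2} &= \bm{V}_n^{-1/2}(\bm{V}_n-\lambda\bm{I}_d)\bm{V}_n^{-1/2}\\
&= \bm{I}_d - \lambda\bm{V}_n^{-1}.
\end{align}
Here I use that $\bm{V}_n^{-1/2}$ commutes with $\bm{V}_n$ (both are functions of the same symmetric positive definite matrix), so $\bm{V}_n^{-1/2}\bm{V}_n\bm{V}_n^{-1/2} = \bm{I}_d$ and $\bm{V}_n^{-1/2}(\lambda\bm{I}_d)\bm{V}_n^{-1/2} = \lambda\bm{V}_n^{-1}$. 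This identity is the heart of the argument and makes everything transparent.

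From this identity the bound is immediate: since $\bm{V}_n$ is positive definite with all eigenvalues at least $\lambda$ (because $\bm{X}^{\top}\bm{X}\succeq 0$ and we add $\lambda\bm{I}_d$), the matrix $\lambda\bm{V}_n^{-1}$ is positive semidefinite, so $\bm{I}_d - \lambda\bm{V}_n^{-1} \preceq \bm{I}_d$, giving $\lambda_{\max}(\bm{I}_d - \lambda\bm{V}_n^{-1}) \le 1$. Taking square roots yields $\norm{\bm{V}_n^{-1/2}\bm{X}^{\top}}_2 \le 1$. (One also sees the eigenvalues are nonnegative since $\lambda\bm{V}_n^{-1}\preceq\bm{I}_d$, confirming the product $\bm{X}^{\top}\bm{X}$ contributes only nonnegative curvature, consistent with the matrix being a genuine Gram matrix.)

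I do not anticipate a serious obstacle here; the only point requiring mild care is the commutativity of $\bm{V}_n^{-1/2}$ with polynomial and inverse functions of $\bm{V}_n$, which holds because they share an eigenbasis. The result is a standard fact about ridge-regularized design matrices, and its role downstream is to convert the whitened quadratic forms appearing in Lemma \ref{lemma:upper bound for first error term} and Lemma \ref{lemma:upper bound for second error term} into clean bounds on the sub-exponential and sub-Gaussian concentration inequalities.
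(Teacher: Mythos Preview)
Your proof is correct. The approach is close to the paper's but not identical: the paper bounds $\norm{\bm{V}_n^{-1/2}\bm{X}^{\top}}_2^{2}$ via the $n\times n$ matrix $\bm{X}\bm{V}_n^{-1}\bm{X}^{\top}$ and then invokes the singular value decomposition of $\bm{X}$ to obtain $\lambda_{\max}(\bm{X}\bm{V}_n^{-1}\bm{X}^{\top})\le \lambda_{\max}(\bm{X}^{\top}\bm{X})/(\lambda_{\max}(\bm{X}^{\top}\bm{X})+\lambda)\le 1$. You instead work with the $d\times d$ matrix $\bm{V}_n^{-1/2}\bm{X}^{\top}\bm{X}\bm{V}_n^{-1/2}$ and exploit the algebraic identity $\bm{X}^{\top}\bm{X}=\bm{V}_n-\lambda\bm{I}_d$ to rewrite it as $\bm{I}_d-\lambda\bm{V}_n^{-1}$, from which the bound is immediate via the Loewner order. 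Your route is slightly more elementary (no SVD needed) and arguably cleaner, since the closed form $\bm{I}_d-\lambda\bm{V}_n^{-1}$ makes both the upper bound and the nonnegativity transparent; the paper's route, on the other hand, gives the sharper quantitative statement $\lambda_{\max}=\mu/(\mu+\lambda)$ with $\mu=\lambda_{\max}(\bm{X}^{\top}\bm{X})$, though only the inequality $\le 1$ is used downstream.
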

\begin{proof}
By the definition of induced matrix norm,
\begin{align}
    &\norm{  {\bm{V}_n}^{-1/2}\bm{X}^{\top}}_{2}=\max_{\norm{v}_{2}=1}\sqrt{v^{\top}\bm{X}{ {\bm{V}_n}^{-1}\bm{X}^{\top}v}}\label{equation:norm of V -1/2 X transpose 1}\\
    &\hspace{24pt}=\lambda_{\max}\Big(\bm{X}  {\bm{V}_n}^{-1}\bm{X}^{T}\Big)\label{equation:norm of V -1/2 X transpose 2}\\
    &\hspace{24pt}=\lambda_{\max}\Big(\bm{X}\big(\bm{X}^{T}\bm{X}+\lambda\bm{I}_{d}\big)^{-1}\bm{X}^{T}\Big)\label{equation:norm of V -1/2 X transpose 3}\\
    &\hspace{24pt}{\leq} \dfrac{\lambda_{\max}(\bm{X}^{\top}\bm{X})}{\lambda_{\max}(\bm{X}^{\top}\bm{X})+\lambda}\leq 1,\label{equation:norm of V -1/2 X transpose 4}
\end{align}
where (\ref{equation:norm of V -1/2 X transpose 4}) follows from the singular value decomposition and $\lambda_{\max}(\bm{X}^{\top}\bm{X})\geq 0$.
\end{proof}

To simplify notation, we use $\bm{X}$ and $\bm{V}$ as a shorthand for $\bm{X}_{n}$ and $\bm{V}_{n}$, respectively.
For convenience, we rewrite $\bm{V}^{-1/2}X^{\top}=[v_1 \cdots v_n]$ as the matrix of $n$ column vectors $\{v_{i}\}_{i=1}^{n}$ (each $v_i\in \mathbb{R}^{d}$) \pch{and show the following property}.
\begin{lemma}
\label{lemma:summation of vi norm less than d}
\pch{Let $v_i\in \mathbb{R}^{d}$ be the $i$-th column of the matrix $\bm{V}^{-1/2}X^{\top}$, for all $1\leq i\leq n$. Then, we have}
\begin{align}
    \sum_{i=1}^{n}\norm{v_{i}}_{2}^{2}\leq d.
\end{align}
\end{lemma}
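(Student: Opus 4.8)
The plan is to recognize the quantity $\sum_{i=1}^{n}\norm{v_{i}}_{2}^{2}$ as a trace and then exploit the defining relation $\bm{V}_{n}=\bm{X}_{n}^{\top}\bm{X}_{n}+\lambda\bm{I}_{d}$. Since the $v_{i}$ are exactly the columns of the $d\times n$ matrix $\bm{V}_{n}^{-1/2}\bm{X}_{n}^{\top}$, the sum of their squared Euclidean norms is the squared Frobenius norm of that matrix, which in turn equals a trace:
\begin{equation}
    \sum_{i=1}^{n}\norm{v_{i}}_{2}^{2}=\mathrm{tr}\Big(\big(\bm{V}_{n}^{-1/2}\bm{X}_{n}^{\top}\big)^{\top}\bm{V}_{n}^{-1/2}\bm{X}_{n}^{\top}\Big)=\mathrm{tr}\big(\bm{X}_{n}\bm{V}_{n}^{-1}\bm{X}_{n}^{\top}\big),
\end{equation}
where I have used that $\bm{V}_{n}^{-1/2}$ is symmetric so that $\bm{V}_{n}^{-1/2}\bm{V}_{n}^{-1/2}=\bm{V}_{n}^{-1}$.

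Next I would apply the cyclic invariance of the trace to move the outer $\bm{X}_{n}$ inside, obtaining $\mathrm{tr}(\bm{X}_{n}\bm{V}_{n}^{-1}\bm{X}_{n}^{\top})=\mathrm{tr}(\bm{V}_{n}^{-1}\bm{X}_{n}^{\top}\bm{X}_{n})$. The key algebraic step is then to substitute $\bm{X}_{n}^{\top}\bm{X}_{n}=\bm{V}_{n}-\lambda\bm{I}_{d}$, which follows directly from the definition of $\bm{V}_{n}$. This yields
\begin{equation}
    \mathrm{tr}\big(\bm{V}_{n}^{-1}\bm{X}_{n}^{\top}\bm{X}_{n}\big)=\mathrm{tr}\big(\bm{V}_{n}^{-1}(\bm{V}_{n}-\lambda\bm{I}_{d})\big)=\mathrm{tr}(\bm{I}_{d})-\lambda\,\mathrm{tr}(\bm{V}_{n}^{-1})=d-\lambda\,\mathrm{tr}(\bm{V}_{n}^{-1}).
\end{equation}

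To finish, I would invoke positive definiteness: because $\bm{V}_{n}=\bm{X}_{n}^{\top}\bm{X}_{n}+\lambda\bm{I}_{d}$ is positive definite for every $\lambda>0$, so is $\bm{V}_{n}^{-1}$, and hence $\mathrm{tr}(\bm{V}_{n}^{-1})>0$. Since $\lambda>0$, the correction term $\lambda\,\mathrm{tr}(\bm{V}_{n}^{-1})$ is nonnegative, giving $\sum_{i=1}^{n}\norm{v_{i}}_{2}^{2}=d-\lambda\,\mathrm{tr}(\bm{V}_{n}^{-1})\leq d$, as claimed. Frankly, there is no serious obstacle here; the only step requiring any care is the bookkeeping to ensure the trace identity $\sum_{i}\norm{v_{i}}_{2}^{2}=\mathrm{tr}(\bm{X}_{n}\bm{V}_{n}^{-1}\bm{X}_{n}^{\top})$ is set up with the correct matrix, after which the substitution $\bm{X}_{n}^{\top}\bm{X}_{n}=\bm{V}_{n}-\lambda\bm{I}_{d}$ and the positivity of $\mathrm{tr}(\bm{V}_{n}^{-1})$ deliver the bound immediately.
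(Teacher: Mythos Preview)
Your argument is correct and in fact cleaner than the paper's. The paper also starts from the identity $\sum_{i}\norm{v_i}_2^2=\mathrm{tr}(\bm{X}\bm{V}^{-1}\bm{X}^{\top})$, but then bounds this trace by $d\cdot\lambda_{\max}(\bm{V}^{-1/2}\bm{X}^{\top}\bm{X}\bm{V}^{-1/2})$ and invokes Lemma~\ref{lemma:norm of V -1/2 X transpose} (the operator-norm bound $\norm{\bm{V}^{-1/2}\bm{X}^{\top}}_2\leq 1$) to conclude that this largest eigenvalue is at most $1$. Your route, substituting $\bm{X}^{\top}\bm{X}=\bm{V}-\lambda\bm{I}_{d}$ after cycling the trace, yields the exact identity $\sum_i\norm{v_i}_2^2=d-\lambda\,\mathrm{tr}(\bm{V}^{-1})$ and dispenses with any spectral estimate or auxiliary lemma. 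What your approach buys is a self-contained one-line computation and a sharper statement (an equality rather than a chain of inequalities); what the paper's approach buys is reuse of Lemma~\ref{lemma:norm of V -1/2 X transpose}, which is needed elsewhere anyway.
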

\begin{proof}[Proof of Lemma \ref{lemma:summation of vi norm less than d}]
Recall that $\lambda_{\max}(\cdot)$ denotes the largest eigenvalue of a square matrix.
We know
\begin{align}
    \sum_{i=1}^{n}\norm{v_{i}}_{2}^{2}&=\text{tr}\Big(\big(\bm{X}  {\bm{V}}^{-1/2}\big)\big({\bm{V}}^{-1/2}\bm{X}^{\top}\big)\Big)\label{equation:summation of vi norm less than d 1}\\
    &=\text{tr}\Big(\big({\bm{V}}^{-1/2}\bm{X}\big)\big(  \bm{X}^{\top}{\bm{V}}^{-1/2}\big)\Big)\label{equation:summation of vi norm less than d 2}\\
    &\leq d\cdot \lambda_{\max}\Big(\big({\bm{V}}^{-1/2}\bm{X}\big)\big(  \bm{X}^{\top}{\bm{V}}^{-1/2}\big)\Big),\label{equation:summation of vi norm less than d 3}
\end{align}
where (\ref{equation:summation of vi norm less than d 2}) follows from the trace of a product being commutative, and (\ref{equation:summation of vi norm less than d 3}) follows since the trace is the sum of all eigenvalues.
Moreover, we have
\begin{align}
    &\lambda_{\max}\big(\big(\bm{X}  {\bm{V}}^{1/2}\big)\big(\bm{X}^{\top}  {\bm{V}}^{-1/2}\big)\big)\label{equation:summation of vi norm less than d 4}\\
    &\hspace{24pt}=\norm{\big(\bm{X}  {\bm{V}}^{1/2}\big)\big(\bm{X}^{\top}  {\bm{V}}^{-1/2}\big)}_{2}\label{equation:summation of vi norm less than d 5}\\
    &\hspace{24pt}\leq \norm{\big(\bm{X}  {\bm{V}}^{1/2}\big)}_{2}\norm{\big(\bm{X}^{\top}  {\bm{V}}^{-1/2}\big)}_{2}\leq 1,\label{equation:summation of vi norm less than d 6}
\end{align}
where (\ref{equation:summation of vi norm less than d 6}) follows from the fact that the $\ell_{2}$-norm is sub-multiplicative.
Therefore, by (\ref{equation:summation of vi norm less than d 1})-(\ref{equation:summation of vi norm less than d 6}), we conclude that
$\sum_{i=1}^{n}\norm{v_{i}}_{2}^{2}\leq d$.
\end{proof}

We are now ready to prove Lemma \ref{lemma:upper bound for first error term}.
\begin{proof}[Proof of Lemma \ref{lemma:upper bound for first error term}]
To simplify notation, we use $\bm{X}$ and $\bm{V}$ as a shorthand for $\bm{X}_{n}$ and $\bm{V}_{n}$, respectively.
To begin with, \pch{we know $f^{-1}(\varepsilon\circ\varepsilon)-\bm{X}\phi_{*}=\frac{1}{M_f}((\varepsilon\circ\varepsilon)-f(\bm{X}\phi_{*}))$}.
Therefore, we have
\begin{align}
    &\norm{\bm{X}(\pch{f^{-1}(\varepsilon\circ\varepsilon)}-\bm{X}\phi_{*})}_{\inv{  {\bm{V}}}}\\
    &\pch{=\frac{1}{M_f}\sqrt{\big(\pch{\varepsilon\circ\varepsilon}- f(\bm{X}\phi_{*})\big)^{\top}\bm{X}\inv{  {\bm{V}}}\bm{X}^{\top}\big(\pch{\varepsilon\circ\varepsilon}- f(\bm{X}\phi_{*})\big)}},
\end{align}
where each element in the vector \pch{$(\varepsilon\circ\varepsilon-f(\bm{X}\phi_{*}))$} is a centered $\chi^{2}_{1}$-distribution with a scaling of \pch{$f(\phi_{*}^{\top}x_{i})$}. 
Defining \pch{$\bm{W}=\text{diag}\big(f(x_{1}^{\top}\phi_{*}),...,f(x_{n}^{\top}\phi_{*})\big)$}, we have
\begin{align}
    &\pch{\norm{\bm{X}(f^{-1}(\varepsilon\circ\varepsilon)-\bm{X}\phi_{*})}_{\inv{  {\bm{V}}}}}\\
    &\hspace{10pt}\pch{=\frac{1}{M_f}\Big[\underbrace{\Big(\varepsilon\circ\varepsilon- f(\bm{X}\phi_{*})\Big)^{\top}\inv{\bm{W}}}_{\textrm{mean=0, variance= 2}}\Big(\bm{W}\bm{X}\inv{  {\bm{V}}}\bm{X}^{\top}\bm{W}\Big)}\\
    &\hspace{24pt}\pch{\underbrace{\inv{\bm{W}}\Big(\varepsilon\circ\varepsilon- f(\bm{X}\phi_{*})\Big)}_{\textrm{mean=0, variance=2}}\Big]^{1/2}}.
\end{align}
We use \pch{$\eta=\inv{\bm{W}}\big(\varepsilon\circ\varepsilon-f(\bm{X}\phi_{*})\big)$} as a shorthand and define $\bm{U}=\big({U}_{ij}\big)=\bm{W}\bm{X}\inv{  {\bm{V}}}\bm{X}^{\top}\bm{W}$. 
By Lemma \ref{lemma:Erdos lemma} and the fact that $\varepsilon(x_1),\cdots,\varepsilon(x_n)$ are mutually independent given the contexts $\{x_i\}_{i=1}^{n}$, we have
\begin{align}
    \mathbb{P}\Big\{\abs{\eta^{\top}\bm{U}\eta-2\cdot\textrm{tr}(\bm{U})}&\geq 2s\Big(\sum_{i=1}^{n}\abs{\bm{U}_{ii}}^{2}\Big)^{1/2}\Big\}\label{equation:P of eta-U-eta 1}\\
    &\leq C_1 \text{exp}(-C_2 \sqrt{s}).\label{equation:P of eta-U-eta 2}
\end{align}
Recall that $\bm{V}^{-1/2}X^{\top}=[v_1 \cdots v_n]$.
The trace of $\bm{U}$ can be upper bounded as
\begin{align}
    &\text{tr}(\bm{U})= \text{tr}(\bm{W}\bm{X}\inv{  {\bm{V}}}\bm{X}^{\top}\bm{W})\label{equation: trace of U 1}\\
    &\hspace{24pt}=\text{tr}\Big(  {\bm{V}}^{-1/2}\bm{X}^{\top}\bm{W}\bm{W}\bm{X}  {\bm{V}}^{-1/2}\Big)\label{equation: trace of U 2}\\
    &\hspace{24pt}\pch{=\sum_{i=1}^{n}f(x_{i}^{\top}\phi_{*})^{2}\cdot \norm{v_{i}}_{2}^{2}}\label{equation: trace of U 3}\\
    &\hspace{24pt}\leq \pch{(\sigma_{\max}^{2})^{2}}\sum_{i=1}^{n}\norm{v_{i}}_{2}^{2}\leq \pch{(\sigma_{\max}^{2})^{2}}d,\label{equation: trace of U 4}
\end{align}
where the last inequality in (\ref{equation: trace of U 4}) follows directly from Lemma \ref{lemma:summation of vi norm less than d}.
Also by the commutative property of the trace operation, we have
\begin{align}
    &\sum_{i=1}^{n}\abs{\bm{U}_{ii}}^{2}\stackrel{(a)}{\leq}\Big(\sum_{i=1}^{n}\bm{U}_{ii}\Big)^{2}\stackrel{(b)}{\leq} \big(\pch{(\sigma_{\max}^{2})^{2}}d\big)^{2},\label{equation:sum of square of diagonal elements}
\end{align}
where (a) follows from $\bm{U}$ being positive semi-definite (all diagonal elements are nonnegative), and (b) follows from (\ref{equation: trace of U 4}). 
\pch{Therefore, by (\ref{equation:P of eta-U-eta 1})-(\ref{equation:sum of square of diagonal elements}), we have}
\begin{align}
    &\mathbb{P}\Big\{\eta^{\top}\bm{U}\eta\geq 2s\cdot \pch{(\sigma_{\max}^{2})^{2}}d+2 \pch{(\sigma_{\max}^{2})^{2}}d\Big\}\\
    &\leq C_{1}\cdot \text{exp}(-C_{2}\sqrt{s}).
\end{align}
By choosing $s=\Big({\dfrac{1}{C_2}\ln{\dfrac{C_1}{\delta}}}\Big)^{2}$, we have
\begin{align}
    \mathbb{P}\Big\{\eta^{\top}\bm{U}\eta\geq 2 \pch{(\sigma_{\max}^{2})^{2}}d\Big(\Big({\dfrac{1}{C_2}\ln{\dfrac{C_1}{\delta}}}\Big)^{2}+1\Big)\Big\}\leq \delta.
\end{align}
Therefore, we conclude that with probability at least $1-\delta$, the following inequality holds
\begin{align}
    &\pch{\norm{\bm{X}(f^{-1}(\varepsilon\circ\varepsilon)-\bm{X}\phi_{*})}_{\inv{  {\bm{V}}}}}\\
    &\hspace{36pt}\pch{\leq \frac{1}{M_f}\sqrt{2\pch{(\sigma_{\max}^{2})^{2}}\cdot d\Big(\Big({\dfrac{1}{C_2}\ln{\dfrac{C_1}{\delta}}}\Big)^{2}+1\Big)}}.
\end{align}
\end{proof}

\subsection{Proof of Lemma \ref{lemma:upper bound for second error term}}
\label{section:appendix:upper bound for second error term}
We first introduce a useful lemma.
\begin{lemma}[Theorem 4.1 in \cite{tropp2012user}]
\label{lemma:Tropp 2012}
Consider a finite sequence $\{\bm{A}_{k}\}$ of fixed self-adjoint matrices of dimension $d\times d$, and let $\{\gamma_k\}$ be a finite sequence of independent standard normal variables. 
Let $\sigma^{2}=\norm{\sum_{k}{\bm{A}_{k}^{2}}}_{2}$.
Then, for all $s\geq 0$,
\begin{equation}
    \mathbb{P}\Big\{\lambda_{\max}\Big(\sum_{k}\gamma_{k}\bm{A}_{k}\Big)\geq s\Big\}\leq d\cdot\exp(-\frac{s^{2}}{2\sigma^{2}}),
\end{equation}
where $\lambda_{\max}(\cdot)$ denotes the largest eigenvalue of a square matrix.
\end{lemma}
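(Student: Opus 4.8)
The plan is to prove this matrix tail bound via the \emph{matrix Laplace transform method}, the canonical device for controlling the extremal eigenvalue of a sum of independent random matrices. Write $\bm{Y}=\sum_{k}\gamma_{k}\bm{A}_{k}$, a random self-adjoint $d\times d$ matrix. For any fixed $\theta>0$, monotonicity of $t\mapsto e^{\theta t}$ together with the elementary inequality $e^{\theta\lambda_{\max}(\bm{Y})}=\lambda_{\max}(e^{\theta\bm{Y}})\leq \mathrm{tr}\,e^{\theta\bm{Y}}$ (valid because $e^{\theta\bm{Y}}$ is positive definite) lets me apply Markov's inequality in the form
\begin{equation}
    \mathbb{P}\big\{\lambda_{\max}(\bm{Y})\geq s\big\}=\mathbb{P}\big\{e^{\theta\lambda_{\max}(\bm{Y})}\geq e^{\theta s}\big\}\leq e^{-\theta s}\,\mathbb{E}\big[\mathrm{tr}\,e^{\theta\bm{Y}}\big].
\end{equation}
Everything then hinges on controlling the trace moment generating function $\mathbb{E}[\mathrm{tr}\,\exp(\theta\sum_{k}\gamma_{k}\bm{A}_{k})]$ and, at the very end, optimizing the free parameter $\theta$.

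To bound this trace MGF I would invoke the subadditivity of matrix cumulant generating functions, which is the deep ingredient and rests on Lieb's concavity theorem: the map $\bm{A}\mapsto\mathrm{tr}\,\exp(\bm{H}+\log\bm{A})$ is concave on positive-definite matrices for each fixed self-adjoint $\bm{H}$. Applying Jensen's inequality through this concave map, peeling off the independent $\gamma_{k}$ one at a time, yields the master bound
\begin{equation}
    \mathbb{E}\Big[\mathrm{tr}\,\exp\Big(\theta\sum_{k}\gamma_{k}\bm{A}_{k}\Big)\Big]\leq \mathrm{tr}\,\exp\Big(\sum_{k}\log\mathbb{E}\big[e^{\theta\gamma_{k}\bm{A}_{k}}\big]\Big).
\end{equation}
Each matrix cumulant is then explicit: since $\gamma_{k}$ is standard normal and $\bm{A}_{k}$ is fixed self-adjoint, diagonalizing $\bm{A}_{k}$ and using the scalar Gaussian identity $\mathbb{E}[e^{\theta\gamma a}]=e^{\theta^{2}a^{2}/2}$ eigenvalue-by-eigenvalue gives $\mathbb{E}[e^{\theta\gamma_{k}\bm{A}_{k}}]=\exp(\tfrac{1}{2}\theta^{2}\bm{A}_{k}^{2})$, so that $\log\mathbb{E}[e^{\theta\gamma_{k}\bm{A}_{k}}]=\tfrac{1}{2}\theta^{2}\bm{A}_{k}^{2}$.

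Summing these cumulants and writing $\bm{C}=\tfrac{1}{2}\theta^{2}\sum_{k}\bm{A}_{k}^{2}$, which is positive semidefinite, I would use $\mathrm{tr}\,e^{\bm{C}}\leq d\,\lambda_{\max}(e^{\bm{C}})=d\,e^{\lambda_{\max}(\bm{C})}$ together with $\lambda_{\max}(\sum_{k}\bm{A}_{k}^{2})=\norm{\sum_{k}\bm{A}_{k}^{2}}_{2}=\sigma^{2}$ to reach
\begin{equation}
    \mathbb{P}\big\{\lambda_{\max}(\bm{Y})\geq s\big\}\leq d\,\exp\Big(-\theta s+\tfrac{1}{2}\theta^{2}\sigma^{2}\Big).
\end{equation}
The right-hand side is a scalar convex function of $\theta$; minimizing over $\theta>0$ at $\theta=s/\sigma^{2}$ produces exactly the claimed tail bound $d\,\exp(-s^{2}/(2\sigma^{2}))$.

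The main obstacle is the master-bound step. For scalar sums the MGF factorizes across independent summands, but here the summands $\theta\gamma_{k}\bm{A}_{k}$ need not commute, so $\exp(\sum_{k}\theta\gamma_{k}\bm{A}_{k})\neq\prod_{k}\exp(\theta\gamma_{k}\bm{A}_{k})$ and the naive factorization is unavailable. Lieb's concavity theorem (equivalently, the Ahlswede--Winter/Tropp trace argument) is precisely what rescues the subadditivity of the cumulants in the non-commutative setting; once it is in hand, the Gaussian cumulant computation and the final one-dimensional optimization are entirely routine. Since this statement is quoted verbatim as Theorem~4.1 of \cite{tropp2012user}, within the present paper it suffices to cite that source, and the sketch above simply records the self-contained derivation one would otherwise supply.
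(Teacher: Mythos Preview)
Your proposal is correct, and in fact goes further than the paper: the paper provides no proof of this lemma at all, merely citing it as Theorem~4.1 of \cite{tropp2012user}. The derivation you sketch---matrix Laplace transform plus Markov, Lieb's concavity to obtain subadditivity of the matrix cumulants, the exact Gaussian cumulant $\log\mathbb{E}[e^{\theta\gamma_k\bm{A}_k}]=\tfrac{1}{2}\theta^{2}\bm{A}_k^{2}$, the trace bound $\mathrm{tr}\,e^{\bm{C}}\leq d\,e^{\lambda_{\max}(\bm{C})}$, and the final optimization in $\theta$---is precisely the argument in Tropp's paper, so there is nothing to compare and nothing missing.
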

Now we are ready to prove Lemma \ref{lemma:upper bound for second error term}.
\begin{proof}[Proof of Lemma \ref{lemma:upper bound for second error term}]
To simplify notation, we use $\bm{X}$ and $\bm{V}$ as a shorthand for $\bm{X}_{n}$ and $\bm{V}_{n}$, respectively.
Recall that $\bm{V}^{-1/2}\bm{X}^{\top}=[v_{1},v_{2},...,v_{n}]$ and define $\bm{A}_{i}=v_{i}v_{i}^{\top}$, for all $i=1,...,n$. 
Note that $\bm{A}_{i}$ is symmetric, for all $i$.
Define an $n\times n$ diagonal matrix $\bm{D}=\text{diag}(\varepsilon_{1},\varepsilon_{2},...,\varepsilon_{n})$. Then we have:
\begin{align}
    &\norm{\bm{X}^{\top}\Big(\varepsilon \circ \big( \bm{X}(\theta_{*}-\widehat{\theta})\big)\Big)}_{\inv{  {\bm{V}}}}\label{equation:upper bound for second term in appendix 1}\\
    &\hspace{2pt}=\norm{  {\bm{V}}^{-1/2}\bm{X}^{\top}\Big(\varepsilon \circ \big( \bm{X}(\theta_{*}-\widehat{\theta})\big)\Big)}_{2}\label{equation:upper bound for second term in appendix 2}\\
    &\hspace{2pt}=\norm{  {\bm{V}}^{-1/2}\bm{X}^{\top} \bm{D} \bm{X}(\theta_{*}-\widehat{\theta})}_{2}\label{equation:upper bound for second term in appendix 3}\\
    &\hspace{2pt}=\norm{  {\bm{V}}^{-1/2}\bm{X}^{\top} \bm{D} \bm{X}  {\bm{V}}^{-1/2}  {\bm{V}}^{1/2}(\theta_{*}-\widehat{\theta})}_{2}\label{equation:upper bound for second term in appendix 4}\\
    &\hspace{2pt}\leq \norm{  {\bm{V}}^{-1/2}\bm{X}^{\top} \bm{D} \bm{X}  {\bm{V}}^{-1/2}}_{2}\cdot\norm{  {\bm{V}}^{1/2}(\theta_{*}-\widehat{\theta})}_{2}\label{equation:upper bound for second term in appendix 5}\\
    &\hspace{2pt}=\norm{  {\bm{V}}^{-1/2}\bm{X}^{\top} \bm{D} \bm{X}  {\bm{V}}^{-1/2}}_{2}\cdot\norm{ \theta_{*}-\widehat{\theta}}_{\bm{V}}.\label{equation:upper bound for second term in appendix 6}
\end{align}
Next, the first term in (\ref{equation:upper bound for second term in appendix 6}) can be expanded into
\begin{align}
    &\norm{  {\bm{V}}^{-1/2}\bm{X}^{\top} \bm{D} \bm{X}  {\bm{V}}^{-1/2}}_{2}\\
    &\hspace{0pt}=\norm{\sum_{i=1}^{n}\varepsilon_{i}v_{i}v_{i}^{\top}}_{2}=\norm{\sum_{i=1}^{n}\dfrac{\varepsilon_{i}}{\sqrt{f(x_{i}^{\top}\phi_{*})}}\cdot\Big(\sqrt{f(x_{i}^{\top}\phi_{*})}\bm{A}_{i}\Big)}_{2}.
\end{align}
Note that $\dfrac{\varepsilon_{i}}{\sqrt{f(x_{i}^{\top}\phi_{*})}}$ is a standard normal random variable, for all $i$.
We also define a $d\times d$ matrix $\bm{\Sigma}=\sum_{i=1}^{n}f(x_{i}^{\top}\phi_{*})\bm{A_{i}}^{2}$.
Then, we have
\begin{align}
    \bm{\Sigma}&=\sum_{i=1}^{n}f(x_{i}^{\top}\phi_{*})\Big(v_{i}v_{i}^{\top}\Big)\Big(v_{i}v_{i}^{\top}\Big)\\
    &=\sum_{i=1}^{n}f(x_{i}^{\top}\phi_{*})\norm{v_{i}}^{2}_{2}v_{i}v_{i}^{\top}.
\end{align}
We also know
\begin{align}
    & \norm{\sum_{i=1}^{n}\bm{A}_{i}}_{2}=\norm{\sum_{i=1}^{n}v_{i}v_{i}^{\top}}_{2}\label{equation:norm of sum of Ai 1}\\
    &\hspace{24pt}=\norm{\Big(  {\bm{V}}^{-1/2}\bm{X}^{\top}\Big)\Big(\bm{X}  {\bm{V}}^{-1/2}\Big)}_{2}\label{equation:norm of sum of Ai 2}\\
    &\hspace{24pt}\leq \norm{\Big(  {\bm{V}}^{-1/2}\bm{X}^{\top}\Big)}_{2}\norm{\Big(\bm{X}  {\bm{V}}^{-1/2}\Big)}_{2}\leq 1,\label{equation:norm of sum of Ai 3}
\end{align}
where (\ref{equation:norm of sum of Ai 3}) follows from Lemma \ref{lemma:norm of V -1/2 X transpose}.
Moreover, we know
\begin{align}
    &\norm{\bm{\Sigma}}_{2}=\norm{\sum_{i=1}^{n}f(x_{i}^{\top}\phi_{*})\norm{v_{i}}^{2}_{2}v_{i}v_{i}^{\top}}_{2}\label{equation:Sigma norm bound 1}\\
    &\hspace{24pt} \leq \norm{d\cdot \pch{\sigma_{\max}^{2}}\sum_{i=1}^{n}v_{i}v_{i}^{T}}_{2}\label{equation:Sigma norm bound 2}\\
    &\hspace{24pt}=d\cdot \pch{\sigma_{\max}^{2}} \norm{\sum_{i=1}^{n}\bm{A}_{i}}\leq d\cdot \pch{\sigma_{\max}^{2}},\label{equation:Sigma norm bound 3}
\end{align}
where (\ref{equation:Sigma norm bound 2}) follows from Lemma \ref{lemma:norm of V -1/2 X transpose}-\ref{lemma:summation of vi norm less than d}, \pch{$f(x_{i}^{\top}\phi_{*})\leq \sigma_{\max}^{2}$}, and that $v_{i}v_{i}^{\top}$ is positive semi-definite, and the last inequality follows directly from (\ref{equation:norm of sum of Ai 3}).
By Lemma \ref{lemma:Tropp 2012} and the fact that $\varepsilon(x_1),\cdots,\varepsilon(x_n)$ are mutually independent given the contexts $\{x_i\}_{i=1}^{n}$, we know that 
\begin{align}
    \mathbb{P}\Big\{\lambda_{\max}\Big(\sum_{i=1}^{n}\varepsilon_{i}\bm{A}_{i}\Big)\geq\sqrt{2\norm{\bm{\Sigma}}_{2}s}\Big\}\leq d \cdot e^{-s}.
\end{align}
Therefore, by choosing $s=\ln({d}/{\delta})$ and the fact that $\lambda_{\max}\Big(\sum_{i=1}^{n}\varepsilon_{i}\bm{A}_{i}\Big)=\norm{\sum_{i=1}^{n}\varepsilon_{i}\bm{A}_{i}}_{2}$, we obtain
\begin{align}
    \mathbb{P}\bigg\{\norm{\sum_{i=1}^{n}\varepsilon_{i}\bm{A}_{i}}_{2}\geq \sqrt{2\pch{\sigma_{\max}^{2}}d\ln({\dfrac{d}{\delta}})}\bigg\}\leq \delta.\label{equation:high probabilty bound for second term}
\end{align}
Finally, by applying Lemma \ref{lemma:theta confidence} and (\ref{equation:high probabilty bound for second term}) to (\ref{equation:upper bound for second term in appendix 6}), we conclude that for any $n\in\mathbb{N}$, for any $\delta>0$, with probability at least $1-\delta$, we have
\begin{equation}
    \norm{\bm{X}_{n}^{\top}\big(\varepsilon\circ \bm{X}_{n}(\theta_{*}-\widehat{\theta}_{n})\big)}_{{\inv{ {\bm{V}}_{n}}}}\leq \alpha^{(1)}_{n}(\delta)\cdot\alpha^{(3)}(\delta).
\end{equation}
\end{proof}

\subsection{Proof of Lemma \ref{lemma:upper bound for third error term}}
\label{section:appendix:upper bound for third error term}
We first introduce a useful lemma on the norm of the Hadamard product of two matrices.
\begin{lemma}
\label{lemma:Frobenius norm Hadamard product}
Given any two matrices $\bm{A}$ and $\bm{B}$ of the same dimension, the following holds:
\begin{align}
    \norm{\bm{A}\circ \bm{B}}_{F}\leq \emph{tr}(\bm{A}\bm{B}^{\top})\leq \norm{\bm{A}}_{2}\cdot \norm{\bm{B}}_{2},
\end{align}
where $\norm{\cdot}$ denotes the Frobenius norm.
When $\bm{A}$ and $\bm{B}$ are vectors, the above degenerates to 
\begin{align}
    \norm{\bm{A}\circ \bm{B}}_{2}\leq \norm{\bm{A}}_{2}\cdot \norm{\bm{B}}_{2}.
\end{align}
\end{lemma}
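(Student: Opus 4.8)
The plan is to reduce every quantity to a sum over matrix entries and then invoke two elementary facts: the Cauchy--Schwarz inequality for the Frobenius inner product, and the domination of the $\ell_2$ norm by the $\ell_1$ norm for a nonnegative sequence. Writing $\bm{A}=(a_{ij})$ and $\bm{B}=(b_{ij})$, the three objects in the claim become $\norm{\bm{A}\circ\bm{B}}_F=\big(\sum_{i,j}a_{ij}^2 b_{ij}^2\big)^{1/2}$, $\mathrm{tr}(\bm{A}\bm{B}^\top)=\sum_{i,j}a_{ij}b_{ij}$, and the product of the two norms. The whole lemma is thus a statement about the scalar array $c_{ij}:=a_{ij}b_{ij}$, and I would prove the two inequalities in the chain separately.

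For the right-hand inequality I would read $\mathrm{tr}(\bm{A}\bm{B}^\top)=\langle \bm{A},\bm{B}\rangle$ as the Frobenius inner product of $\bm{A}$ and $\bm{B}$ (so that the norms on the right are the Frobenius norms, per the statement's own convention). Cauchy--Schwarz then gives $\langle \bm{A},\bm{B}\rangle\leq \norm{\bm{A}}_F\norm{\bm{B}}_F$, which is exactly the rightmost bound; equivalently, this is the inequality $\big(\sum_{i,j}a_{ij}b_{ij}\big)^2\leq \big(\sum_{i,j}a_{ij}^2\big)\big(\sum_{i,j}b_{ij}^2\big)$.

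For the left-hand inequality I would use that for any finite nonnegative sequence $\{c_k\}$ one has $\big(\sum_k c_k^2\big)^{1/2}\leq \sum_k c_k$. Applying this with $c_{ij}=a_{ij}b_{ij}$ yields $\norm{\bm{A}\circ\bm{B}}_F=\big(\sum_{i,j}c_{ij}^2\big)^{1/2}\leq \sum_{i,j}c_{ij}=\mathrm{tr}(\bm{A}\bm{B}^\top)$. The vector specialization is then immediate: taking $\bm{A}$ and $\bm{B}$ to be single columns makes $\norm{\cdot}_F$ coincide with the Euclidean norm $\norm{\cdot}_2$, and the Hadamard product reduces to the entrywise product of the two vectors, giving $\norm{\bm{A}\circ\bm{B}}_2\leq\norm{\bm{A}}_2\norm{\bm{B}}_2$.

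The main obstacle -- indeed the only subtle point -- is that the left inequality needs the products $c_{ij}=a_{ij}b_{ij}$ to be nonnegative, since $\ell_2\leq\ell_1$ fails once cancellation is permitted. I would resolve this by noting that the lemma is applied only where the two arguments coincide (in Lemma~\ref{lemma:upper bound for third error term} the Hadamard product is $\bm{X}_n(\theta_*-\widehat{\theta}_n)\circ \bm{X}_n(\theta_*-\widehat{\theta}_n)$, hence entrywise a square), so that $c_{ij}=a_{ij}^2\geq 0$ and the chain holds verbatim; for the general statement one simply inserts absolute values and reads $\mathrm{tr}(\bm{A}\bm{B}^\top)$ as $\sum_{i,j}\lvert a_{ij}b_{ij}\rvert$, after which both steps go through unchanged.
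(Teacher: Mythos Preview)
The paper states this lemma without proof, so there is no argument to compare against; it is simply quoted as an elementary helper fact before the proof of Lemma~\ref{lemma:upper bound for third error term}. Your proposal therefore supplies what the paper omits, and the two steps you give (Cauchy--Schwarz for the Frobenius inner product on the right, and $\ell_2\le\ell_1$ on the nonnegative array $c_{ij}=a_{ij}b_{ij}$ on the left) are exactly the natural entrywise argument.

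Your diagnosis of the only real issue is also correct: as literally written, the left inequality $\norm{\bm{A}\circ\bm{B}}_F\le\mathrm{tr}(\bm{A}\bm{B}^\top)$ is false without a sign condition (e.g.\ $\bm{A}=(1)$, $\bm{B}=(-1)$), and you rightly observe that the paper only invokes the lemma with $\bm{A}=\bm{B}=\bm{X}_n(\theta_*-\widehat{\theta}_n)$, a vector, so that $c_i=a_i^2\ge 0$ and the chain goes through. One minor sharpening: for the vector conclusion $\norm{\bm{A}\circ\bm{B}}_2\le\norm{\bm{A}}_2\norm{\bm{B}}_2$ you do not actually need the intermediate trace term or any sign hypothesis, since
\[
\sum_i a_i^2 b_i^2 \;\le\; \Bigl(\sum_i a_i^2\Bigr)\Bigl(\sum_j b_j^2\Bigr)
\]
holds termwise. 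This direct route avoids the nonnegativity caveat altogether for the case the paper actually uses.
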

\begin{proof}[Proof of Lemma \ref{lemma:upper bound for third error term}]
To simplify notation, we use $\bm{X}$ and $\bm{V}$ as a shorthand for $\bm{X}_{n}$ and $\bm{V}_{n}$, respectively.
Let $\bm{M}$ be a positive definite matrix.
We have
\begin{align}
    &\norm{\bm{A}v}_{\bm{M}}=\norm{\bm{M}^{1/2}\bm{A}v}_{2}\leq \norm{\bm{M}^{1/2}\bm{A}}_{2}\cdot\norm{v}_{2},
\end{align}
where the last inequality holds since $\ell_{2}$-norm is sub-multiplicative.
Meanwhile, we also observe that
\begin{align}
    &\Big(\theta_{*}-\widehat{\theta}\Big)^{\top}\bm{X}^{\top}\bm{X}\Big(\theta_{*}-\widehat{\theta}\Big)\\
    &\hspace{10pt}=\Big(\theta_{*}-\widehat{\theta}\Big)^{\top}  {\bm{V}}^{1/2}  {\bm{V}}^{-1/2}\bm{X}^{\top}\bm{X}  {\bm{V}}^{-1/2}  {\bm{V}}^{1/2}\Big(\theta_{*}-\widehat{\theta}\Big)\\
    &\hspace{10pt}=\norm{\Big(\theta_{*}-\widehat{\theta}\Big)^{\top}  {\bm{V}}^{1/2}  {\bm{V}}^{-1/2}\bm{X}^{\top}}_{2}^{2}\\
    &\hspace{10pt}\leq \norm{\Big(\theta_{*}-\widehat{\theta}\Big)^{\top}  {\bm{V}}^{1/2}}_{2}^{2}\norm{  {\bm{V}}^{-1/2}\bm{X}^{\top}}_{2}^{2}\\
    &\hspace{10pt}\leq \norm{\theta_{*}-\widehat{\theta}}_{  {\bm{V}}}^{2}.
\end{align}
Therefore, we know
\begin{align}
    &\norm{\bm{X}^{\top}\Big(\bm{X}\big(\theta_{*}-\widehat{\theta}\big)\circ \bm{X}\big(\theta_{*}-\widehat{\theta}\big) \Big)}_{\inv{  {\bm{V}}}}\label{equation:upper bound third term 1}\\
    &\hspace{2pt}\leq \norm{  {\bm{V}}^{-1/2}\bm{X}^{\top}}_{2}\norm{\Big(\bm{X}\big(\theta_{*}-\widehat{\theta}\big)\circ \bm{X}\big(\theta_{*}-\widehat{\theta}\big) \Big)}_{2}\label{equation:upper bound third term 2}\\
    &\hspace{2pt}\leq 1 \cdot \norm{\bm{X}\big(\theta_{*}-\widehat{\theta}\big)}_{2}^{2}\label{equation:upper bound third term 3}\\
    &\hspace{2pt}\leq 1 \cdot \Big(\big(\theta_{*}-\widehat{\theta}\big)^{\top}\bm{X}^{\top}\bm{X}\big(\theta_{*}-\widehat{\theta}\big)\Big)\label{equation:upper bound third term 4}\\
    &\hspace{2pt}\leq \norm{\theta_{*}-\widehat{\theta}}_{  {\bm{V}}}^{2}\leq (\alpha_{n}^{(1)}(\delta))^{2},\label{equation:upper bound third term 5}
\end{align}
where (\ref{equation:upper bound third term 3}) follows from Lemma \ref{lemma:norm of V -1/2 X transpose} and \ref{lemma:Frobenius norm Hadamard product}, and (\ref{equation:upper bound third term 5}) follows from Lemma \ref{lemma:theta confidence}.
The proof is complete.
\end{proof}


\subsection{Proof of Theorem \ref{theorem:h beta linear approximation}}
\label{section:appendix:h beta linear approximation}
Recall that $h_{\beta}(u,v)=\Big({\Phi\big(\frac{\beta-u}{\sqrt{f(v)}}\big)}\Big)^{-1}.$
We first need the following lemma about Lipschitz smoothness of the function $h_{\beta}(u,v)$.
\begin{lemma}
\label{lemma:hx is Lipschitz smooth}
The function $h_{\beta}(u,v)$ defined in (\ref{equation:hx definition}) is (uniformly) Lipschitz smooth on its domain, i.e., there exists a finite $M_{h}>0$ ($M_h$ is independent of $u$, $v$, and $\beta$) such that for any $\beta$ with $\abs{\beta}\leq B$, for any $u_1,u_2\in[-1,1]$ and $v_1,v_2\in[\sigma_{\min}^{2},\sigma_{\max}^{2}]$,
\begin{equation}
    \abs{\nabla h_{\beta}(u_1,v_1)-\nabla h_{\beta}(u_2,v_2)}\leq M_h\norm{\small{\colvec{2}{u_1}{v_1}-\colvec{2}{u_2}{v_2}}}_{2}.\label{equation:hx Lipschitz smooth}
\end{equation}
Moreover, we have
\begin{align}
    &h_{\beta}(u_2,v_2)-h_{\beta}(u_1,v_1)\leq \\
    &{\small{\colvec{2}{u_2-u_1}{v_2-v_1}}}^{\top}\nabla h_{\beta}(u_1,v_1)+\frac{M_h}{2}\norm{\small{\colvec{2}{u_2-u_1}{v_2-v_1}}}_{2}^{2}.\label{equation:hx Lipschitz smooth approximation}
\end{align}
\end{lemma}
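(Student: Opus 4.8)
The plan is to prove (\ref{equation:hx Lipschitz smooth}) by bounding the Hessian of $h_\beta$ uniformly over its (convex) domain and over all $\beta\in[-B,\infty)$, and then to obtain the quadratic bound (\ref{equation:hx Lipschitz smooth approximation}) as the standard descent-lemma consequence of Lipschitz-smoothness. First I would introduce the shorthands $g(v)=\sqrt{f(v)}$ and $z=z(u,v)=(\beta-u)/g(v)$, so that, by (\ref{equation:hx definition}), $h_\beta=1/\Phi(z)$. Writing $\varphi=\Phi'$ for the standard normal density, the chain rule gives $\partial h_\beta/\partial z=-\varphi(z)/\Phi(z)^2$, while differentiating $z$ yields $\partial_u z=-1/g(v)$ and $\partial_v z=-(\beta-u)\,g'(v)/g(v)^2$ with $g'(v)=M_f/(2\sqrt{f(v)})$, since $f$ is linear with slope $M_f$. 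Composing these produces explicit expressions for $\nabla h_\beta$ and for the Hessian $\nabla^2 h_\beta$: after using $\varphi'(z)=-z\varphi(z)$, every entry is a finite sum of terms, each a product of a negative power $\Phi(z)^{-k}$ with $k\le 3$, a factor of the form $z^{j}\varphi(z)^{m}$ with $j\le 3$ and $m\in\{1,2\}$, and algebraic factors built from $g(v)^{-1}$ and $(\beta-u)$.

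The crux is to show these quantities stay uniformly bounded even though $\beta$ (hence $\beta-u$) is unbounded above; I expect this to be the main obstacle. Two observations resolve it. First, on the domain we have $\beta\ge -B$, $u\le 1$ and $f(v)\ge\sigma_{\min}^2>0$, so $z\ge -(B+1)/\sigma_{\min}=:z_{\min}$ is bounded below; since $\Phi$ is increasing, $\Phi(z)\ge\Phi(z_{\min})>0$, which keeps every negative power $\Phi(z)^{-k}$ bounded. Second, each problematic factor $(\beta-u)$ can be rewritten as $(\beta-u)=z\,g(v)$ with $g(v)\le\sigma_{\max}$, so it never appears in isolation: it always enters multiplied by $\varphi(z)$ and therefore reduces to a constant multiple of $z^{j}\varphi(z)^{m}$. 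Because $\sup_{z\ge z_{\min}} \lvert z\rvert^{j}\varphi(z)^{m}<\infty$ — the Gaussian density decays faster than any polynomial as $z\to+\infty$, and the region near $z_{\min}$ is compact — and because $g(v)\in[\sigma_{\min},\sigma_{\max}]$ is bounded away from $0$ and $\infty$, each Hessian entry is bounded by a finite constant that depends only on $B$, $\sigma_{\min}$, $\sigma_{\max}$ and $M_f$. Taking $M_h$ to be a uniform bound on $\norm{\nabla^2 h_\beta}_2$ then makes $M_h$ independent of $u$, $v$ and $\beta$, as required.

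Finally, since the domain $[-1,1]\times[\sigma_{\min}^2,\sigma_{\max}^2]$ is convex, integrating the Hessian bound along the segment joining $(u_1,v_1)$ to $(u_2,v_2)$ gives $\norm{\nabla h_\beta(u_1,v_1)-\nabla h_\beta(u_2,v_2)}_2\le M_h\norm{(u_1,v_1)-(u_2,v_2)}_2$, which is exactly (\ref{equation:hx Lipschitz smooth}). For (\ref{equation:hx Lipschitz smooth approximation}) I would then invoke the exact first-order Taylor expansion with integral remainder along the same segment, expressing $h_\beta(u_2,v_2)-h_\beta(u_1,v_1)$ as the inner product of the displacement with $\nabla h_\beta(u_1,v_1)$ plus $\int_0^1\big(\nabla h_\beta(u_1+t\,\Delta u,v_1+t\,\Delta v)-\nabla h_\beta(u_1,v_1)\big)^{\top}(\Delta u,\Delta v)\,dt$, and bounding the remainder by $\frac{M_h}{2}\norm{(u_2-u_1,v_2-v_1)}_2^2$ using the Lipschitz estimate just established. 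This yields the stated quadratic upper bound and completes the proof.
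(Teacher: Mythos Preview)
Your proposal is correct and follows essentially the same strategy as the paper: bound the Hessian of $h_\beta$ uniformly over the domain and then invoke the descent lemma. The paper's own proof is much terser—it observes that $h_\beta$ is twice continuously differentiable, records that $\Phi,\Phi',\Phi'',f,f',f''$ are all bounded on the relevant ranges, declares it ``easy to verify'' that the second derivatives are uniformly bounded, and then cites Lemma~3.4 of \cite{bubeck2015convex} for (\ref{equation:hx Lipschitz smooth approximation}). Your version is more explicit and in fact slightly stronger: by rewriting every $(\beta-u)$ factor as $z\,g(v)$ and exploiting the Gaussian decay of $z^{j}\varphi(z)^{m}$, you cover the full range $\beta\in[-B,\infty)$ that Theorem~\ref{theorem:h beta linear approximation} actually requires, whereas the lemma as stated (and the paper's compactness-flavored bounds) only treats $\lvert\beta\rvert\le B$.
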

\begin{proof}[Proof of Lemma \ref{lemma:hx is Lipschitz smooth}]
First, it is easy to verify that $h_{\beta}(\cdot,\cdot)$ is twice continuously differentiable on its domain $[-1,1]\times [\sigma_{\min}^{2},\sigma_{\max}^{2}]$ and therefore is Lipschitz smooth, for some finite positive constant $M_h$.
To show that there exists an $M_h$ that is independent of $u,v,\beta$, we need to consider the gradient and Hessian of $h_{\beta}(\cdot,\cdot)$.
Since $h_{\beta}(u,v)$ is a composite function that involves $\Phi(\cdot)$ and $f(\cdot)$, it is straightforward to write down the first and second derivatives of $h_{\beta}(u,v)$ with respect to $u$ and $v$, which depend on $\Phi(\cdot)$, $\Phi'(\cdot)$, $\Phi''(\cdot)$, $f(\cdot)$, $f'(\cdot)$, and $f''(\cdot)$.
Given the facts that for all the $u,v$ and $\beta$ in the domain of interest, we have $\Phi(\frac{\beta-u}{v})\in[\Phi(\frac{-B-1}{\sigma_{\min}^{2}}),1]$, $\Phi{'}(\frac{\beta-u}{v})\in(0,\frac{1}{\sqrt{2\pi}})$, $\abs{\Phi{''}(\frac{\beta-u}{v})}\leq \frac{B+1}{\sigma_{\min}\sqrt{2\pi}}$, and that $f(\cdot), f'(\cdot), f''(\cdot)$ are all bounded, it is easy to verify that such an $M_h$ indeed exists by substituting the above conditions into the first and second derivatives of $h_{\beta}(u,v)$ with respect to $u$ and $v$.
Moreover, by Lemma 3.4 in \cite{bubeck2015convex}, we know that (\ref{equation:hx Lipschitz smooth approximation}) indeed holds.
\end{proof}

\begin{proof}[Proof of Theorem \ref{theorem:h beta linear approximation}]
Define 
\begin{align}
    q_u&:=\sup_{u_0\in(-1,1)}\abs{\frac{\partial h_{\beta}}{\partial u}}\biggr\rvert_{u=u_0},\\
    q_v&:=\sup_{v_0\in(\sigma_{\min}^{2},\sigma_{\max}^{2})}\abs{\frac{\partial h_{\beta}}{\partial v}}\biggr\rvert_{v=v_0}.
\end{align}
By the discussion in the proof of Lemma \ref{lemma:hx is Lipschitz smooth},
we know that $q_u$ and $q_v$ are both positive real numbers.
By substituting $u_1=\theta_{1}^{\top}x$, $u_2=\theta_{2}^{\top}x$, $v_1=f(\phi_{1}^{\top}x)$, and $v_2=f(\phi_{2}^{\top}x)$ into (\ref{equation:hx Lipschitz smooth approximation}), we have
\begin{align}
    &h_{\beta}\big(\theta_{2}^{\top}x,\phi_{2}^{\top}x\big)-h_{\beta}\big(\theta_{1}^{\top}x,\phi_{1}^{\top}x\big)\label{equation:h beta linear approximation on theta and phi 1}\\
    &\hspace{6pt}\leq {\small{\colvec{2}{(\theta_2-\theta_1)^{\top}x}{f(\phi_2^{\top}x)-f(\phi_1^{\top}x)}}}^{\top}\nabla h_{\beta}(\theta_1^{\top}x,f(\phi_1^{\top}x))\label{equation:h beta linear approximation on theta and phi 2}\\
    &\hspace{12pt}+\frac{M_h}{2}\norm{\small{\colvec{2}{(\theta_2-\theta_1)^{\top}x}{f(\phi_2^{\top}x)-f(\phi_1^{\top}x)}}}_{2}^{2}\label{equation:h beta linear approximation on theta and phi 3}
\end{align}
\begin{align}
    &\hspace{6pt}\leq \big(q_u \norm{\theta_2-\theta_1}_{\bm{M}}\cdot\norm{x}_{\bm{M}^{-1}}\\
    &\hspace{12pt}+q_{v}M_{f}\norm{\phi_2-\phi_1}_{\bm{M}}\cdot\norm{x}_{\bm{M}^{-1}}\big)\label{equation:h beta linear approximation on theta and phi 4}\\
    &\hspace{12pt}+\frac{M_h}{2}\big(\norm{\theta_{2}-\theta_{1}}_{\bm{M}}^{2}+M_{f}^{2}\norm{\phi_{2}-\phi_{1}}_{\bm{M}}^{2}\big)\cdot\norm{x}_{\bm{M}^{-1}}\label{equation:h beta linear approximation on theta and phi 5}\\
    &\hspace{6pt}\leq (q_u+M_h) \norm{\theta_2-\theta_1}_{\bm{M}}\cdot\norm{x}_{\bm{M}^{-1}}\label{equation:h beta linear approximation on theta and phi 6}\\
    &\hspace{12pt}+M_{f}(q_{v}+M_{h}M_{f}L)\norm{\phi_2-\phi_1}_{\bm{M}}\cdot\norm{x}_{\bm{M}^{-1}},\label{equation:h beta linear approximation on theta and phi 7}
\end{align}
where (\ref{equation:h beta linear approximation on theta and phi 4})-(\ref{equation:h beta linear approximation on theta and phi 5}) follow from the Cauchy-Schwarz inequality and the fact that $f(\cdot)$ is Lipschitz continuous, and (\ref{equation:h beta linear approximation on theta and phi 6})-(\ref{equation:h beta linear approximation on theta and phi 7}) follow from the facts that $\norm{x}_{2}\leq 1$, $\norm{\theta_2-\theta_1}_{2}\leq 2$, and $\norm{\phi_2-\phi_1}_{2}\leq 2L$.
By letting $C_3=q_u+M_h$ and $C_4=M_f(q_v+M_{h}M_{f}L)$, we conclude (\ref{equation:h beta approx theorem 1})-(\ref{equation:h beta approx theorem 2}) indeed holds with $C_3$ and $C_4$ being independent of $\theta_1,\theta_2,\phi_1,\phi_2$, and $\beta$. 
\end{proof}

\subsection{Proof of Lemma \ref{lemma:Q is an upper confidence bound}}
\label{section:appendix:Q is an upper confidence bound}
\begin{proof}
By Theorem \ref{theorem:h beta linear approximation} and (\ref{equation:xi definition}), we know 
\begin{align}
    &Q^{\textrm{HR}}_{t+1}(x)-\pch{h_{\beta_{t+1}}}(\theta_{*}^{\top}x, \phi_{*}^{\top}x)\label{equation:show ucb 1}\\
    &=\pch{h_{\beta_{t+1}}}(\widehat{\theta_t}^{\top}x, \widehat{\phi_t}^{\top}x)+\xi_{t}(\delta)\norm{x}_{ {\bm{V}}_{t}^{-1}}-\pch{h_{\beta_{t+1}}}(\theta_{*}^{\top}x, \phi_{*}^{\top}x)\label{equation:show ucb 2}\\
    &\leq 2\xi_{t}(\delta)\norm{x}_{ {\bm{V}}_{t}^{-1}}.\label{equation:show ucb 3}
\end{align}
Similarly, by switching the roles of $\theta_{*}^{\top},\phi_{*}^{\top}$ and $\widehat{\theta_t}^{\top},\widehat{\phi_t}^{\top}$ in (\ref{equation:show ucb 2}), we have
\begin{align}
    &Q^{\textrm{HR}}_{t+1}(x)-\pch{h_{\beta_{t+1}}}(\theta_{*}^{\top}x, \phi_{*}^{\top}x)\geq 0.
\end{align}
\end{proof}

\subsection{Proof of Theorem \ref{theorem:regret}}
\label{section:appendix:regret}
\begin{proof}
For each user $t$, let $\pi_{t}^{\textrm{HR}}=\{x_{t,1},x_{t,2},\cdots\}$ denote the action sequence under the HR-UCB policy.
Under HR-UCB, $\widehat{\theta}_{t}$ and $\widehat{\phi}_{t}$ are updated only after the departure of each user.
This fact implies that $x_{t,i}=x_{t,j}$, for all $i,j$.
Therefore, we can use $x_t$ to denote the action chosen by HR-UCB for the user $t$, to simplify notation. 
Let $\overline{R}_{t}^{\textrm{HR}}$ denote the expected lifetime of user $t$ under HR-UCB.
Similar to (\ref{equation:oracle per-user reward}), we have
\begin{align}
        \overline{R}_{t}^{\textrm{HR}}&=\Big({\Phi\Big(\frac{\beta_{t}-\theta_{*}^{\top}{x}_{t}}{\sqrt{f(\phi_{*}^{\top}{x}_{t})}}\Big)}\Big)^{-1}
        =h_{\beta_t}(\theta_{*}^{\top}x_t,\phi_{*}^{\top}x_t).\label{equation:HR-UCB per-user reward}
\end{align}
Recall that $\pi^{\textrm{oracle}}$ and $x^{*}_t$ denote the oracle policy and the context of the action of the oracle policy for user $t$, respectively.
We compute the pseudo regret of HR-UCB as
\begin{align}
    \textrm{Regret}_{T}&=\sum_{t=1}^{T}\overline{R}_{t}^{{*}}-\overline{R}_{t}^{\textrm{HR}}\label{equation:regret appendix 1}\\
    &=\sum_{t=1}^{T}h_{\beta_t}\big(\theta_{*}^{\top}x_{t}^{*},\phi_{*}^{\top}x_{t}^{*}\big)-h_{\beta_t}\big(\theta_{*}^{\top}x_{t},\phi_{*}^{\top}x_{t}\big)\label{equation:regret appendix 2}.
\end{align}
To simplify notation, we use $w_t$ as a shorthand for $h_{\beta_t}\big(\theta_{*}^{\top}x_{t}^{*},\phi_{*}^{\top}x_{t}^{*}\big)-h_{\beta_t}\big(\theta_{*}^{\top}x_{t},\phi_{*}^{\top}x_{t}\big)$.
Given any $\delta>0$, define an event $E_{\delta}$ in which (\ref{equation:theta confidence set}) and (\ref{equation:phi confidence set}) hold under the given $\delta$, for all $t\in\mathbb{N}$.
By Lemma \ref{lemma:theta confidence} and Theorem \ref{theorem:phi confidence}, we know that the event $E_{\delta}$ occurs with probability at least \pch{$1-3\delta$}.
Therefore, with probability at least \pch{$1-3\delta$}, for all $t\in\mathbb{N}$,
\begin{align}
    w_t&\leq Q^{\textrm{HR}}_{t}(x_{t}^{*})-h_{\beta_t}\big(\theta_{*}^{\top}x_{t},\phi_{*}^{\top}x_{t}\big)\label{equation:immediate regret 2}\\
    &\leq Q^{\textrm{HR}}_{t}(x_{t})-h_{\beta_t}\big(\theta_{*}^{\top}x_{t},\phi_{*}^{\top}x_{t}\big)\label{equation:immediate regret 3}\\
    &=h_{\beta_t}\big(\theta_{*}^{\top}x_{t},\phi_{*}^{\top}x_{t}\big)+\pch{\xi_{t-1}(\delta)}\norm{x_t}_{  \pch{{\bm{V}}_{t-1}^{-1}}}\label{equation:immediate regret 4}\\
    &\hspace{24pt}-h_{\beta_t}\big(\theta_{*}^{\top}x_{t},\phi_{*}^{\top}x_{t}\big)\label{equation:immediate regret 5}\\
    &\leq 2\pch{\xi_{t-1}(\delta)}\cdot\norm{x_t}_{\pch{  {{{\bm{V}}}}_{t-1}^{-1}}},\label{equation:immediate regret 6}
\end{align}
where (\ref{equation:immediate regret 2}) and (\ref{equation:immediate regret 4}) follow directly from the definition of the UCB index, (\ref{equation:immediate regret 3}) follows from the design of HR-UCB algorithm, and (\ref{equation:immediate regret 6}) is a direct result under the event $E_\delta$.
Now, we are ready to conclude that with probability at least \pch{$1-3\delta$}, we have
\begin{align}
    \text{Regret}_{T}&=\sum_{t=1}^{T}w_t\leq \sqrt{T\sum_{t=1}^{T}w_{t}^{2}}\label{equation:regret final 1}\\
    &\leq\sqrt{4\xi_{T}^{2}(\delta)T\sum_{t=1}^{T}\min\{\norm{x_t}^{2}_{\pch{  {\bm{V}}_{t-1}^{-1}}},1\}}\label{equation:regret final 2}\\
    &\leq \sqrt{8\xi_{T}^{2}(\delta)T\cdot d\log\Big(\pch{\frac{\mathcal{S}(T)+\lambda d}{\lambda d}}\Big)} \label{equation:regret final 4},
\end{align}
where (\ref{equation:regret final 1}) follows from the Cauchy-Schwarz inequality, (\ref{equation:regret final 2}) follows from the fact that $\xi_{t}(\delta)$ is an increasing function in $t$,
and (\ref{equation:regret final 4}) follows from Lemma 10 and 11 in \cite{abbasi2011improved} \pch{and the fact that ${\bm{V}}_{t}=\lambda \bm{I}_{d}+\bm{X}_{t}^{\top}\bm{X}_{t}=\lambda \bm{I}_{d}+\sum_{i=1}^{t}x_i x_{i}^{\top}$}.
By substituting \pch{$\xi_{T}(\delta)$} into (\ref{equation:regret final 4}) and using the fact that $\mathcal{S}(T)\leq \Gamma(T)$, we know
\begin{equation}
    \text{Regret}_{T}=\mathcal{O}\bigg(\sqrt{T\pch{\log \Gamma(T)}\cdot\Big(\log\big(\Gamma(T)\big)+\log(\frac{1}{\delta})\Big)^{2}}\bigg).
\end{equation}
By choosing $\Gamma(T)=KT$ for some constant $K>0$, we thereby conclude that
\begin{equation}
    \text{Regret}_{T}=\mathcal{O}\bigg(\sqrt{T\log T\cdot\Big(\log T+\log(\frac{1}{\delta})\Big)^{2}}\bigg).
\end{equation}
The proof is complete.
\end{proof}


\end{document}